\DeclareMathOperator*{\argmax}{argmax}
\newcommand{\prob}{\mathop{\text{P}}}     
\newcommand{\E}{\mathop{\mathbb{E}}}
\newcommand{\1}{\mathds{1}}   
\def\BibTeX{{\rm B\kern-.05em{\sc i\kern-.025em b}\kern-.08em
    T\kern-.1667em\lower.7ex\hbox{E}\kern-.125emX}}
\newtheorem{theorem}{Theorem}
\newtheorem{corollary}{Corollary}
\newtheorem{lemma}{Lemma}
\theoremstyle{definition}
\newtheorem{definition}{Definition}
\newtheorem{remark}{Remark}
\newenvironment{hproof}{%
  \proof}{\endproof}
\newcommand{\final}[1]{#1}
\begin{document}

\title{On the Robustness of Age for Learning-Based Wireless Scheduling in Unknown Environments}

% \title{On the Robustness of Age-Based Online Wireless}

% \author{\IEEEauthorblockN{1\textsuperscript{st} Given Name Surname}
% \IEEEauthorblockA{\textit{dept. name of organization (of Aff.)} \\
% \textit{name of organization (of Aff.)}\\
% City, Country \\
% email address or ORCID}
% \and
% \IEEEauthorblockN{2\textsuperscript{nd} Given Name Surname}
% \IEEEauthorblockA{\textit{dept. name of organization (of Aff.)} \\
% \textit{name of organization (of Aff.)}\\
% City, Country \\
% email address or ORCID}
% \and
% \IEEEauthorblockN{3\textsuperscript{rd} Given Name Surname}
% \IEEEauthorblockA{\textit{dept. name of organization (of Aff.)} \\
% \textit{name of organization (of Aff.)}\\
% City, Country \\
% email address or ORCID}
% \and
% \IEEEauthorblockN{4\textsuperscript{th} Given Name Surname}
% \IEEEauthorblockA{\textit{dept. name of organization (of Aff.)} \\
% \textit{name of organization (of Aff.)}\\
% City, Country \\
% email address or ORCID}
% \and
% \IEEEauthorblockN{5\textsuperscript{th} Given Name Surname}
% \IEEEauthorblockA{\textit{dept. name of organization (of Aff.)} \\
% \textit{name of organization (of Aff.)}\\
% City, Country \\
% email address or ORCID}
% \and
% \IEEEauthorblockN{6\textsuperscript{th} Given Name Surname}
% \IEEEauthorblockA{\textit{dept. name of organization (of Aff.)} \\
% \textit{name of organization (of Aff.)}\\
% City, Country \\
% email address or ORCID}
% }
\author{\IEEEauthorblockN{Juaren Steiger\quad\quad Bin Li}
\IEEEauthorblockA{Department of Electrical Engineering, The Pennsylvania State University, University Park, PA, USA
\\Email: \{juaren.steiger, binli\}@psu.edu}
}

\maketitle

\begingroup
  \renewcommand\thefootnote{}
  \footnotetext{This work was supported in part by NSF grant CNS-2152658 and the ARO grant W911NF-24-1-0103.}
\endgroup

% comment out before submission
% \thispagestyle{plain}
% \pagestyle{plain}

\begin{abstract}
The constrained combinatorial multi-armed bandit model has been widely employed to solve problems in wireless networking and related areas, including the problem of wireless scheduling for throughput optimization under unknown channel conditions. Most work in this area uses an algorithm design strategy that combines a bandit learning algorithm with the virtual queue technique to track the throughput constraint violation. These algorithms seek to minimize the virtual queue length in their algorithm design. However, in networks where channel conditions change abruptly, the resulting constraints may become infeasible, leading to unbounded growth in virtual queue lengths.
In this paper, we make the key observation that the dynamics of the head-of-line age, i.e. the age of the oldest packet in the virtual queue, make it more robust when used in algorithm design compared to the virtual queue length. We therefore design a learning-based scheduling policy that uses the head-of-line age in place of the virtual queue length. We show that our policy matches state-of-the-art performance under i.i.d. network conditions. Crucially, we also show that the system remains stable even under abrupt changes in channel conditions and can rapidly recover from periods of constraint infeasibility. 
\end{abstract}

% \begin{IEEEkeywords}
% component, formatting, style, styling, insert.
% \end{IEEEkeywords}

\section{Introduction}

The \textit{combinatorial multi-armed bandit} (CMAB) problem has been widely adopted to model modern engineering problems involving the online optimization of some time-varying quantity with unknown statistics. In recent years, \textit{constrained bandit} models, where the selection of each arm is constrained over time, have been applied to problems in wireless networking  and related areas (e.g. \cite{li2019, steiger2023,huang2024,steiger2024,wu2024, wang2023neural}). In particular, CMAB with \textit{fairness constraints}, where each arm must be selected a minimum fraction of the time, have been utilized to model wireless scheduling with 
% throughput constraints
\final{minimum throughput or scheduling constraints}
(e.g. \cite{li2019,wu2024,wu2023rateadaptation}). \final{For example, in \cite{wu2024},} 
the cumulative reward represents the cumulative network throughput, and the unknown reward statistics represent the unknown wireless channel statistics. The fairness constraints mirror \textit{quality of service} constraints to meet per-link minimum throughput requirements. 

As a concrete example, consider the agricultural remote monitoring system pictured in Fig.~\ref{fig:remote_monitoring} in which a central controller receives information updates from $K$ wireless sensing sources (a.k.a. nodes). In each timeslot $t$, the controller must select a feasible scheduling action $A_t$, which schedules a subset of the $K$ nodes to transmit their information updates. But, the network state $S_t$ at time $t$, which captures the channel conditions of the wireless link between each node $k$ and the controller, is unknown when scheduling. If the transmission from node $k$ to the controller under scheduling action $A_t$ is successful in timeslot $t$, the controller receives $R_k(S_t, A_t) = 1$ as feedback, and $R_k(S_t, A_t) = 0$ otherwise. In the language of CMAB, each node $k$ is a \textit{bandit arm}, each scheduling action $A_t$ is a \textit{combinatorial super-arm}, and $R_k(S_t, A_t)$ is the \textit{reward}, which also depends on the unknown network state $S_t$. The objective of the controller is to maximize the total network throughput $\sum_{t=1}^T \sum_{k=1}^K R_k(S_t, A_t)$. In order to get an accurate picture of the environment that fairly incorporates information from all sensing sources, the controller would also like to guarantee that the \textit{short-term throughput} over a fixed window $W$ exceeds some requirement $\chi_k$ for each node $k$, i.e. $\frac{1}{W}\sum_{\tau = t- W+1}^{t} R_k(S_\tau,A_\tau) \geq \chi_k$.

\begin{figure}
    \centering
    \includegraphics[width=1\columnwidth]{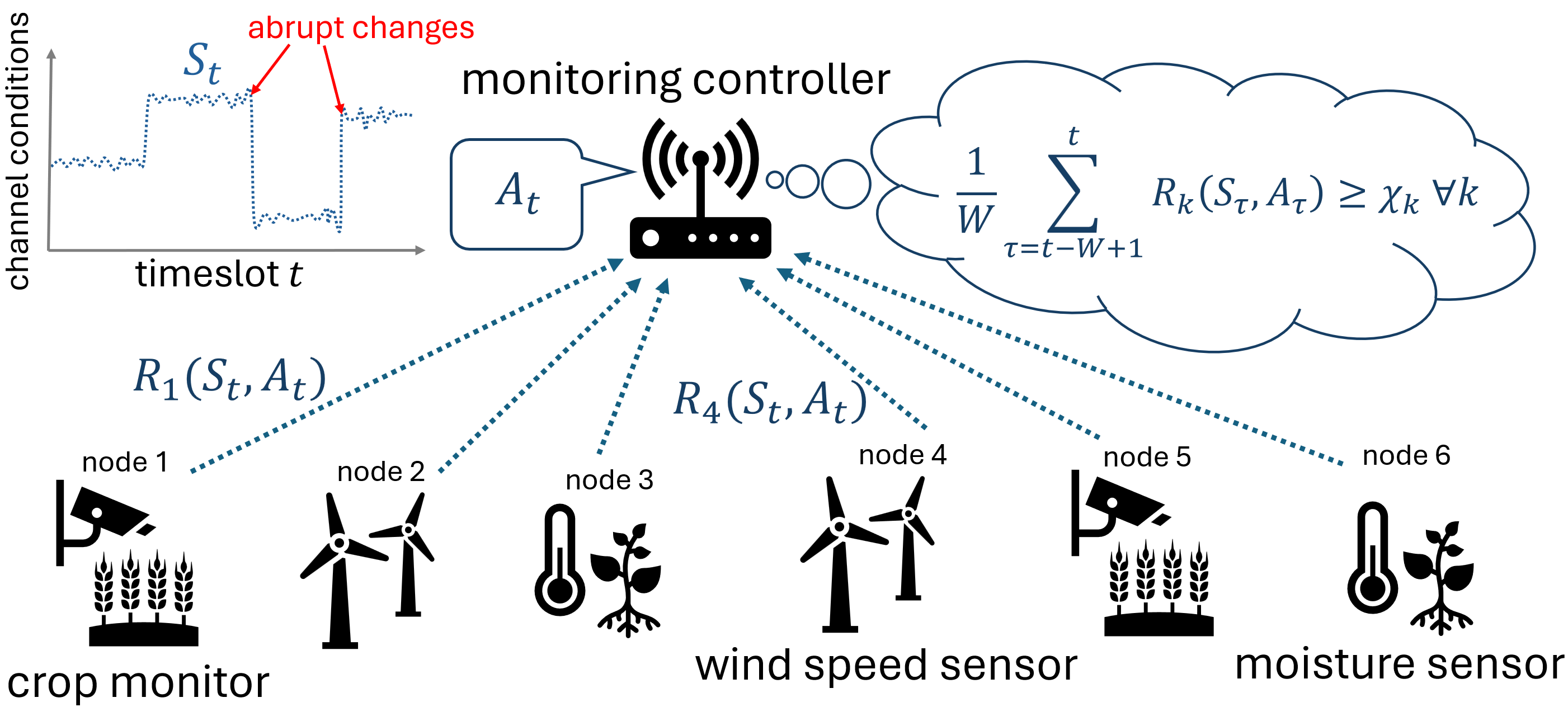}
    \caption{An agricultural remote monitoring system with short-term throughput requirements and abruptly changing wireless channel conditions.}
    \label{fig:remote_monitoring}
\end{figure}

Existing work on constrained CMAB problems primarily uses the \textit{virtual queue} technique from the field of \textit{stochastic network optimization} \cite{neely2010} to track the cumulative constraint violation, where \textit{virtual packets} are added and removed from the queue depending on whether the violation is increased or decreased. To our knowledge, all the prior work using this approach (e.g. \cite{li2019,liu2021, steiger2023,steiger2024,wu2024,wang2023neural,guo2023rectified}) use the \textit{pessimistic-optimistic} algorithm design \cite{liu2021} that combines a bandit learning algorithm (e.g. \textit{Upper Confidence Bound (UCB)}) with the virtual queues. Here, the algorithm selects the action with the maximal \textit{UCB-plus-virtual-queue-length} in each timeslot. State-of-the-art results in this direction show zero constraint violation after some fixed timeslot \cite{liu2021} under i.i.d. system parameters, i.e. $S_t$ in our example.  

However, the network state $S_t$ is often subject to abrupt changes in real wireless networks that violate this i.i.d. assumption. For example, the agricultural remote monitoring system may experience periods of extreme weather, or congestion due to community events where many people visit the farm. As another example, consider military tactical grids \cite{don2020information}, where  connected devices operate in harsh unknown environments and are subject to network resource contention with adversaries and jamming attacks. In such nonstationary and adversarial environments, the throughput constraint of a link may become temporarily infeasible. Then the queue length, whose increment is dominated by the virtual packet arrivals, will blow up and the queue-length-based algorithm will continue trying to schedule that link, leading to the starvation of the other links. 

In this paper, we make the crucial observation that using the \textit{head-of-line age}\footnote{This is also called \textit{head-of-line delay} by some authors (e.g. \cite{neely2010}).}, i.e. the age of the oldest virtual packet in the queue, is robust to abrupt disruptions in the usual network state behavior that lead to temporarily infeasible constraints. This is because, unlike the queue length, the head-of-line age always increments by 1 and has a much larger decrement related to the interarrival time between virtual packets, which gives the other queue more of a chance to be scheduled. 

Head-of-line age-based scheduling policies (e.g., \cite{mckeown2002achieving,andrews2004scheduling,shakkottai2002scheduling,eryilmaz2005stable}) have been widely studied in the wireless scheduling literature for their ability to prioritize timely packet delivery. However, their behavior and robustness under overloaded network conditions (e.g., \cite{georgiadis2006optimal,li2013optimal}) remain insufficiently understood. In parallel, while there is growing interest in incorporating online learning into scheduling design, particularly in unknown environments, the integration of age-based metrics with learning-based approaches has yet to be systematically explored. 

We also note that in recent years, the integration of \textit{Age-of-Information} (AoI) with bandit learning algorithms has been explored; \cite{li2021} was the first to consider this via the \textit{time-since-last-play} of arms. AoI is also an important metric for information update systems, such as the remote monitoring system in Fig.~\ref{fig:remote_monitoring}, as it can be used to measure \textit{data freshness} \cite{sun2019age}. Since the work  \cite{li2021}, others have considered bandit problems with AoI (e.g. \cite{huang2024informationgathering}). While AoI does not suffer from the aforementioned problem with the virtual queue length in abruptly changing environments, it cannot provide guarantees on short-term throughput like the virtual queue technique can. Inspired by wireless scheduling algorithms that combine both the virtual queue length and AoI (e.g. \cite{lu2018agebased}), some recent work had applied this to the constrained CMAB setting (e.g. \cite{wu2024,wang2024nextword}). However, as we show in this paper, these algorithms still suffer from the use of the virtual queue length in abruptly changing environments. It can also be shown that the head-of-line age can provide guarantees for AoI-type metrics, which adds more motivation to study it in an online learning context. 

With all that in mind, we outline our main contributions and the organization of this paper as follows:

(i) In Section~\ref{section:system_model}, we describe the wireless networking system model as a constrained CMAB problem. A minor contribution here is that we consider the throughput constraints as a moving average over a fixed window $W$, whereas prior work (e.g. \cite{huang2024,steiger2024,wu2024}) considers the cumulative moving average up to the current timeslot $t$. 

(ii) In Section~\ref{section:age}, we describe the virtual queue and head-of-line age dynamics, and show that the throughput constraint violation can be bounded in terms of the head-of-line age (Lemma~\ref{lemma:safety_rate_lower_bound} and Theorem~\ref{theorem:age_is_all_you_need}) in an i.i.d. environment. This motivates the design of an \textit{age-based bandit learning policy} (Algorithm~\ref{alg:age_based}), which combines the UCB algorithm and associated tuning parameter $\eta$ with the head-of-line age and associated tuning parameter $\varepsilon$. To our knowledge, this is both the first time the head-of-line age has been considered for virtual packets 
\final{generated according to some constraint violation instead of real packets (e.g. as in \cite{joo2019heavytraffic})}
% in a virtual queue instead of a real queue
, and also the first time the head-of-line age has been incorporated into an online learning algorithm. 

(iii) In Section~\ref{section:performance}, we show the empirical performance of our algorithm against three state-of-the-art policies from the literature and prove its theoretical performance bounds in an i.i.d. environment that matches state-of-the-art (e.g. \cite{wu2024}). Namely, we show that the algorithm achieves zero constraint violation if the window size $W = O(\eta/ \varepsilon)$ (Theorem~\ref{theorem:safety}), and achieves the regret bound $O(\varepsilon T + T/\eta + \sqrt{T\log T})$ (Theorem~\ref{theorem:regret}). We note that these results rely on a more complicated Lyapunov drift lemma (Lemma~\ref{lemma:drift_lemma}) than all the prior work using queue lengths can use (e.g. Lemma 11 in \cite{liu2021}) since the drift of the head-of-line age is not absolutely bounded by a constant. We also note in Remark~\ref{remark:tslr_guarantee} that the age-based policy can give an AoI (a.k.a. \textit{time-since-last-reward} (TSLR) in the bandit context) guarantee of $O(\eta)$. 

(iv) In Section~\ref{section:robustness_of_age}, we empirically investigate what happens when the i.i.d. assumption is violated, and demonstrate our key observation that the head-of-line age is more robust under periods of temporary constraint infeasibility compared to queue-length-based approaches. Another interesting result here is that the age-based policy maintains stable TSLR under temporarily infeasible constraints, even compared to algorithms that attempt to optimize this metric (e.g. \cite{wu2024}). We also include experiments using a real data trace in this section. Finally, in Section~\ref{section:conclusion}, we conclude the paper. 

\textbf{Notation}: To simplify notation, we assume $\sup \varnothing = \inf \varnothing = 0$ and $\sum_{i=a}^b x_i = 0$, $\bigcup_{i=a}^b \mathcal{X}_i = \varnothing$, etc. for $a > b$. We consider the Geometric distribution with support $\{1,2,\ldots\}$. $\Delta x_t = x_{t+1} - x_t$ is the forward difference operator. $[n] \triangleq \{ 1,2,\ldots , n\}$ for integer $n$. $f(x) = O(g(x))$ means $\limsup_{x \to \infty} \frac{f(x)}{g(x)} < \infty$ for positive functions $f$ and $g$. $x\land y \triangleq \min\{ x,y \}$ for two real numbers $x$ and $y$.
% The notation $\widetilde{O}(\,\cdot\,)$ is $O(\,\cdot\,)$ notation that hides logarithmic factors. $[\,\cdot\,]^+ \triangleq \max\{\,\cdot\, , 0 \}$. 

\final{
\textbf{Proofs}: All omitted proofs of theoretical results can be found in the Appendix.
% technical report \cite{technicalreport}.
}

\section{System Model}
\label{section:system_model}

We consider a central controller that receives information updates wirelessly  from $K$ nodes (e.g. the remote monitoring system in  Fig.~\ref{fig:remote_monitoring}). Time is slotted up to a time horizon $T$ and in each timeslot $t\in [T]$, the state of the network is given by $S_t \in \mathcal{S}$ where $\mathcal{S}$ is the set of possible network states. Here, the network state captures the channel conditions of all the wireless links between the $K$ nodes and the controller. We use $X_k(s) = 1$ to indicate that the controller can successfully receive a packet from node $k$ when the network is in state $s \in \mathcal{S}$, and $X_k(s) = 0$ otherwise.\footnote{The analysis in this paper can be easily extended to the \textit{``bursty Bernoulli"} case where either 0 or $C_k$ packets are transmitted from node $k$ in timeslot $t$.} We assume the network state $S_t$ is not observable by probing, and is therefore treated as an unknown random variable. In each timeslot $t$, the controller selects a scheduling action $A_t \in \mathcal{A}$ from the set of feasible scheduling actions $\mathcal{A}$, which captures any interference constraints between nodes that may be known to the controller a priori. We use $I_k(a) = 1$ to indicate that node $k$ is scheduled under a scheduling action $a \in \mathcal{A}$ and $I_k(a) = 0$ otherwise.
\final{
Then $R_k(s,a) \triangleq X_k(s)I_k(a)$ indicates that the controller receives an update packet from node $k$ under scheduling action $a$ when the network is in state $s$.
}
% Then $R_k(s,a) \triangleq X_k(s)I_k(a)$ indicates the controller receives an update packet from node $k$ under scheduling action $a$ when the network is in state $s$.
By the end of timeslot $t$, the controller therefore receives the feedback $(R_k(S_t,A_t))_{k=1}^K$. Then the history of observations available to the player at the beginning of timeslot $t$ is $\mathcal{H}_t \triangleq \left(A_{\tau}, (R_{k}(S_\tau, A_\tau))_{k=1}^K \right)_{\tau=1}^{t-1}$. Let $A_t^\pi \in \mathcal{A}$ denote the action chosen in timeslot $t$ under policy $\pi$. We consider randomized policies, where a policy $\pi$ is a collection of probability mass functions over $\mathcal{A}$, i.e.  $\pi \triangleq {\{ \pi_{t} : \mathcal{A} \to [0,1] \}_{t=1}^T}$. Then ${A_t^\pi \sim \pi_{t}}$ in each timeslot $t$. If the policy under consideration is obvious, we omit the superscript for brevity. Next, we define the class of policies which can be employed by a non-omniscient controller, called \textit{causal policies}.
\begin{definition}[causal policy]
\label{definition:causal_policy}
A policy $\pi$ is called \textit{causal} if for each timeslot $t$, the action $A_t^\pi \sim {\pi_{t}}$ is independent from $S_\tau$ for all $\tau \geq t$. 
\end{definition}
According to Definition~\ref{definition:causal_policy}, a policy that depends only on the history $\mathcal{H}_t$ and other random variables generated independently in each timeslot for the operation of the algorithm (\final{e.g., the virtual packet arrivals to be introduced later}) is causal. 

In the language of combinatorial multi-armed bandits (CMAB), each node $k$ along with its wireless link to the controller is a \textit{bandit arm}, $A_t$ is an \textit{action}, and $R_k(S_t,A_t)$ is the reward received from arm $k$ in timeslot $t$ when action $A_t$ is played. Here, we assume the network state $S_t$ is i.i.d. over timeslots, although we will violate this assumption in Section~\ref{section:robustness_of_age}. Therefore for each node $k$, $X_k(S_t)$ has a common mean $\overline{x}_k$, i.e. the \textit{channel rate} $\overline{x}_k \triangleq \E\mleft[ X_k(S_t) \mright]$ for all timeslots $t$. The channel rates $(\overline{x}_k)_{k=1}^K$ are also unknown a priori due to the unknown network state dynamics. We are interested in finding a causal policy $\pi$ that maximizes the cumulative expected reward, i.e. the cumulative expected system throughput $\sum_{t=1}^T \sum_{k=1}^K \E\mleft[ R_k(S_t,A_t^\pi) \mright]$.

We are also interested in guaranteeing that the \textit{short-term throughput} for each node $k$ exceeds a given requirement $\chi_k \in [\chi_{\min},1]$, where $\chi_{\min} > 0$. This is important in remote monitoring systems, for example, to get an accurate real-time picture of the environment that fairly incorporates information from all sensing sources. The short-term throughput over a fixed window $W$ for node $k$ under a policy $\pi$ is given by
\begin{equation}
\overline{R}_{k,t}^\pi[W] \triangleq  \frac{1}{W}\sum_{\tau = t- W+1}^{t} R_k(S_\tau,A_\tau^\pi).
\end{equation}
To simplify the analysis, we also assume $T$ is a multiple of $W$, although our results can be extended to the case when this condition is removed.

Our goal is then to learn a causal policy $\pi$ that solves the following constrained online learning problem:
\begin{equation}
\label{eq:main_problem}
\begin{aligned}
\max_{\text{causal }\pi } \quad &\sum_{k=1}^K \sum_{t=1}^T \E\mleft[R_{k}(S_t, A_t^\pi)\mright] \\
\text{subject to} \quad &\E\mleft[\overline{R}_{k,t}^\pi[W]\mright] \geq \chi_k \quad  \forall \,k \in [K] ,\,t \geq W  \\
\end{aligned}
\end{equation}
under a priori unknown channel rates $(\overline{x}_{k})_{k=1}^K$.

Let $\pi^*$ denote the causal policy that optimizes the problem \eqref{eq:main_problem}. Next, we define the notion of a  \textit{static policy} and show that there exists a static policy $\sigma^*$ that matches the performance of $\pi^*$ in problem \eqref{eq:main_problem}. We therefore define regret according to this static policy. 

\begin{definition}[static policy]
A causal policy $\sigma$ is a \textit{static} policy if ${\sigma_{t}(a)} = \sigma_{1}(a)$ for all timeslots $t \in [T]$ and actions $a \in \mathcal{A}$. For static policies, we omit subscript $t$ and simply write $\sigma(a)$.
\end{definition}

% An example of a static policy is the policy $\sigma^\gamma$ 
% resulting from Slater's condition 
% \eqref{eq:requirement_selection_criteria}, which we call the \textit{$\gamma$-tight static policy}. To simplify notation, we denote its played action by $A_t^\gamma$, i.e. $A_t^\gamma \sim \sigma^\gamma$. For each arm $k$ and timeslot $t$, according to \eqref{eq:requirement_selection_criteria}, the $\gamma$-tight static policy achieves 
% \begin{equation}
% \label{eq:gamma_tight_static_policy}
% \begin{aligned}
% \E\mleft[R_k(S_t,A_t^\gamma) \mright] 
% &= \textstyle\sum_{a \in \mathcal{A}} \sigma^\gamma(a)\,  \overline{x}_k \,I_k(a)  \geq \chi_k + \gamma.
% \end{aligned}
% \end{equation}

Let $\sigma^*$ denote the \textit{optimal static policy} that maximizes the objective in problem \eqref{eq:main_problem}, and which is defined by $\sigma^*(a) \triangleq \frac1T \sum_{t=1}^T \pi^*_{t}(a)$ for each  action $a$. To simplify notation, we denote the action played under the optimal static policy by $A_t^*$, i.e. $A_t^* \sim \sigma^*$. Part (i) in the following lemma, which follows standard analysis in stochastic network optimization \cite{neely2010}, guarantees the existence of $\sigma^*$:

\begin{lemma}
\label{lemma:existence_of_static_policy}
For any causal policy $\pi$ that is a feasible solution to \eqref{eq:main_problem}, there is a corresponding static policy $\sigma$ defined by $\sigma(a) = \frac1T \sum_{t=1}^T \pi_{t}(a)$ for all actions ${a \in \mathcal{A}}$ that achieves
\begin{equation}
\text{(i)}\quad T \sum_{k=1}^K \E\mleft[ R_{k}(S_\tau,A_\tau^\sigma) \mright]
= \sum_{t=1}^T \sum_{k=1}^K \E\mleft[ R_{k}(S_t,A_t^\pi) \mright]
\end{equation}
\final{for all timeslots $\tau$}, and (ii) $\E\mleft[  R_{k}(S_\tau,A_\tau^\sigma) \mright] \geq \chi_k$ for all $k \in [K]$ and timeslots $\tau$.    
\end{lemma}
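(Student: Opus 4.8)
The plan is to reduce both policies to a single closed form for the expected per-slot reward and then manipulate sums. For any causal policy $\rho$ with marginal action distribution $\rho_\tau(a) \triangleq \prob(A_\tau^\rho = a)$ (this is the sense in which the paper writes $A_\tau^\rho \sim \rho_\tau$), causality makes $A_\tau^\rho$ independent of $S_\tau$. Conditioning on the action and using $R_k(s,a) = X_k(s)I_k(a)$ together with the i.i.d. assumption $\E[X_k(S_\tau)] = \overline{x}_k$ then gives the factorization $\E[R_k(S_\tau, A_\tau^\rho)] = \overline{x}_k \sum_{a \in \mathcal{A}} \rho_\tau(a) I_k(a)$. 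Applied to the constructed $\sigma$ (which is causal since it ignores all state information, and for which $\sigma_\tau = \sigma$ for every $\tau$), this shows that $\E[R_k(S_\tau, A_\tau^\sigma)] = \overline{x}_k \sum_{a} \sigma(a) I_k(a)$ is constant in $\tau$; that constancy is what lets both claims hold ``for all $\tau$.''

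For part (i), I would apply the factorization to $\pi$, sum over $t$ and $k$, exchange the order of summation, and substitute the definition $\sum_{t=1}^T \pi_t(a) = T\sigma(a)$:
\[
\sum_{t=1}^T \sum_{k=1}^K \E[R_k(S_t, A_t^\pi)] = \sum_{k=1}^K \overline{x}_k \sum_{a} I_k(a) \sum_{t=1}^T \pi_t(a) = T \sum_{k=1}^K \overline{x}_k \sum_{a} \sigma(a) I_k(a),
\]
and the right-hand side equals $T \sum_{k=1}^K \E[R_k(S_\tau, A_\tau^\sigma)]$ by the factorization for $\sigma$, which is the claimed identity.

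For part (ii), I would invoke the feasibility of $\pi$, and here the only genuine wrinkle appears: the constraint is a sliding-window average over $W$ rather than a cumulative average up to $t$. To aggregate it I would use feasibility only at the slots $t = W, 2W, \ldots, T$ (each satisfies $t \geq W$, and they exist because $T$ is a multiple of $W$); the associated windows $\{(j-1)W+1,\ldots,jW\}$ are disjoint and cover $[T]$. Each constraint reads $\frac{1}{W}\sum_{\tau=(j-1)W+1}^{jW}\E[R_k(S_\tau, A_\tau^\pi)] \geq \chi_k$, and summing over $j = 1,\ldots,T/W$ collapses the left sides to $\frac{1}{W}\sum_{\tau=1}^T \E[R_k(S_\tau, A_\tau^\pi)] \geq \frac{T}{W}\chi_k$, i.e. $\sum_{\tau=1}^T \E[R_k(S_\tau, A_\tau^\pi)] \geq T\chi_k$. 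By the factorization and the definition of $\sigma$, the left-hand side equals $T\,\E[R_k(S_\tau, A_\tau^\sigma)]$, so dividing by $T$ yields $\E[R_k(S_\tau, A_\tau^\sigma)] \geq \chi_k$ for every $k$ and every $\tau$.

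I expect the main obstacle to be conceptual rather than computational. First, one must recognize that causality (the independence $A_\tau^\rho \perp S_\tau$), and not merely the i.i.d. assumption, is what licenses the reward factorization and the passage from an arbitrary history-dependent $\pi$ to its time-averaged marginal $\sigma$. Second, one must handle the moving-window constraint: choosing the disjoint windows anchored at multiples of $W$ (using that $W$ divides $T$) is precisely the step that reduces the sliding-window feasibility condition to the full-horizon average underlying the classical stochastic-network-optimization argument \cite{neely2010}.
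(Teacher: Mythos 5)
Your proposal is correct and follows essentially the same route as the paper's proof: both use causality to factor $\E\mleft[R_k(S_\tau,A_\tau)\mright]$ into $\overline{x}_k\,\E\mleft[I_k(A_\tau)\mright]$, establish part (i) by exchanging sums and substituting $\sum_{t=1}^T \pi_t(a) = T\sigma(a)$, and establish part (ii) by summing the feasibility constraint over the disjoint windows anchored at $t = W, 2W, \ldots, T$ (using that $W$ divides $T$). Your only stylistic departure is packaging the factorization as a standalone identity for arbitrary causal policies before applying it, which the paper instead carries out inline.
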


\begin{proof}
Fix arm $k$ and timeslot $\tau$. We have 
\begin{equation*}
\begin{aligned}
\E\mleft[ I_k(A_\tau^\sigma) \mright]
&= \textstyle \sum_{a \in \mathcal{A}} I_k(a) \,\sigma(a) = \sum_{a \in \mathcal{A}} I_k(a) \frac1T \sum_{t=1}^T \pi_{t}(a) \\
&\hspace{-0.2cm}= \textstyle \frac1T \sum_{t=1}^T  \sum_{a \in \mathcal{A}} I_k(a) \pi_{t}(a) = \frac1T \sum_{t=1}^T \E\mleft[ I_k(A_t^\pi) \mright].    
\end{aligned}
\end{equation*}
Since the static policy $\sigma$ is also a causal policy, $S_\tau$ and $A_\tau^\sigma$ are independent. Then $\E\mleft[ R_{k}(S_\tau,A_\tau^\sigma) \mright] = \E\mleft[X_k(S_\tau)\mright] \E\mleft[I_k(A_\tau^\sigma)\mright]$ and from the previous manipulations, we have
\begin{equation*}
\hspace{-0.2cm}
\begin{aligned}
&\textstyle\E\mleft[ R_{k}(S_\tau,A_\tau^\sigma) \mright] = \E\mleft[X_k(S_\tau)\mright] \frac{1}{T}\sum_{t=1}^T \E\mleft[ I_k(A_t^\pi) \mright] \\
&\textstyle\overset{(a)}{=} \frac{1}{T}\sum_{t=1}^T \E\mleft[X_k(S_t)\mright] \E\mleft[ I_k(A_t^\pi) \mright] 
\overset{(b)}{=} \frac{1}{T}\sum_{t=1}^T \E\mleft[R_k(S_t,A_t^\pi)\mright]  \\
\end{aligned}
\end{equation*}
where $(a)$ is because $S_\tau$ is i.i.d. over $\tau$ and $(b)$ is because $\pi$ is a causal policy and therefore each $S_t$ and $A_t^\pi$ are independent. Summing over $k\in [K]$ and multiplying by $T$ gives part (i).

For part (ii), fix arm $k$ and timeslot $\tau$. From the above manipulations, it suffices to show that $\E\mleft[\overline{R}_{k,t}^\pi[W]\mright] \geq \chi_k$ for all $t \geq W$ implies that $\frac{1}{T}\sum_{t=1}^T\E\mleft[ R_{k}(S_t,A_t^\pi) \mright] \geq \chi_k$. Recall that it was assumed $T$ is a multiple of $W$. Then 
\begin{equation*}
\begin{aligned}
&\textstyle\frac{1}{T}\sum_{t=1}^T\E\mleft[ R_{k}(S_t,A_t^\pi) \mright] \\
&= \textstyle\frac{W}{T} \sum_{j=1}^{T/W}\frac{1}{W} \sum_{t = (j-1)W + 1}^{jW}\E\mleft[ R_{k}(S_t,A_t^\pi) \mright] \\
&= \textstyle\frac{W}{T} \sum_{j=1}^{T/W} \E\mleft[ \overline{R}_{k,jW}^\pi[W] \mright] \geq \frac{W}{T} \sum_{j=1}^{T/W} \chi_k = \chi_k,
\end{aligned}
\end{equation*}
and the result in part (ii) follows.
\end{proof}

In light of Lemma~\ref{lemma:existence_of_static_policy}, we define the \textit{regret} of a policy $\pi$ by comparing it to the performance of $\sigma^*$ up to time $T$:
\begin{equation}
\mathrm{Reg}^\pi_T \triangleq \sum_{k=1}^K \sum_{t=1}^T \E\mleft[ R_{k}(S_t, A_t^{*}) - R_{k}(S_t, A_t^\pi) \mright].
\end{equation}

\final{
Finally, we make the common assumption of \textit{Slater's condition}\footnote{This assumption may potentially be removed using techniques from \cite{guo2022rectifiedpessimisticoptimisticlearningstochastic}.}, where the throughput constraint is satisfied according to a \textit{tightness} parameter $\gamma \in (0,1]$ by a ``$\gamma$-tight'' static policy $\sigma^\gamma$. Let $A_t^\gamma \sim \sigma^\gamma$ denote the action under the $\gamma$-tight static policy in each timeslot $t$, which is assumed to satisfy
\begin{equation}
\label{eq:gamma_tight_static_policy}
\begin{aligned}
\overline{x}_k\E\mleft[I_k(A_t^\gamma) \mright] 
% &= \underline{x}_k \,\sum_{a \in \mathcal{A}} \sigma^\gamma(a)\, I_k(a)  
\geq \chi_k + \gamma \quad\forall k \in [K].
\end{aligned}
\end{equation}

}

\section{Age-Based Bandit Learning Algorithm Design}
\label{section:age}

For each arm $k$, consider a virtual queue that stores \textit{``delivery requests"} that must be addressed by choosing actions that result in a real packet delivery for the node (arm) $k$. Here, a delivery request is a virtual packet in the virtual queue. At the beginning of each timeslot $t$, a delivery request is generated with probability $\chi_k + \varepsilon$, where $0 < \varepsilon \leq \gamma / 2$ is a tuning parameter, and inserted at the tail of the queue. At the end of the timeslot, a delivery request is removed from the head of the queue if the selected action results in a delivery for the arm, i.e. $R_k(S_t,A_t) = 1$. In this section, we show that controlling the age of the oldest delivery request in the queue, which is also the head-of-line delivery request under the first-come-first-served queueing discipline, controls the throughput for the corresponding arm. After detailing the operation of the virtual queue in Section~\ref{section:queue_dynamics}, we first show that this is true for any policy in Section~\ref{section:safety_for_any_policy}. Then, in Section~\ref{section:safety_for_age_based_policies}, we define what it means for a policy to be \textit{age-based}, and show in Theorem~\ref{theorem:age_is_all_you_need} that for any age-based policy, the throughput constraint violation $\chi_k - \E\mleft[\overline{R}_{k,t}^\pi[W]\mright]$ is bounded by a function involving the expected head-of-line age, the window size $W$, and the tuning parameter $\varepsilon$. This result motivates the design of an age-based bandit learning policy, which we introduce in Section~\ref{section:age_based_policy}.

\subsection{The Queue of delivery requests and the Head-of-Line Age}
\label{section:queue_dynamics}
The pair $(k,\ell)$ signifies the $\ell$-th delivery request generated for arm $k$ and we use $\uptau_k(\ell)$ to denote the timeslot in which it was generated. For ease of exposition, we also set $\uptau_k(0) \triangleq 0$. Then the interarrival time between two consecutive delivery requests is geometrically distributed, i.e. for all $\ell$ we have
\begin{equation}
\label{eq:interarrival}
\Delta \uptau_k(\ell) \triangleq \uptau_k(\ell+1) - \uptau_k(\ell) \sim \text{Geometric}(\chi_k + \varepsilon).
\end{equation}
Similarly, let $d_k(\ell)$ denote the timeslot in which delivery request $(k,\ell)$ departs the queue. We denote the set of delivery requests in the queue for arm $k$ at the beginning of timeslot $t$ (before the delivery request for timeslot $t$ is generated) by 
\begin{equation}
\label{eq:queue_def}
\begin{aligned}
\mathcal{Q}_{k,t} 
&\triangleq 
% \left\{ \ell : \uptau_k(\ell) < t \leq d_k(\ell) \right\} =
\underbrace{\left\{ \ell : \uptau_k(\ell) < t  \right\}}_{\text{arrivals before }t} \setminus \underbrace{\left\{ \ell : d_k(\ell) < t  \right\}}_{\text{departures before }t}.
\end{aligned}
\end{equation}

\begin{figure}
    \centering
    \includegraphics[width=1\columnwidth]{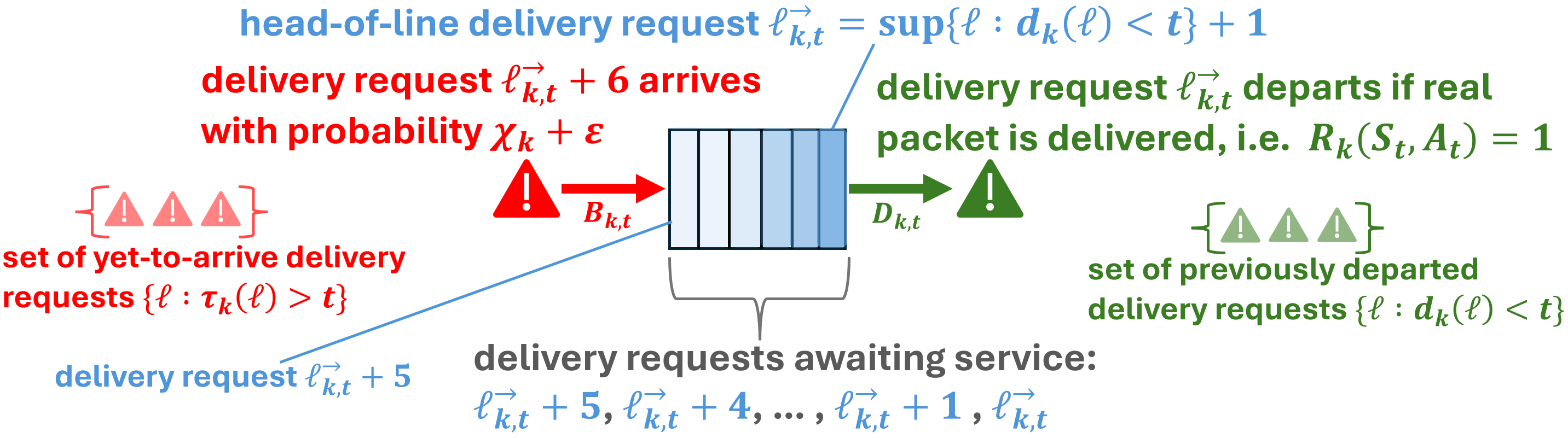}
    \caption{The virtual queue $\mathcal{Q}_{k,t}$ of delivery requests for arm $k$ in timeslot $t$.}
    \label{fig:queue}
\end{figure}

Each queue follows the \textit{first-come-first-served} queueing discipline, meaning that for $\ell' < \ell$, delivery request $(k,\ell')$ is served and departs the virtual queue for arm $k$ before delivery request $(k,\ell)$. Then since $\{ \ell : d_k(\ell) < t \}$ denotes the set of delivery requests that have already departed the queue by timeslot $t$, under first-come-first-served queueing, $\sup\{ \ell : d_k(\ell) < t \}$ is the last departed delivery request. It follows that the \textit{next-to-depart} delivery request is given by \footnote{We use the arrow $\to$ notation in the superscript, i.e. $(\,\cdot\,)^\to$, to convey that a quantity is in some way related to the next-to-depart virtual packet.}
\begin{equation}
\label{eq:next_to_depart}
\ell^\to_{k,t} \triangleq \underbrace{\sup\{ \ell : d_k(\ell) < t \}}_{\text{last departed delivery request}} + \,1.   
\end{equation}
If $\mathcal{Q}_{k,t} = \varnothing$, then $\ell^\to_{k,t}$ will be the next-to-arrive delivery request, or if $\mathcal{Q}_{k,t} \neq \varnothing$, then $\ell^\to_{k,t}$ will be the head-of-line delivery request. Note that $\uptau_{k}(\ell_{k,t}^\to) \leq t$ means that the next-to-depart delivery request was either already in the queue by the beginning of timeslot $t$ or it arrives during timeslot $t$. Then a departure can only occur in timeslot $t$ if $\uptau_{k}(\ell_{k,t}^\to) \leq t$ and we indicate that a departure does occur, i.e. $\ell_{k,t}^\to$ departs, by
\begin{equation}
\label{eq:queue_departure}
D_{k,t} \triangleq |\{ \ell : d_k(\ell) = t \}| = R_k(S_t,A_t)\,\1\{ \uptau_{k}(\ell_{k,t}^\to) \leq t \}.
\end{equation}
We also indicate that a delivery request arrives to the queue in timeslot $t$ using the notation
\begin{equation}
\label{eq:queue_arrival}
B_{k,t} \triangleq |\{ \ell : \uptau_k(\ell) = t \}|  \sim \text{Bernoulli}(\chi_k + \varepsilon).  
\end{equation}
The dynamics of the queue of delivery requests are summarized in Fig.~\ref{fig:queue}. The age of delivery request $(k,\ell)$ in timeslot $t$ is given by
\begin{equation}
\label{eq:age_def}
Z_{k,t}(\ell) \triangleq t - \uptau_k(\ell).
\end{equation}
When the age is positive, it represents the number of timeslots since the packet was generated. However, note that we also allow the age to take negative values, which represents the number of timeslots until the packet will be generated. While this will prove theoretically useful, in reality, the player can only observe $[Z_{k,t}(\ell)]^+$ in timeslot $t$. The age of the head-of-line delivery request, called the \textit{head-of-line age}, is
\begin{equation}
\label{eq:head_of_line_age}
Z_{k,t}^\to \triangleq [Z_{k,t}(\ell^\to_{k,t})]^+.
\end{equation}
The following lemma gives the dynamics of the head-of-line age, 
which is similar to observations in prior work (e.g. \cite{neely2013delaybased,joo2019heavytraffic}), and 
which will be useful in the subsequent results. 
\begin{lemma}
\label{lemma:head_of_line_age}
\final{For any arm $k$,} if the next-to-depart delivery request \final{in timeslot $t$} arrives  by $t$, i.e. $\uptau_k(\ell_{k,t}^\to) \leq t$, then  the drift of the head-of-line age is
\begin{equation}
\Delta Z_{k,t}^\to = 1 - D_{k,t} \min\left\{ t + 1 - \uptau_k(\ell_{k,t}^\to), \Delta \uptau_k(\ell_{k,t}^\to) \right\}.
\end{equation}
Otherwise, if  $\uptau_k(\ell_{k,t}^\to) > t$, then $\Delta Z_{k,t}^\to = 0$. 
\end{lemma}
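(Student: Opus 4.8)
The plan is to compute the drift straight from the definition, writing $\Delta Z_{k,t}^\to = Z_{k,t+1}^\to - Z_{k,t}^\to = [t+1 - \uptau_k(\ell^\to_{k,t+1})]^+ - [t - \uptau_k(\ell^\to_{k,t})]^+$, so that the entire argument reduces to tracking how the next-to-depart index evolves from $\ell^\to_{k,t}$ to $\ell^\to_{k,t+1}$. The key bookkeeping fact I would establish first, from \eqref{eq:next_to_depart} and the first-come-first-served discipline, is the unifying identity $\ell^\to_{k,t+1} = \ell^\to_{k,t} + D_{k,t}$. Indeed, $\ell^\to_{k,t+1} = \sup\{ \ell : d_k(\ell) \leq t \} + 1$, and since by \eqref{eq:queue_departure} at most the head-of-line request can depart in a single slot, the set $\{ \ell : d_k(\ell) = t \}$ is empty exactly when $D_{k,t} = 0$ (leaving the supremum unchanged) and equals $\{ \ell^\to_{k,t} \}$ when $D_{k,t} = 1$ (advancing the supremum by one, as the last request to depart before $t$ was $\ell^\to_{k,t} - 1$).

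I would then dispose of the degenerate case $\uptau_k(\ell^\to_{k,t}) > t$ immediately: here $D_{k,t} = 0$ by \eqref{eq:queue_departure}, so $\ell^\to_{k,t+1} = \ell^\to_{k,t}$, and both $t - \uptau_k(\ell^\to_{k,t})$ and $t + 1 - \uptau_k(\ell^\to_{k,t})$ are nonpositive, forcing $Z_{k,t}^\to = Z_{k,t+1}^\to = 0$ and hence $\Delta Z_{k,t}^\to = 0$, as claimed.

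For the main case $\uptau_k(\ell^\to_{k,t}) \leq t$ we have $Z_{k,t}^\to = t - \uptau_k(\ell^\to_{k,t})$. When $D_{k,t} = 0$ the index is unchanged and the age simply increments, giving $\Delta Z_{k,t}^\to = 1$, which matches the formula since the $D_{k,t}$-term vanishes. When $D_{k,t} = 1$, the identity yields $\ell^\to_{k,t+1} = \ell^\to_{k,t} + 1$, so $\uptau_k(\ell^\to_{k,t+1}) = \uptau_k(\ell^\to_{k,t}) + \Delta\uptau_k(\ell^\to_{k,t})$ by \eqref{eq:interarrival}. Writing $c = t - \uptau_k(\ell^\to_{k,t}) \geq 0$ and $b = \Delta\uptau_k(\ell^\to_{k,t}) \geq 1$, the drift reduces to $[c + 1 - b]^+ - c$. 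The only real subtlety I anticipate is the positive part: because the newly promoted request may not have arrived by slot $t+1$, its age can be negative and must be clipped to zero, which is precisely the origin of the minimum in the statement. I would resolve this with the elementary identity $[x - y]^+ = x - \min\{x, y\}$ (valid for all reals), applied with $x = c + 1$ and $y = b$, turning the expression into $(c + 1) - \min\{c + 1, b\} - c = 1 - \min\{c + 1, b\}$. Substituting back $c + 1 = t + 1 - \uptau_k(\ell^\to_{k,t})$ and $b = \Delta\uptau_k(\ell^\to_{k,t})$, and using $D_{k,t} = 1$ in this case, gives exactly $\Delta Z_{k,t}^\to = 1 - D_{k,t}\min\{ t + 1 - \uptau_k(\ell^\to_{k,t}), \Delta\uptau_k(\ell^\to_{k,t}) \}$, completing the case analysis.
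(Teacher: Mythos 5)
Your proof is correct and follows essentially the same route as the paper's: a case analysis on $D_{k,t}$ and on whether the next-to-depart request has arrived by $t$, driven by the index-evolution fact $\ell^\to_{k,t+1} = \ell^\to_{k,t} + D_{k,t}$. The differences are cosmetic — you work directly with $Z^\to_{k,t} = [t - \uptau_k(\ell^\to_{k,t})]^+$ and the identity $[x-y]^+ = x - \min\{x,y\}$, whereas the paper decomposes $Z^\to_{k,t} = Z_{k,t}(\ell^\to_{k,t}) + [\uptau_k(\ell^\to_{k,t}) - t]^+$ and manipulates the positive parts explicitly; you also justify the index-advance identity from the FCFS discipline, which the paper merely asserts.
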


\begin{hproof}
Fix \final{arm $k$ and timeslot $t$}. According to the definition of age \eqref{eq:age_def}, we have for two consecutive delivery requests that their age difference satisfies $Z_{k,t}(\ell) -  Z_{k,t}(\ell+1) = \Delta \uptau_{k}(\ell)$. Also from the definition \eqref{eq:age_def}, we have 
\final{$Z_{k,t+1}(\ell+1) = Z_{k,t}(\ell+1) + 1$}.
% $Z_{k,t+1}(\ell) = Z_{k,t}(\ell) + 1$. 
Combining these two observations gives that $Z_{k,t+1}(\ell+1) = Z_{k,t}(\ell) - \Delta \uptau_{k}(\ell) + 1$. 
\final{
Also, note that according to the definition of age \eqref{eq:age_def}, we have $\left[Z_{k,t}(\ell_{k,t}^\to)\right]^- = \left[\uptau_k(\ell_{k,t}^\to) -  t \right]^+$. Then we can rewrite the definition of head-of-line age \eqref{eq:head_of_line_age}  as $Z_{k,t}^\to = Z_{k,t}(\ell_{k,t}^\to) + \left[\uptau_k(\ell_{k,t}^\to) -  t \right]^+$.
}
We use \final{the previously stated identities}
% this identity 
to analyze two cases:

\textit{Case 1}: $D_{k,t} = 1$. Then note that $\ell_{k,t+1}^\to = \ell_{k,t}^\to + 1$.

\textit{Case 2}: $D_{k,t} = 0$. Then note that $\ell_{k,t+1}^\to = \ell_{k,t}^\to$.

\noindent The detailed analysis of these two cases is omitted due to space limitations. 
\end{hproof}

\subsection{Throughput for Any Policy in Terms of Head-of-Line Age}
\label{section:safety_for_any_policy}

In this section, we give throughput guarantees for each arm $k$ under any policy in terms of the head-of-line age $Z_{k,t}^\to$. 
\begin{lemma}
\label{lemma:safety_rate_lower_bound}
For any arm $k$ and timeslot $t \geq W$, under any policy, the throughput constraint violation is bounded by
\begin{equation}
\label{eq:frequency_violation_lemma_rewrite}
\textstyle\chi_k - \E\mleft[\overline{R}_{k,t}[W] \mright] \leq \frac{1}{W} \E\mleft[1 + \sum_{\tau = 1}^{Z_{k,t+1}^\to - 1} B_{k,t-\tau + 1} \mright] - \varepsilon.
\end{equation}
\end{lemma}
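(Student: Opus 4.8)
The plan is to lower-bound the realized throughput by the number of queue departures and then account for the gap between departures and arrivals using the head-of-line age; the whole argument is pathwise except for one step where I replace arrival indicators by their mean. Fix arm $k$ and timeslot $t \geq W$, and abbreviate $\ell_1 \triangleq \ell_{k,t+1}^\to$ and $\ell_0 \triangleq \ell_{k,t-W+1}^\to$. First I would record the elementary first-come-first-served bookkeeping: the cumulative number of departures by the end of any timeslot $s$ equals $\sup\{\ell : d_k(\ell) \leq s\} = \ell_{k,s+1}^\to - 1$ by \eqref{eq:next_to_depart}. Applying this at $s=t$ and $s=t-W$ and subtracting, the number of departures inside the window is $\sum_{\tau=t-W+1}^{t} D_{k,\tau} = \ell_1 - \ell_0$. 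Since $D_{k,\tau} \leq R_k(S_\tau,A_\tau)$ by \eqref{eq:queue_departure}, this gives pathwise $W\,\overline{R}_{k,t}[W] \geq \ell_1 - \ell_0$.

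Next I would bring in the arrival process. Writing $N_k(s) \triangleq \sum_{\tau=1}^{s} B_{k,\tau}$ for the cumulative arrivals (so that $N_k(s) = \sup\{\ell : \uptau_k(\ell) \leq s\}$, since $\uptau_k$ is strictly increasing in $\ell$), I would use $\E[B_{k,\tau}] = \chi_k + \varepsilon$ from \eqref{eq:queue_arrival} to write $\chi_k = \frac{1}{W}\E[N_k(t) - N_k(t-W)] - \varepsilon$. Combining with the previous display and taking expectations yields $\chi_k - \E[\overline{R}_{k,t}[W]] \leq \frac{1}{W}\E[(N_k(t) - \ell_1) + (\ell_0 - N_k(t-W))] - \varepsilon$, and it remains to control the two differences inside the expectation.

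For $\ell_0 - N_k(t-W)$, I would note that a request can only depart after it has arrived, so the cumulative departures by $t-W$, namely $\ell_0 - 1$, never exceed the cumulative arrivals $N_k(t-W)$; hence $\ell_0 - N_k(t-W) \leq 1$ pathwise. For $N_k(t) - \ell_1$, I would identify it with the head-of-line age sum: when $Z_{k,t+1}^\to \geq 1$ the head-of-line request $\ell_1$ has arrival time $\uptau_k(\ell_1) = t+1 - Z_{k,t+1}^\to$, so $N_k(t - Z_{k,t+1}^\to + 1) = \ell_1$ and therefore $N_k(t) - \ell_1 = \sum_{s=t-Z_{k,t+1}^\to+2}^{t} B_{k,s} = \sum_{\tau=1}^{Z_{k,t+1}^\to - 1} B_{k,t-\tau+1}$; when $Z_{k,t+1}^\to = 0$ the request $\ell_1$ has not arrived by $t$, so $N_k(t) - \ell_1 = -1$ while the sum is empty, giving $N_k(t) - \ell_1 \leq \sum_{\tau=1}^{Z_{k,t+1}^\to-1} B_{k,t-\tau+1}$ in either case. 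Adding the two bounds and dividing by $W$ produces exactly \eqref{eq:frequency_violation_lemma_rewrite}.

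The telescoping and the mean-substitution are routine; the only real content is the two structural facts, namely that first-come-first-served makes the cumulative departures equal $\ell_{k,s+1}^\to - 1$, and that the arrivals counted by the head-of-line age sum are precisely the requests that arrived strictly after the current head-of-line request (up to time $t$). I expect the main obstacle to be handling the empty-queue case $Z_{k,t+1}^\to = 0$ cleanly, where the exact identity $N_k(t) - \ell_1 = \sum_{\tau=1}^{Z_{k,t+1}^\to-1} B_{k,t-\tau+1}$ degrades to an inequality under the empty-sum convention; I would verify explicitly that it degrades in the favorable direction for an upper bound.
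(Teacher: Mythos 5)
Your proposal is correct and follows essentially the same argument as the paper's proof: both rest on the FCFS identification of cumulative departures with $\ell_{k,\cdot}^\to - 1$, the bound $D_{k,\tau} \leq R_k(S_\tau,A_\tau)$, non-negativity of the queue at the window start, the identification of the arrivals after the head-of-line request's arrival time with the sum indexed by $Z_{k,t+1}^\to - 1$, and finally taking expectations of the Bernoulli arrivals. The only difference is bookkeeping: the paper telescopes the queue-set identity $\sum_{\tau = \uptau_k(\ell_{k,t+1}^\to)}^{t} B_{k,\tau} = \sum_{\tau=1}^t [B_{k,\tau} - D_{k,\tau}]$ and extracts the $+1$ from the head-of-line packet's own arrival indicator, whereas you isolate two boundary terms $(N_k(t)-\ell_1)$ and $(\ell_0 - N_k(t-W))$ and your $+1$ comes from the queue-non-negativity slack at the window start --- the net accounting is identical.
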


\begin{proof}
Fix the timeslot $t$ and arm $k$. Note that, by definition \eqref{eq:next_to_depart}, ${\{ \ell : d_k(\ell) < t \}} = \{ \ell : \ell < \ell_{k,t}^\to \}$. Therefore from \eqref{eq:queue_def},
\begin{equation}
\begin{aligned}
\label{eq:prelim_queue_decomp}
\hspace{-0.2cm}\mathcal{Q}_{k,t}
&= \left\{ \ell : \uptau_k(\ell) < t  \right\} \setminus \left\{ \ell : \ell < \ell_{k,t}^\to  \right\} 
= \left\{ \ell \geq \ell_{k,t}^\to : \uptau_k(\ell) < t  \right\} \\
&\hspace{-0.2cm}= \left\{ \ell : \uptau_k(\ell_{k,t}^\to) \leq \uptau_k(\ell) < t \right\} 
= \textstyle\bigcup_{\tau = \uptau_k(\ell_{k,t}^\to)}^{t-1} \{ \ell : \uptau_k(\ell) = \tau \}. \\
\end{aligned}
\end{equation}
We can also simply rewrite \eqref{eq:queue_def} by decomposing each set into $t-1$ disjoint subsets, i.e. $\mathcal{Q}_{k,t} = {\bigcup_{\tau=1}^{t-1} \{ \ell : \uptau_k(\ell) = \tau \}} \setminus {\bigcup_{\tau=1}^{t-1} \{ \ell : d_{k}(\ell) = \tau \}}$. It follows from the definitions \eqref{eq:queue_departure} and \eqref{eq:queue_arrival} that $|\mathcal{Q}_{k,t}| = \sum_{\tau=1}^{t-1} B_{k,\tau} - \sum_{\tau=1}^{t-1} D_{k,\tau}$,
and from \eqref{eq:prelim_queue_decomp} that 
\final{$|\mathcal{Q}_{k,t}| = \sum_{\tau = \uptau_k(\ell_{k,t}^\to)}^{t-1} B_{k,\tau}$.}
% $|\mathcal{Q}_{k,t}| = \bigcup_{\tau = \uptau_k(\ell_{k,t}^\to)}^{t-1} B_{k,\tau}$. 

 Since these results hold for all timeslots $t$, fix $t \geq W$. Then
\begin{equation}
\begin{aligned}
\label{eq:next_lower_bound_in_sample_path_lemma}
&\textstyle \sum_{\tau = \uptau_k(\ell_{k,t+1}^\to)}^{t} B_{k,\tau} = \sum_{\tau=1}^t [B_{k,\tau} - D_{k,\tau}]  \\
&= \textstyle \sum_{\tau=1}^{t-W} [B_{k,\tau} - D_{k,\tau}] + \sum_{\tau=t-W+1}^{t} [B_{k,\tau} - D_{k,\tau}] \\
&\overset{(a)}{\geq} \textstyle \sum_{\tau=t-W+1}^{t} (B_{k,\tau} - R_{k}(S_\tau, A_\tau)) 
\end{aligned}
\end{equation}
where $(a)$ bounds $\sum_{\tau=1}^{t-W} [B_{k,\tau} - D_{k,\tau}] = |\mathcal{Q}_{k,t-W+1}| \geq 0$ and $D_{k,\tau} \leq R_{k}(S_\tau, A_\tau)$ (see \eqref{eq:queue_departure}).
Note that $B_{k,\tau} = 1$ by definition for $\tau = \uptau_k(\ell_{k,t+1}^\to)$. Then the left-hand side of the above can be bounded by $\sum_{\tau = \uptau_k(\ell_{k,t+1}^\to)}^{t} B_{k,\tau} \leq 1 + \sum_{\tau = 1}^{t - \uptau_k(\ell_{k,t+1}^\to)} B_{k,t-\tau + 1}$. Combining this with \eqref{eq:next_lower_bound_in_sample_path_lemma}, dividing by $W$, taking expectations, and rearranging gives
\begin{equation}
\textstyle\chi_k - \E\mleft[\overline{R}_{k,t}[W] \mright] \leq \frac{1}{W} \E\mleft[1 + \sum_{\tau = 1}^{t - \uptau_k(\ell_{k,t+1}^\to)} B_{k,t-\tau + 1} \mright] - \varepsilon .
\end{equation}
Assume $\uptau_k(\ell_{k,t+1}^\to) < t$ (otherwise the sum above would trivially equal zero). Then by definition of the age \eqref{eq:age_def}, the head-of-line age $Z_{k,t+1}^\to = t + 1 -  \uptau_k(\ell_{k,t+1}^\to)$ and the final result follows. 
\end{proof}

\subsection{Throughput Violation Bounds for Age-Based Policies}
\label{section:safety_for_age_based_policies}

In this section, we formalize the idea of an \textit{age-based policy}, and present Theorem~\ref{theorem:age_is_all_you_need}, which shows that under an age-based policy, the throughput constraint violation can be written as a function of the expected head-of-line age, the window size $W$, and the tuning parameter $\varepsilon$.

\begin{definition}[age-based policy]
\label{defition:age_based_policy}
Define the \textit{age-augmented history} as $\mathcal{H}_t^\to \triangleq (\mathcal{H}_t , (Z_{k,\tau}^\to)_{k\in [K], \tau \in [t]})$. An \textit{age-based policy} $\pi$ is a causal policy under which $A_t^\pi$ is a measurable function of $\mathcal{H}_t^\to$ in each timeslot $t$.
\end{definition}

One of the most important features of an age-based policy, namely that the policy plays independently from arrivals after the head-of-line delivery request,  is discussed as follows.

\begin{remark}[arrival independence of age-based policies]
\label{remark:age_based_arrival_indepdendence}
Fix an arm $k$ and timeslot $t$. According to Lemma~\ref{lemma:head_of_line_age} and the definition \eqref{eq:queue_departure} of $D_{k,\tau}$, for any timeslot $\tau$, $\Delta Z_{k,\tau}^\to$ is a measurable function of $(\Delta \uptau_k(\ell))_{\ell=0}^{\ell_{k,\tau}^\to} $ and $ R_{k}(S_\tau, A_\tau)$ (since $\uptau_k(\ell_{k,\tau}^\to) = \sum_{\ell=0}^{\ell_{k,\tau}^\to - 1} \Delta \uptau_k(\ell)$). Then since $Z_{k,t}^\to = \sum_{\tau=1}^{t-1} \Delta Z_{k,\tau}^\to$, the age-augmented history $\mathcal{H}_t^\to$ is a measurable function of $\mathcal{H}_t$ and $\big((\Delta \uptau_k(\ell))_{\ell=0}^{\ell_{k,t-1}^\to}\big)_{k=1}^K$. Under an age-based policy $\pi$, and due to the i.i.d. arrivals, the age-augmented history $\mathcal{H}_t^\to$ is therefore independent of the future arrivals $((B_{k,\tau})_{\tau > \uptau_k(\ell_{k,t}^\to)})_{k=1}^K$ in timeslots after the arrival times of the head-of-line packets, and is independent of the interarrival times  $((\Delta \uptau_k(\ell))_{\ell \geq \ell_{k,t}^\to})_{k=1}^K$ of future packets after the head-of-line packets. 
\end{remark}
Note that the above remark is not necessarily true for queue-length-based policies since the queue lengths $(|\mathcal{Q}_{k,t}|)_{k=1}^K$ also depend on the arrivals $\big((B_{k,\tau})_{\tau = \uptau_k(\ell_{k,t}^\to)+1}^{t-1}\big)_{k=1}^K$ that came after the head-of-line packets. From Lemma~\ref{lemma:safety_rate_lower_bound}, we derive the throughput constraint violation bound under an age-based policy in the following theorem by leveraging Remark~\ref{remark:age_based_arrival_indepdendence} in order to apply Wald's identity.
\begin{theorem}
\label{theorem:age_is_all_you_need}
For any arm $k$, under an age-based policy, the throughput constraint violation is bounded for all timeslots $t \geq W$ by
\begin{equation}
\chi_k - \E\mleft[\overline{R}_{k,t}[W] \mright] \leq \frac{1 + \left(\E\mleft[Z_{k,t+1}^\to\mright] - 1\right)(\chi_k + \varepsilon) }{W}  - \varepsilon.
\end{equation}
\end{theorem}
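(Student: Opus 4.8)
The plan is to begin from Lemma~\ref{lemma:safety_rate_lower_bound}, which has already reduced the task to controlling a single expectation: it suffices to evaluate $\E\mleft[\sum_{\tau=1}^{Z_{k,t+1}^\to-1}B_{k,t-\tau+1}\mright]$ and to show it equals $(\chi_k+\varepsilon)\mleft(\E\mleft[Z_{k,t+1}^\to\mright]-1\mright)$. Substituting this value into the bound \eqref{eq:frequency_violation_lemma_rewrite} and collecting the terms over $1/W$ then reproduces the stated inequality directly, with no further algebra.

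First I would reindex the random sum to see precisely which arrivals it counts. Using $Z_{k,t+1}^\to-1 = t-\uptau_k(\ell_{k,t+1}^\to)$ (valid whenever the next-to-depart request has already arrived), the summands $B_{k,t-\tau+1}$ for $\tau = 1,\ldots,Z_{k,t+1}^\to-1$ are exactly the arrival indicators in the slots $\uptau_k(\ell_{k,t+1}^\to)+1,\ldots,t$, i.e.\ the requests that arrive strictly after the head-of-line request enters the queue. Each summand is marginally $\text{Bernoulli}(\chi_k+\varepsilon)$, but the number of terms $Z_{k,t+1}^\to-1$ and the index set are themselves random, so one cannot naively replace each term by its mean $\chi_k+\varepsilon$.

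The key step is to remove this entanglement via Remark~\ref{remark:age_based_arrival_indepdendence}. Under an age-based policy the age-augmented history $\mathcal{H}_{t+1}^\to$---and hence both the count $Z_{k,t+1}^\to$ and the random lower limit $\uptau_k(\ell_{k,t+1}^\to)$---is independent of the post-head-of-line arrivals $(B_{k,\tau})_{\tau>\uptau_k(\ell_{k,t+1}^\to)}$. I would therefore condition on $\mathcal{H}_{t+1}^\to$: given this history, the number of terms and the index set are fixed, while the summands remain i.i.d.\ $\text{Bernoulli}(\chi_k+\varepsilon)$ and independent of everything conditioned on. This is exactly the hypothesis needed for Wald's identity, so the conditional expectation of the sum is $(Z_{k,t+1}^\to-1)(\chi_k+\varepsilon)$; taking the outer expectation produces $(\chi_k+\varepsilon)\mleft(\E\mleft[Z_{k,t+1}^\to\mright]-1\mright)$, as desired.

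I expect the main obstacle to be making this independence fully rigorous, since it is precisely the feature separating age-based from queue-length-based policies: as observed just after Remark~\ref{remark:age_based_arrival_indepdendence}, the queue length $|\mathcal{Q}_{k,t+1}|$ depends on the very arrivals being summed, so Wald's identity would fail for a queue-length-based policy. A secondary subtlety is the empty-queue edge case $Z_{k,t+1}^\to=0$, where the summation range is vacuous under the convention $\sum_{i=a}^b = 0$ for $a>b$ and where $Z_{k,t+1}^\to-1$ ceases to coincide with the true number of summands; I would isolate this event when conditioning, apply the Wald computation only on the event that the head-of-line request has arrived, and treat the residual contribution separately to confirm the final bound.
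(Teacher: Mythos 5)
Your proposal is correct and follows essentially the same route as the paper's proof: reduce via Lemma~\ref{lemma:safety_rate_lower_bound}, invoke Remark~\ref{remark:age_based_arrival_indepdendence} for the independence of $Z_{k,t+1}^\to$ and $\uptau_k(\ell_{k,t+1}^\to)$ from the post-head-of-line arrivals, and evaluate the random sum by Wald's identity after conditioning on $\mathcal{H}_{t+1}^\to$. If anything, you are more careful than the paper's two-line proof, which silently ignores the $Z_{k,t+1}^\to = 0$ edge case you flag: there the sum's expectation is $(\chi_k+\varepsilon)\,\E\mleft[\,[Z_{k,t+1}^\to - 1]^+\mright]$ rather than $(\chi_k+\varepsilon)\mleft(\E\mleft[Z_{k,t+1}^\to\mright]-1\mright)$, and the excess $(\chi_k+\varepsilon)\prob\mleft(Z_{k,t+1}^\to = 0\mright)$ must be absorbed by the slack in the ``$1+$'' term inside the proof of Lemma~\ref{lemma:safety_rate_lower_bound} (that $1$ is really $\1\{\uptau_k(\ell_{k,t+1}^\to)\le t\}$, whose expectation is $1-\prob(Z_{k,t+1}^\to=0)$), which is exactly the residual-contribution analysis you propose and it does close the argument since $\chi_k+\varepsilon\le 1$.
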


\begin{proof}
Fix an arm $k$ and timeslot $t$. According to Remark~\ref{remark:age_based_arrival_indepdendence}, under an age-based policy, $Z_{k,t+1}^\to$ is independent from $B_{k,\tau}$ for $\tau > \uptau_k(\ell_{k,t+1}^\to)$. Then since the sum in \eqref{eq:frequency_violation_lemma_rewrite} consists only of terms $B_{k,\tau}$ for $\tau > \uptau_k(\ell_{k,t+1}^\to)$, we can apply Wald's identity to Lemma~\ref{lemma:safety_rate_lower_bound} to get the result.
\end{proof}

The following corollary reveals the tradeoff between the tuning parameter $\varepsilon$ and the expected head-of-line age $\E\mleft[Z_{k,t+1}^\to\mright]$ to obtain a feasible window size for the throughput constraints. 

\begin{corollary}
\label{corollary:age_is_all_you_need}
For any arm $k$, any age-based policy achieves zero throughput constraint violation, i.e. $\chi_k - \E\mleft[\overline{R}_{k,t}[W] \mright] \leq 0$ for all $t \geq W$ if the window size is at least $W \geq \frac{1}{\varepsilon}\left( (\chi_k + \varepsilon)\E\mleft[Z_{k,t+1}^\to\mright] + 1 \right)$.
\end{corollary}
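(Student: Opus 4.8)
The plan is to read the corollary directly off Theorem~\ref{theorem:age_is_all_you_need} by demanding that the upper bound appearing there be nonpositive and then solving the resulting inequality for the window size $W$. Fix an arm $k$ and a timeslot $t \geq W$. By the theorem it suffices to find a condition on $W$ under which $\frac{1 + (\E[Z_{k,t+1}^\to] - 1)(\chi_k + \varepsilon)}{W} - \varepsilon \leq 0$, since that inequality immediately yields $\chi_k - \E[\overline{R}_{k,t}[W]] \leq 0$, i.e. zero throughput constraint violation.

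First I would expand the numerator, writing $1 + (\E[Z_{k,t+1}^\to] - 1)(\chi_k + \varepsilon) = (\chi_k + \varepsilon)\E[Z_{k,t+1}^\to] + 1 - (\chi_k + \varepsilon)$, and denote this quantity by $N_k$. If $N_k \leq 0$ then the theorem's bound is already at most $-\varepsilon < 0$ and there is nothing to prove, so the only case of interest is $N_k > 0$. In that case, because $W > 0$ and $\varepsilon > 0$, the inequality $N_k / W \leq \varepsilon$ is equivalent to $W \geq N_k / \varepsilon$, which gives the tight threshold $W \geq \frac{1}{\varepsilon}\left( (\chi_k + \varepsilon)\E[Z_{k,t+1}^\to] + 1 - (\chi_k + \varepsilon) \right)$.

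Finally I would observe that, since $\chi_k + \varepsilon > 0$, this tight threshold is no larger than the simpler expression $\frac{1}{\varepsilon}\left( (\chi_k + \varepsilon)\E[Z_{k,t+1}^\to] + 1 \right)$ stated in the corollary; dropping the $-(\chi_k + \varepsilon)$ term only enlarges the right-hand side. Hence the corollary's condition on $W$ is sufficient (though slightly conservative) for the theorem's bound to be nonpositive, which establishes the claim.

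Since the whole argument is a one-line rearrangement of Theorem~\ref{theorem:age_is_all_you_need}, there is no substantive obstacle. The only points requiring care are checking the sign of $N_k$ so that multiplying through by $W$ preserves the direction of the inequality, and noting explicitly that the corollary deliberately states a weaker (larger) sufficient bound on $W$ rather than the exact one, presumably chosen for a cleaner statement.
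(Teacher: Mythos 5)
Your proposal is correct and follows the same route as the paper, which states the corollary as an immediate rearrangement of Theorem~\ref{theorem:age_is_all_you_need} without a separate proof: set the theorem's upper bound $\frac{1 + (\E[Z_{k,t+1}^\to]-1)(\chi_k+\varepsilon)}{W} - \varepsilon \leq 0$ and solve for $W$. Your added care about the sign of the numerator and your observation that the stated threshold drops the $-(\chi_k+\varepsilon)$ term (hence is slightly conservative but cleaner) are both accurate.
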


\subsection{An Age-Based Bandit Learning Policy}
\label{section:age_based_policy}

Motivated by Theorem~\ref{theorem:age_is_all_you_need}, we design a very natural age-based bandit learning policy. In particular, we design a \textit{``UCB-plus-MaxWeight"} style policy, which is commonly used for queue-length-based constrained bandit learning algorithms (e.g. \cite{li2019,steiger2022,liu2021}), but substitute the head-of-line age for the queue length. In each timeslot $t$, the policy plays
\final{
\begin{equation}
\label{eq:age_based_algorithm_def}
A_t \in \argmax_{a \in \mathcal{A}}\sum_{k=1}^K \left( \eta U_{k,t} + Z_{k,t}^\to\right)I_k(a) 
\end{equation}
}
% \begin{equation}
% \label{eq:age_based_algorithm_def}
% A_t \in \argmax_{a \in \mathcal{A}}\sum_{k=1}^K \left( \eta U_{k,t} + Z_{k,t}^\to\right)I_k(a) 
% \end{equation}
where $U_{k,t}$ is the UCB estimate of the channel rate $\overline{x}_k$ in timeslot $t$ and $\eta > 0$ is a tuning parameter that balances reward and \final{constraint violation}. The details are shown in Algorithm~\ref{alg:age_based}. 
% \footnote{\final{Note that prior work (e.g., \cite{wu2024}) does not use $(\underline{x}_k)_{k=1}^K$ in their algorithm design. We show in simulation results that our algorithm also performs well when these weights are omitted.}
% }
This mirrors the \textit{``drift-plus-penalty"} style algorithm design from the field of stochastic network optimization and facilitates Lyapunov drift analysis to bound the instantaneous head-of-line age, which may be plugged in to Theorem~\ref{theorem:age_is_all_you_need} to obtain \final{constraint violation} bounds (Theorem~\ref{theorem:safety}). Also, the \textit{``drift-plus-regret"} analysis technique \cite{liu2021} may be used to obtain a regret bound (Theorem~\ref{theorem:regret}). We derive both of these in Section~\ref{section:performance}.

\begin{algorithm}
\caption{Age-Based Bandit Learning}\label{alg:age_based}
\begin{algorithmic}[1]
\Require $\eta > 0$,  $\varepsilon \in (0,\gamma / 2]$, and $(\chi_k)_{k=1}^K$. 
% satisfying \eqref{eq:requirement_selection_criteria}.

\State Initialize $N_{k,1} \gets \widetilde{x}_{k,1} \gets 0$ and $\mathcal{Q}_{k,1} \gets \varnothing$  $\forall\,k\in [K]$.
\For {$t \in [T]$}
    \For {$k \in [K]$} \Comment{Compute UCB estimates}
        \State \label{line:definition_ucb_estimate} $U_{k,t} \gets \begin{cases}
         \min\mleft\{ 1, \widetilde{x}_{k,t} + \sqrt{\frac{3 \log t}{2 N_{k,t}}}  \mright\}   & N_{k,t} > 0 \\
         1 & N_{k,t} = 0.
        \end{cases}$
    \EndFor
    \State Retrieve $(Z_{k,t}^\to)_{k=1}^K$ from the virtual queues $(\mathcal{Q}_{k,t})_{k=1}^K$.
    \State \final{Play action $\displaystyle A_t \in \argmax_{a \in \mathcal{A}}\sum_{k=1}^K \left( \eta  U_{k,t} +  Z_{k,t}^\to\right)I_k(a)$.}
    % \State \label{line:age_based_algorithm_def} $\displaystyle A_t \in \argmax_{a \in \mathcal{A}}\left(\eta \sum_{k=1}^K I_k(a)\, U_{k,t} + \sum_{k=1}^K  \widehat{x}_c(S_t,a)\, Z_{k,t}^\to\right)$.
    \State Observe rewards $(R_{k}(S_t, A_t))_{k=1}^K$.
    \For {$k \in [K]$} \Comment{Update algorithm variables}
        \If {\final{$I_k(A_t) = 1$}}
            \State $N_{k,t+1} \gets N_{k,t} + 1$.
            \State \final{$\widetilde{x}_{k,t+1} \gets \frac{1}{N_{k,t} + 1} \left[ N_{k,t}\,\widetilde{x}_{k,t} + R_{k}(S_t, A_t) \right]$}.
        \Else 
            \State $(N_{k,t+1}\,,\, \widetilde{x}_{k,t+1})\gets  (N_{k,t}\,,\,\widetilde{x}_{k,t})$.
        \EndIf
        \State Update $\mathcal{Q}_{k,t}$ to $\mathcal{Q}_{k,t+1}$ according to Section~\ref{section:queue_dynamics}.
    \EndFor
\EndFor
\end{algorithmic}
\end{algorithm}

\section{Performance of the Age-Based Bandit Learning Policy in an I.I.D. Environment}
\label{section:performance}

In this section, we analyze the performance of the age-based bandit learning policy given in Algorithm~\ref{alg:age_based}. In Theorem~\ref{theorem:safety}, we derive the minimum window size of $O(\eta/\varepsilon)$ for which the policy achieves zero violation of the throughput constraints and in Theorem~\ref{theorem:regret}, we derive its ${O(\varepsilon T + T/\eta + \sqrt{T\log T})}$ regret bound. Table~\ref{tab:results_summary} summarizes its performance in terms of $T$ for different values of $\eta$ and $\varepsilon$.  Before presenting the detailed theoretical analysis, we show the empirical performance of our policy compared to some state-of-the-art policies that exhibit tradeoffs in reward and constraint violation. 

\begin{table}[t]
% \begin{threeparttable}[b]
\caption{Age-Based Bandit Learning Policy Performance  in Terms of $T$}
\begin{center}
\begin{tabular}{|c|c|c|c|}
\hline
$\eta \propto$ & $\varepsilon \propto$ & \makecell{Zero Throughput \\ Violation Window Size} & Regret \\
\hline \hline
$\eta$ & $\varepsilon$ & $O(\eta/\varepsilon)$ & $O(\varepsilon T + T/\eta + \sqrt{T\log T})$ \\
\hline
$T^{\delta_1}$ & $ T^{-\delta_2}$  & $W = O(T^{\delta_1 + \delta_2})$  & $O(T^{1- (\delta_1 \land \delta_2)})$ \\
\hline
$1$ & $1$ & $W = O(1)$ & $O(T)$ \\
\hline
$\sqrt{T}$ & $1/\sqrt{T}$ & $W = O(T)$ & $O(\sqrt{T\log T})$ \\
\hline
\end{tabular}
\begin{tablenotes}
\item $\delta_1, \delta_2 \in (0,1)$ and  $\delta_1 \land \delta_2 < 1/2$. 
\end{tablenotes}
\end{center}
\label{tab:results_summary}
% \end{threeparttable}
\end{table}

\subsection{Empirical Performance}
\label{section:empirical_performance}

\final{
In this section, we compare the age-based learning policy \eqref{eq:age_based_algorithm_def} against three different  approaches from the literature.
}
% In this section, we compare the age-based learning policy \eqref{eq:age_based_algorithm_def} against three policies from the literature. We use the same problem setting parameters as in the  \textit{Combinatorial Sleeping Bandits with Fairness Constraints} problem from \cite{li2019}, except we must multiply each arm's fairness requirement by its mean reward in order to make it a throughput requirement. 
Some of the policies we compare against incorporate the \textit{time-since-last-reward} (TSLR) $\mathbb{T}_{k,t}$ for each arm $k$, which has initial condition $\mathbb{T}_{k,1} = 0$ and is updated from timeslot $t$ to $t+1$ by \final{$\mathbb{T}_{k,t+1} = \begin{cases}
\mathbb{T}_{k,t} + 1 & R_k(S_t,A_t) = 0 \\
0 & R_k(S_t,A_t) = 1 \\
\end{cases}$}.
% \begin{equation}
% \label{eq:time_since_last_reward}
% \mathbb{T}_{k,t+1} = \begin{cases}
% \mathbb{T}_{k,t} + 1 & R_k(S_t,A_t) = 0 \\
% 0 & R_k(S_t,A_t) = 1. \\
% \end{cases}
% \end{equation}
TSLR is the same as the AoI in the context of wireless scheduling and is often used as a measure of \textit{data freshness}, which is of interest in remote monitoring, and information update systems in general.
\final{
The policies we compare against are as follows: 

(i) A \textit{queue-length-based policy} similar to algorithms found in \cite{li2019} and \cite{liu2021}, which we call  ``QLen'' in simulation figures, and which plays 
${\displaystyle A_t \in \argmax_{a \in \mathcal{A}}\sum_{k=1}^K \left( |\mathcal{Q}_{k,t}| + \eta U_{k,t}\right)I_k(a)}$.

(ii) A \textit{TSLR-based policy} similar to the algorithm found in \cite{li2021}, which we call ``TSLR'' in simulation figures, and which plays $\displaystyle A_t \in \argmax_{a \in \mathcal{A}}\sum_{k=1}^K \left( \mathbb{T}_{k,t} + \eta U_{k,t}\right)I_k(a)$.

(iii) A \textit{combined queue-length-plus-TSLR-based policy} similar to the algorithm found in \cite{wu2024}, which we call {``QLen+TSLR''} in simulation figures, and which plays ${\displaystyle A_t \in \argmax_{a \in \mathcal{A}}\sum_{k=1}^K \left( |\mathcal{Q}_{k,t}| + \alpha \mathbb{T}_{k,t} + \eta U_{k,t}\right)I_k(a)}$. Note that a new tuning parameter $\alpha$ is introduced here for the TSLR term.
}
% We set the time horizon $T = 10^4$ and compare against the following policies:

% (i) \textit{Regular and Fair Learning (RFL)} \cite{wu2024}, which plays \\$\displaystyle A_t \in \argmax_{a \in \mathcal{A}}\sum_{k=1}^K \left( |\mathcal{Q}_{k,t}| + \alpha \mathbb{T}_{k,t} + \beta U_{k,t}\right)I_k(a)$.

% (ii) \textit{Learning with Fairness Guarantee (LFG)} \cite{li2019}, which plays 
% $\displaystyle A_t \in \argmax_{a \in \mathcal{A}}\sum_{k=1}^K \left( |\mathcal{Q}_{k,t}| + \eta U_{k,t}\right)I_k(a)$. \footnote{We augment the algorithm by adding the tuning parameter $\varepsilon$ in the virtual queues, as done in \cite{liu2021}.}.

% (iii) \textit{Learning-based Age-Efficient Scheduling (LAES)} \cite{li2021}, which plays $\displaystyle A_t \in \argmax_{a \in \mathcal{A}}\sum_{k=1}^K \left( \mathbb{T}_{k,t} + \eta U_{k,t}\right)I_k(a)$.

\final{
For the simulation parameters, we use the same 6-arm setup as the synthetic simulations in \cite{wu2024}. For the tuning parameters, similarly to \cite{wu2024}, we set $\eta = 100$, $\alpha = 1$, and $\varepsilon = 0.001$ for all algorithms. We set the window size to $W = 100$.
}
We plot the simulation results in Fig.~\ref{fig:simulation_results}, including the regret and total throughput constraint violation. We also plot the total TSLR, since it is of interest in information update systems. 

\begin{figure}%[htbp]
  \centering
  % each subfigure is exactly 1/3 of a single column
  \subfigure[Regret]{%
    \includegraphics[width=0.333\columnwidth]{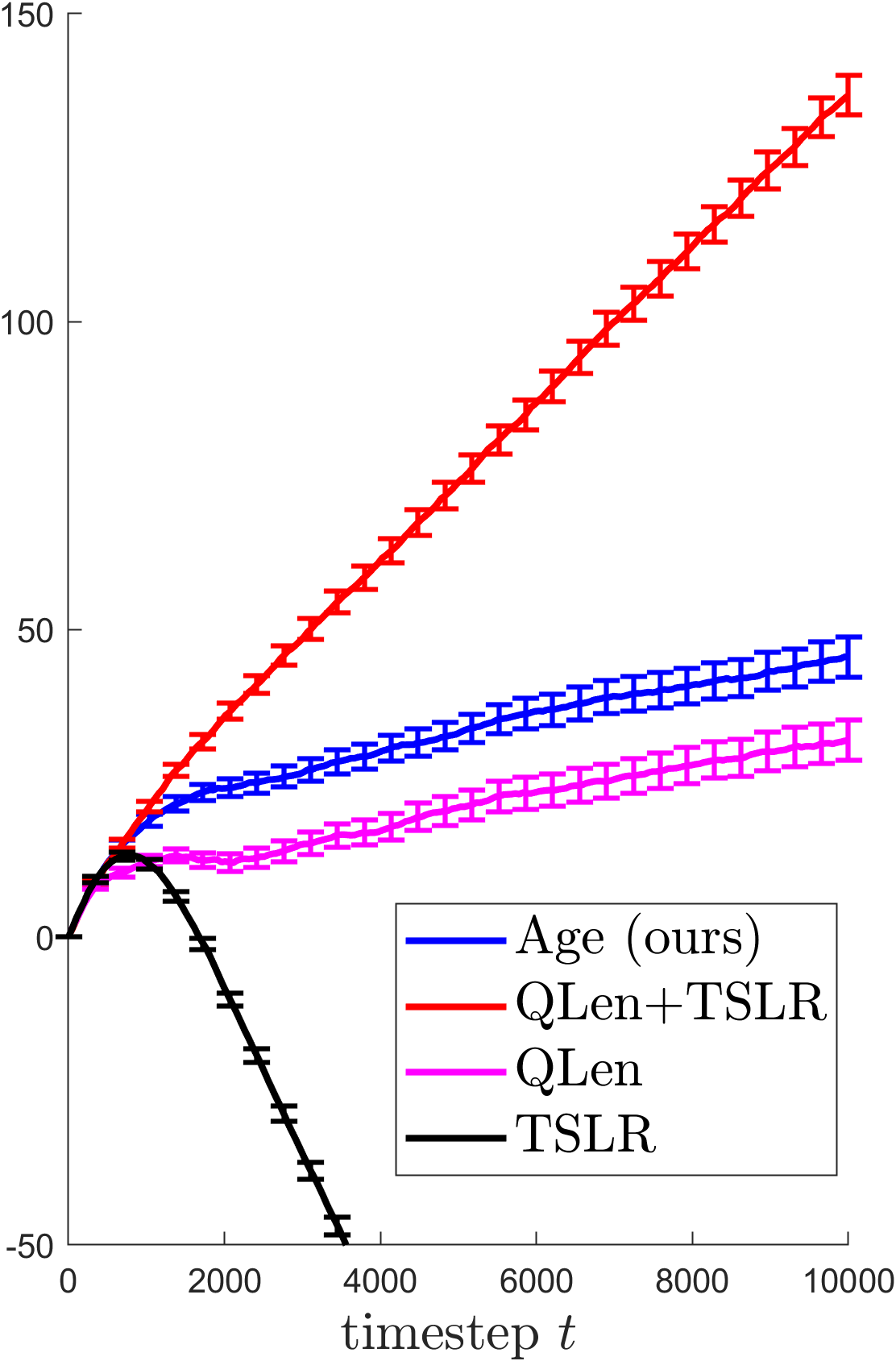}%
    \label{fig:one}}%
  \subfigure[Throughput Violation]{%
    \includegraphics[width=0.333\columnwidth]{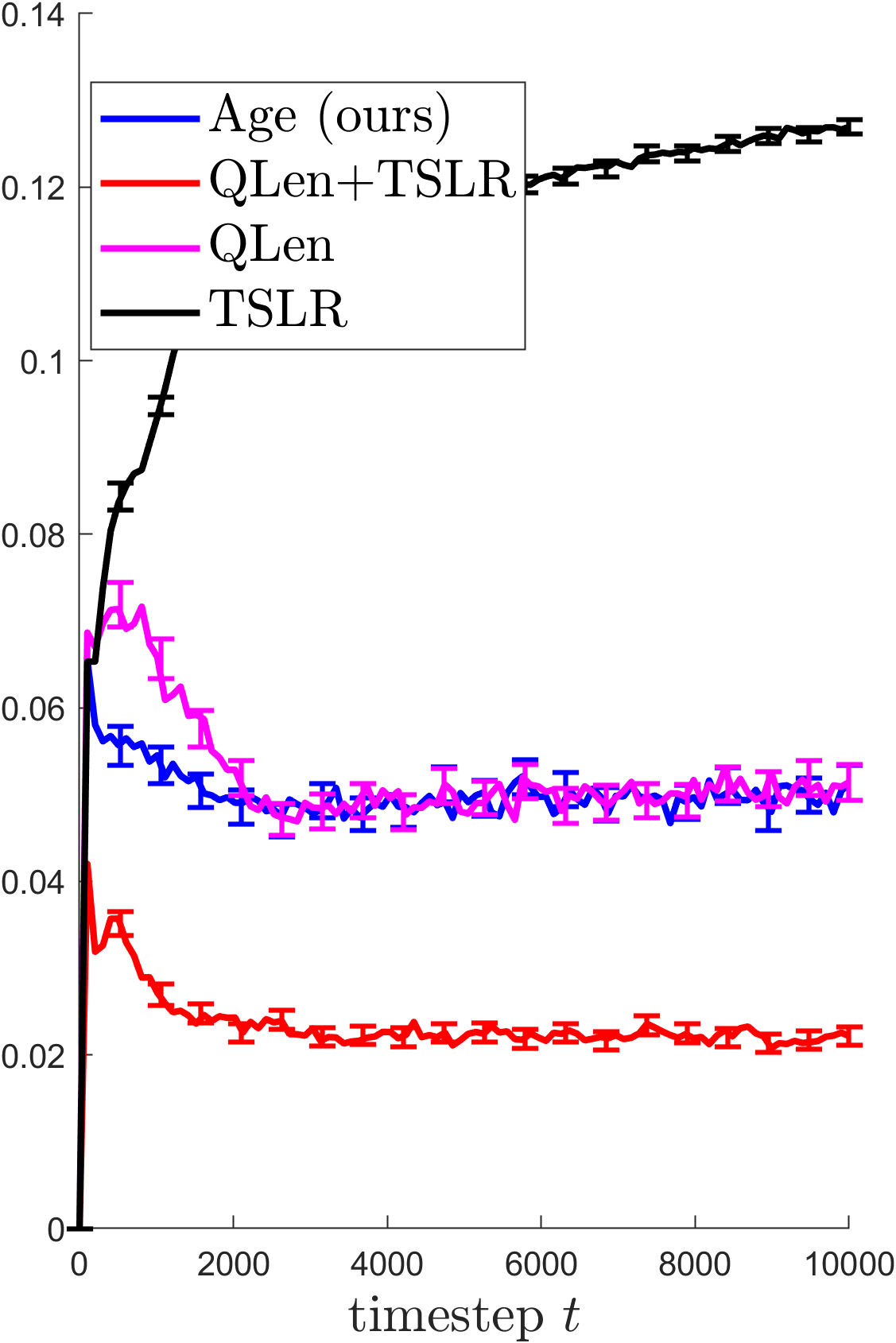}%
    \label{fig:two}}%
  \subfigure[Freshness (TSLR)]{%
    \includegraphics[width=0.333\columnwidth]{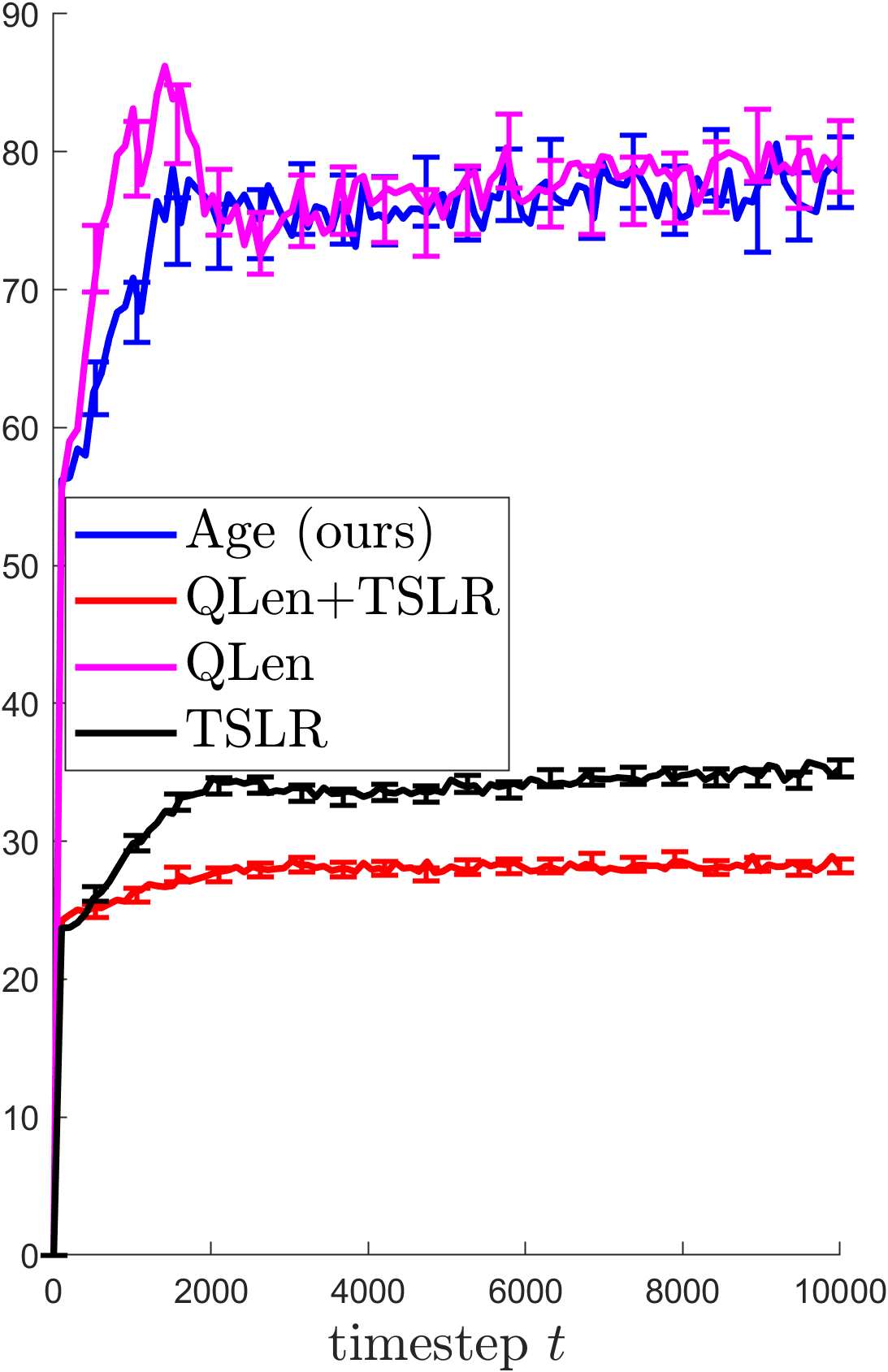}%
    \label{fig:three}}%
  \caption{Result of the simulations from Section~\ref{section:empirical_performance} (1000 simulation runs).}
  \label{fig:simulation_results}
\end{figure}

Fig.~\ref{fig:simulation_results} shows that our age-based policy achieves 
\final{similar performance to the queue-length-based policy in this i.i.d. environment.}
% middle-of-the-pack performance compared to the existing policies. 
\final{Note that the TSLR-based policy achieves negative regret because it can actually achieve $\widetilde{O}(\sqrt{T} + \frac{T}{\eta})$ regret when compared against the best combinatorial arm in the unconstrained problem \cite{li2021}. However, it fails to achieve the throughput requirements in general. We note that none of the policies considered seem to dominate empirically in all three performance metrics. 
However, in the next section, we show that the age-based policy is superior in the non-i.i.d. case where the channel conditions change abruptly. 
}
% Our policy achieves better regret than RFL, but worse than LFG. Notice that LAES achieves negative regret because it can actually achieve $\widetilde{O}(\sqrt{T} + \frac{T}{\eta})$ regret when compared against the best combinatorial arm in the unconstrained problem \cite{li2021}. Our policy achieves second-best throughput violation, with only the LAES policy performing notably worse than the others. Finally, our policy achieves third-best TSLR, but notably does better in this metric than the queue length-based policy LFG. The dominant two policies in TSLR (RFL and LAES) explicitly incorporate this metric in their algorithm design. We note that none of the policies considered seem to dominate empirically in all three performance metrics.

\subsection{Theoretical Analysis}

The theoretical results for the age-based bandit learning policy depend heavily on two Lyapunov functions: (i) the \textit{weighted quadratic Lyapunov function} \final{${L_t \triangleq {\frac12 \sum_{k=1}^K \frac{(\chi_k + \varepsilon)}{\overline{x}_k} \left(Z_{k,t}^\to \right)^2}}$}, which is inspired by the Lyapunov function used in \cite{neely2013delaybased} \final{and the Lyapunov function used in \cite{wu2024}}, and 
% \begin{equation}
% L_t \triangleq \frac12 \sum_{k=1}^K (\chi_k + \varepsilon) \left(Z_{k,t}^\to \right)^2,
% \end{equation}
 (ii) the \textit{Euclidean norm of the weighted head-of-line ages} \final{$\widetilde{L}_t \triangleq \|(\sqrt{(\chi_k + \varepsilon)/\overline{x}_k}Z_{k,t}^\to)_{k=1}^K\|_2$}.
% \begin{equation}
% \widetilde{L}_t \triangleq \|(\sqrt{\chi_k + \varepsilon}Z_{k,t}^\to)_{k=1}^K\|_2.
% \end{equation}
These two Lyapunov functions are connected as follows. Since  $\sqrt{x} - \sqrt{y} \leq \frac{x-y}{2\sqrt{y}}$ for all $x \geq 0$ and $y > 0$, we have 
\begin{equation}
\begin{aligned}
\label{eq:lyapunov_function_connection}
\widetilde{L}_t > 0 \implies \Delta \widetilde{L}_t = \Delta \sqrt{2 L_t} \leq {\Delta L_t}\,\big/\,\widetilde{L}_t.
\end{aligned}
\end{equation}
Most prior work applying Lyapunov drift analysis to constrained bandit learning uses a drift lemma such as Lemma 11 in \cite{liu2021} that operates on a process with the following two drift conditions: (i) if the process exceeds some value at time $t$, then its drift becomes negative at time $t$ in \final{conditional} expectation, and (ii) the magnitude of the drift of the process is bounded almost surely by some constant that does not depend on $t$. The drift lemma says that if these two conditions are satisfied, the moment-generating function (MGF) of the process can be bounded at any time $t$. We would like to apply such a drift lemma to the Lyapunov function $\widetilde{L}_t$. However, note that according to Lemma~\ref{lemma:head_of_line_age}, it will have a decrement that is a function of the Geometric random variables $(\Delta \uptau_k(\ell_{k,t}^\to))_{k=1}^K$ and will therefore violate drift condition (ii). Thankfully, the original work \cite{hajek1982} that this sort of analysis descends from allows for the magnitude of the drift to instead be bounded by a light-tailed random variable. To that end, we instead use the following drift lemma in our analysis, which is derived from the results in \cite{hajek1982}.

\begin{lemma}
\label{lemma:drift_lemma}
Consider a random process $\{ Y_t \}_{t\geq 1}$ and a sequence of $\sigma$-algebras $\{ \mathcal{F}_t \}_{t\geq 1}$ such that each $Y_t$ is $\mathcal{F}_t$-measurable. Assume the following two drift conditions hold:

(i) For all $t$, if $Y_t \geq \varphi$ then $\E\mleft[ \Delta Y_t \mid \mathcal{F}_t \mright] \leq -\rho$, where $\varphi > 0$ and $\rho \in (0,1]$ are constants.

(ii) $\E\mleft[ e^{\theta|\Delta Y_t|} \mid \mathcal{F}_t \mright] \leq M$ for all $t$, where $\theta \in (0,1]$ and $M \in [1,\infty)$ are constants.

Then for all $t$ and any $0 < \zeta \leq \frac{\rho \theta^2}{8 M}$, we have 
\begin{equation}
\textstyle \E\mleft[ e^{\zeta Y_t} \mright] \leq \left(1-\frac{\zeta\rho}{2}\right)^{t-1} \E\mleft[ e^{\zeta Y_1} \mright] + \frac{2}{\zeta\rho}e^{\zeta \varphi} M^{\zeta/\theta}.
\end{equation}
\end{lemma}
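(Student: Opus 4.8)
The plan is to follow the classical drift argument of \cite{hajek1982}: bound the one-step conditional moment-generating function $\E\mleft[e^{\zeta Y_{t+1}}\mid\mathcal{F}_t\mright]$, reduce the problem to a scalar recursion of the form $a_{t+1}\leq (1-\tfrac{\zeta\rho}{2})a_t + c$ for $a_t\triangleq\E\mleft[e^{\zeta Y_t}\mright]$ and a constant $c$, and then unroll this recursion as a geometric series. To this end, I would write $e^{\zeta Y_{t+1}} = e^{\zeta Y_t} e^{\zeta\Delta Y_t}$ and split the analysis according to whether $Y_t\geq\varphi$ or $Y_t<\varphi$. Before starting, I note that the hypothesis $\zeta\leq\rho\theta^2/(8M)$ together with $\rho,\theta\leq 1\leq M$ implies $\zeta\leq\theta/2$, which will be used repeatedly.

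The core estimate is the case $Y_t\geq\varphi$. Here I would apply the elementary inequality $e^x\leq 1 + x + \tfrac{x^2}{2}e^{|x|}$, valid for all real $x$, with $x=\zeta\Delta Y_t$, and take conditional expectation to obtain
\begin{equation*}
\E\mleft[e^{\zeta\Delta Y_t}\mid\mathcal{F}_t\mright]\leq 1 + \zeta\,\E\mleft[\Delta Y_t\mid\mathcal{F}_t\mright] + \tfrac{\zeta^2}{2}\,\E\mleft[(\Delta Y_t)^2 e^{\zeta|\Delta Y_t|}\mid\mathcal{F}_t\mright].
\end{equation*}
Drift condition (i) handles the first-order term via $\E\mleft[\Delta Y_t\mid\mathcal{F}_t\mright]\leq-\rho$. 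The quadratic remainder is controlled using the light-tail condition (ii) through the pointwise bound $u^2 e^{\zeta u}\leq \tfrac{4}{(\theta-\zeta)^2 e^2}e^{\theta u}$ for $u\geq 0$, obtained by maximizing $u^2 e^{-(\theta-\zeta)u}$; since $\zeta\leq\theta/2$ gives $\theta-\zeta\geq\theta/2$, this yields $\E\mleft[(\Delta Y_t)^2 e^{\zeta|\Delta Y_t|}\mid\mathcal{F}_t\mright]\leq \tfrac{16M}{e^2\theta^2}$. Combining, $\E\mleft[e^{\zeta\Delta Y_t}\mid\mathcal{F}_t\mright]\leq 1 - \zeta\rho + \tfrac{8M}{e^2\theta^2}\zeta^2$, and the choice $\zeta\leq\rho\theta^2/(8M)$ (which lies within the admissible range $\rho\theta^2 e^2/(16M)$) makes the quadratic term absorb at most half of the linear drift, so that $\E\mleft[e^{\zeta Y_{t+1}}\mid\mathcal{F}_t\mright]\leq (1-\tfrac{\zeta\rho}{2})e^{\zeta Y_t}$ on the event $\{Y_t\geq\varphi\}$.

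For the complementary case $Y_t<\varphi$ there is no drift help, but $e^{\zeta Y_{t+1}}\leq e^{\zeta\varphi}e^{\zeta\Delta Y_t}$, and since $\zeta\leq\theta$ the concavity of $x\mapsto x^{\zeta/\theta}$ together with Jensen's inequality and condition (ii) give $\E\mleft[e^{\zeta\Delta Y_t}\mid\mathcal{F}_t\mright]\leq\E\mleft[e^{\zeta|\Delta Y_t|}\mid\mathcal{F}_t\mright]\leq\big(\E\mleft[e^{\theta|\Delta Y_t|}\mid\mathcal{F}_t\mright]\big)^{\zeta/\theta}\leq M^{\zeta/\theta}$, hence $\E\mleft[e^{\zeta Y_{t+1}}\mid\mathcal{F}_t\mright]\leq e^{\zeta\varphi}M^{\zeta/\theta}$ on $\{Y_t<\varphi\}$. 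Combining both cases, taking total expectation, and bounding each indicator by $1$ yields the recursion $a_{t+1}\leq(1-\tfrac{\zeta\rho}{2})a_t + e^{\zeta\varphi}M^{\zeta/\theta}$. Unrolling from $t=1$ and summing the geometric series $\sum_{j\geq 0}(1-\tfrac{\zeta\rho}{2})^j = \tfrac{2}{\zeta\rho}$ produces exactly the claimed bound.

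I expect the quadratic-remainder estimate in the first case to be the main obstacle, as it is the step that forces $\zeta\leq\theta/2$ and pins down both the admissible range of $\zeta$ and the constant $8M$ appearing in the hypothesis; the factor $e^{-2}$ from the maximization is precisely the slack that makes the stated threshold $\rho\theta^2/(8M)$ admissible.
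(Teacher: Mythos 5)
Your proposal is correct and follows essentially the same route as the paper's proof: the same two-case split on $\{Y_t\geq\varphi\}$ versus $\{Y_t<\varphi\}$, the same expansion $e^x\leq 1+x+\tfrac{x^2}{2}e^{|x|}$ with drift condition (i) for the first case, the same Jensen-plus-condition-(ii) bound $e^{\zeta\varphi}M^{\zeta/\theta}$ for the second, and the same scalar recursion $a_{t+1}\leq(1-\tfrac{\zeta\rho}{2})a_t + e^{\zeta\varphi}M^{\zeta/\theta}$ unrolled into the stated bound. The only (immaterial) difference is how the quadratic remainder is absorbed: you maximize $u^2e^{-(\theta-\zeta)u}$ to get the constant $\tfrac{16M}{e^2\theta^2}$, whereas the paper uses $x^2\leq\tfrac{2}{\alpha^2}e^{\alpha x}$ with $\alpha=\theta/2$ to get $\tfrac{8M}{\theta^2}$; both are compatible with the threshold $\zeta\leq\tfrac{\rho\theta^2}{8M}$, yours with extra slack $e^2/2$.
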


We use Lemma~\ref{lemma:drift_lemma} to derive the following instantaneous moment bound on the Lyapunov function $\widetilde{L}_t$.

\begin{lemma}
\label{lemma:lyapunov_function_bound}
Under the age-based learning algorithm, for all timeslots $t \geq 1$, we have 
\final{
\begin{equation}
\E\mleft[ \widetilde{L}_t \mright] \leq  \frac{8\log(4/\gamma)}{\gamma} f\mleft( (\chi_k)_{k=1}^K \mright) + \frac{4}{\gamma}\sum_{k=1}^K \frac{1}{\chi_k \overline{x}_k } + \frac{4K\eta}{\gamma}
\end{equation}
}
where the function $f : (0,1)^K \to (0, \infty)$ is given by
\final{
\begin{equation}
\begin{aligned}
&f(\mathbf{y}) = \min_{0 < \theta \leq h(\mathbf{y})}\left[ \frac{9 g(\mathbf{y},\theta)}{\theta^2}\log\mleft(\frac{32 g(\mathbf{y},\theta)}{\theta^2}\mright) \right] \\
% &\text{where } \textstyle g(\mathbf{y},\theta) = \prod_{k=1}^K\frac{y_k\, e^{2\theta/\overline{x}_k}}{1 - (1-y_k)e^{2\theta/\overline{x}_k}}, \\
% &\textstyle h(\mathbf{y}) = -\frac{\overline{x}_{\min}}{2} \log\mleft(1-\min_{k\in [K]}y_k\mright), \text{ and } \overline{x}_{\min} \triangleq \min_k \overline{x}_k.
\end{aligned}
\end{equation}
where $h(\mathbf{y}) = -\frac{\overline{x}_{\min}}{2} \log\mleft(1-\min_{k\in [K]}y_k\mright)$, $g(\mathbf{y},\theta) = \prod_{k=1}^K\frac{y_k\, e^{2\theta/\overline{x}_k}}{1 - (1-y_k)e^{2\theta/\overline{x}_k}}$, and $\overline{x}_{\min} \triangleq \min_k \overline{x}_k$.
}
\end{lemma}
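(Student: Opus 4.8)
The plan is to apply the drift lemma (Lemma~\ref{lemma:drift_lemma}) to the process $Y_t = \widetilde{L}_t$ with the filtration $\mathcal{F}_t$ generated by the age-augmented history $\mathcal{H}_t^\to$ (and the action $A_t$, which is $\mathcal{F}_t$-measurable), and then convert the resulting moment-generating-function bound into a first-moment bound via the elementary inequality $\widetilde{L}_t \le \frac{1}{\zeta} e^{\zeta \widetilde{L}_t}$. The bulk of the work is verifying the two drift conditions, which I would carry out through the quadratic Lyapunov function $L_t$ and the connection $\Delta \widetilde{L}_t \le \Delta L_t / \widetilde{L}_t$ from \eqref{eq:lyapunov_function_connection}.

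First I would establish the negative-drift condition (i). Expanding $\Delta L_t = \sum_k \frac{\chi_k+\varepsilon}{\overline{x}_k}\big(Z_{k,t}^\to \Delta Z_{k,t}^\to + \tfrac12 (\Delta Z_{k,t}^\to)^2\big)$ and substituting the age dynamics of Lemma~\ref{lemma:head_of_line_age}, the dominant term is $\sum_k \frac{\chi_k+\varepsilon}{\overline{x}_k} Z_{k,t}^\to \E[\Delta Z_{k,t}^\to \mid \mathcal{F}_t]$. By Remark~\ref{remark:age_based_arrival_indepdendence} the head-of-line interarrival time $\Delta\uptau_k(\ell_{k,t}^\to)$ is independent of $\mathcal{F}_t$, so the expected decrement $\E[\min\{Z_{k,t}^\to + 1, \Delta\uptau_k(\ell_{k,t}^\to)\}\mid\mathcal{F}_t]$ equals the truncated-geometric mean $\frac{1-(1-\chi_k-\varepsilon)^{Z_{k,t}^\to + 1}}{\chi_k+\varepsilon}$, whose leading factor cancels the $(\chi_k+\varepsilon)$ weight and, since $\E[X_k(S_t)] = \overline{x}_k$, reduces the linear term to $\sum_k\big(\frac{\chi_k+\varepsilon}{\overline{x}_k}Z_{k,t}^\to - Z_{k,t}^\to I_k(A_t)\big)$ up to an exponentially small correction. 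I would then invoke the MaxWeight property of \eqref{eq:age_based_algorithm_def} against the $\gamma$-tight static policy \eqref{eq:gamma_tight_static_policy}: comparing $A_t$ with $A_t^\gamma$ gives $\sum_k Z_{k,t}^\to I_k(A_t) \ge \sum_k \frac{\chi_k+\gamma}{\overline{x}_k} Z_{k,t}^\to - \eta K$, so the linear drift is at most $-(\gamma-\varepsilon)\sum_k \frac{Z_{k,t}^\to}{\overline{x}_k} + \eta K$. Since $\varepsilon \le \gamma/2$, and using $\|\cdot\|_2 \le \|\cdot\|_1$ with $\overline{x}_k \le 1$ and $\chi_k + \varepsilon \le 2$ to lower-bound the weighted $\ell_1$ sum by a constant multiple of $\widetilde{L}_t$, I obtain $\E[\Delta L_t \mid \mathcal{F}_t] \le -\Theta(\gamma)\widetilde{L}_t + O\big(\eta K + \sum_k \tfrac{1}{\chi_k \overline{x}_k}\big)$, where the second-order term $\tfrac12\sum_k \frac{\chi_k+\varepsilon}{\overline{x}_k}\E[(\Delta Z_{k,t}^\to)^2\mid\mathcal{F}_t]$ is absorbed via $\E[(\Delta\uptau_k)^2] = O(1/\chi_k^2)$. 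Dividing by $\widetilde{L}_t$ through \eqref{eq:lyapunov_function_connection} yields condition (i) with $\rho = \Theta(\gamma)$ and threshold $\varphi = O\big(\tfrac1\gamma(\eta K + \sum_k \tfrac{1}{\chi_k\overline{x}_k})\big)$.

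Next I would verify the light-tail condition (ii), which is the key departure from prior queue-length analyses whose drift is absolutely bounded. By the triangle inequality $|\Delta \widetilde{L}_t| \le \sqrt{\sum_k \frac{\chi_k+\varepsilon}{\overline{x}_k}(\Delta Z_{k,t}^\to)^2} \le \sum_k \sqrt{\frac{\chi_k+\varepsilon}{\overline{x}_k}}\,|\Delta Z_{k,t}^\to|$, and Lemma~\ref{lemma:head_of_line_age} gives $|\Delta Z_{k,t}^\to| \le \Delta\uptau_k(\ell_{k,t}^\to)$, so the drift magnitude is dominated by a sum of independent geometric interarrival times. Bounding $\sqrt{(\chi_k+\varepsilon)/\overline{x}_k} \le 2/\overline{x}_k$ and using that a $\mathrm{Geometric}(\chi_k+\varepsilon)$ is stochastically dominated by a $\mathrm{Geometric}(\chi_k)$, the conditional MGF factorizes into a product of geometric MGFs, giving exactly $\E[e^{\theta |\Delta \widetilde{L}_t|} \mid \mathcal{F}_t] \le g((\chi_k)_k, \theta)$, which is finite precisely on the range $\theta \le h((\chi_k)_k)$ where $(1-\chi_k)e^{2\theta/\overline{x}_k} < 1$ for all $k$. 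This furnishes $M = g((\chi_k)_k, \theta)$ in Lemma~\ref{lemma:drift_lemma}, and explains the appearance of the $\varepsilon$-free functions $g$ and $h$ in the statement.

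Finally, with both conditions in hand, I would apply Lemma~\ref{lemma:drift_lemma}, use $\widetilde{L}_1 = 0$ (empty initial queues) to discard the transient term, take $\zeta$ proportional to $\rho\theta^2/M$, and bound $\E[\widetilde{L}_t] \le \tfrac1\zeta \E[e^{\zeta \widetilde{L}_t}]$; the residual term $\frac{2}{\zeta\rho}e^{\zeta\varphi}M^{\zeta/\theta}$ then produces the $\frac{8\log(4/\gamma)}{\gamma} f((\chi_k)_k)$ contribution after minimizing the free parameter $\theta$ over $(0, h((\chi_k)_k)]$, while $\varphi$ and the second-order constants yield the $\frac{4K\eta}{\gamma}$ and $\frac{4}{\gamma}\sum_k \frac{1}{\chi_k\overline{x}_k}$ terms. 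The main obstacle I anticipate is the negative-drift step: correctly handling the nonlinear $\min$-decrement (through the truncated-geometric mean and the arrival independence of Remark~\ref{remark:age_based_arrival_indepdendence}) so that the $(\chi_k+\varepsilon)$ weights cancel cleanly and the MaxWeight/Slater comparison delivers a genuine $-\Theta(\gamma)\widetilde{L}_t$ bound, since unlike the queue-length case the age decrement is neither bounded nor linear in the state.
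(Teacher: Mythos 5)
Your proposal follows the paper's own route almost step for step: the same drift analysis of the quadratic Lyapunov function $L_t$ using the age dynamics of Lemma~\ref{lemma:head_of_line_age} and the arrival independence of Remark~\ref{remark:age_based_arrival_indepdendence}, the same MaxWeight-versus-$\gamma$-tight-Slater comparison yielding $\rho = \Theta(\gamma)$ and $\varphi = \Theta\big(\tfrac{1}{\gamma}(\eta K + \sum_k \tfrac{1}{\chi_k \overline{x}_k})\big)$, the same reverse-triangle-inequality-plus-geometric-MGF verification of the light-tail condition with $M = g((\chi_k)_{k=1}^K, \theta)$, and the same appeal to Lemma~\ref{lemma:drift_lemma}. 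Your handling of the $\min$-decrement via the truncated-geometric mean is a minor variation on the paper, which instead uses the sample-path relaxation $Z_{k,t+1}^\to \le \big[ Z_{k,t}^\to + 1 - D_{k,t}\Delta\uptau_k(\ell_{k,t}^\to) \big]^+$ (its Lemma~\ref{lemma:elementary_drift_bound}); both work, since your correction term $\sum_k I_k(A_t)\, Z_{k,t}^\to (1-\chi_k-\varepsilon)^{Z_{k,t}^\to + 1}$ is uniformly bounded in $Z_{k,t}^\to$ and can be absorbed into $\varphi$.

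However, your final step has a genuine gap: converting the MGF bound into a moment bound via the pointwise inequality $\widetilde{L}_t \le \frac{1}{\zeta}e^{\zeta \widetilde{L}_t}$ cannot deliver the claimed result. Under that inequality the residual of Lemma~\ref{lemma:drift_lemma} enters multiplicatively, giving $\E\big[ \widetilde{L}_t \big] \le \frac{1}{\zeta}\left( (1-\zeta\rho/2)^{t-1} + \frac{2}{\zeta\rho}e^{\zeta\varphi}M^{\zeta/\theta} \right)$, and with $\zeta = \Theta(\rho\theta^2/M)$ fixed while $\varphi$ grows linearly in $\eta$, the factor $e^{\zeta\varphi} = e^{\Theta(\theta^2 \eta K / M)}$ is \emph{exponential} in $\eta$, whereas the lemma asserts the linear term $\frac{4K\eta}{\gamma}$. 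Shrinking $\zeta$ to tame this exponential (e.g. $\zeta \asymp 1/\varphi$) inflates the prefactor $\frac{2}{\zeta^2\rho}$ to $\Theta(\eta^2)$, so no choice of $\zeta$ rescues the argument. The paper instead applies Jensen's inequality, $\E\big[ \widetilde{L}_t \big] \le \frac{1}{\zeta}\log \E\big[ e^{\zeta \widetilde{L}_t} \big]$, under which the residual decomposes additively as $\frac{1}{\zeta}\log\frac{4}{\zeta\rho} + \varphi + \frac{1}{\theta}\log M$; this is exactly what makes $\varphi$ (hence the $\eta K$ contribution) appear linearly and produces the $f\big( (\chi_k)_{k=1}^K \big)$ term after optimizing over $\theta$. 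Everything else in your proposal goes through; replacing the elementary exponential bound with Jensen's inequality at this single point completes the proof.
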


\begin{hproof}
We derive the necessary drift conditions to apply Lemma~\ref{lemma:drift_lemma} to $\widetilde{L}_t$. Define $\rho \triangleq \frac{\gamma - \varepsilon}{2}$ and 
\final{
$\varphi \triangleq \frac{1}{\rho}\left(\eta K + \sum_{k=1}^K \frac{1}{\overline{x}_k \chi_k }\right)$.
}
Using Lemma~\ref{lemma:head_of_line_age}, Remark~\ref{remark:age_based_arrival_indepdendence}, and the definition of the age-based bandit learning algorithm \eqref{eq:age_based_algorithm_def}, after much work, we can obtain $\E\mleft[ \Delta L_t \mid \mathcal{H}_t^\to \mright] \leq -2\rho \widetilde{L}_t + \rho \varphi$. Then from \eqref{eq:lyapunov_function_connection}, we obtain drift condition (i): $\widetilde{L}_t > \varphi \implies \E\mleft[ \Delta \widetilde{L}_t \mid \mathcal{H}_t^\to \mright] \leq  -\rho$. For drift condition (ii), define \final{$\theta_{\max} \triangleq -\frac{\overline{x}_{\min}}{2} \log(1-\chi_{\min})$}  and consider $\theta \in (0,\theta_{\max}]$ and \final{$M \triangleq \prod_{k=1}^K \frac{\chi_k\, e^{2\theta/\overline{x}_k}}{1 - (1-\chi_k)e^{2\theta/\overline{x}_k}}$}. Then the drift condition $\E\mleft[ e^{\theta \left| \Delta \widetilde{L}_t \right|} \mid \mathcal{H}_t^\to \mright] \leq M$ can be obtained using the reverse triangle inequality and Lemma~\ref{lemma:head_of_line_age}. Then we apply Lemma~\ref{lemma:drift_lemma} to get the MGF bound, and apply Jensen's inequality and simplify to get the final moment bound. The full proof is omitted due to space limitations.
\end{hproof}
We are now ready to derive the throughput guarantees for the age-based bandit learning policy.

\begin{theorem}
\label{theorem:safety}
The age-based  learning policy \eqref{eq:age_based_algorithm_def} achieves  zero throughput constraint violation, i.e. $\E\mleft[\overline{R}_{k,t}[W] \mright] \geq \chi_k$ for all $t \geq W$ if the window size 
\final{$W$ satisfies}
% is
\final{
\begin{equation}
W \geq  \frac{8\log(4/\gamma)}{\gamma\varepsilon} f\mleft( (\chi_k)_{k=1}^K \mright) + \frac{4}{\gamma\varepsilon}\sum_{k=1}^K \frac{1}{\chi_k \overline{x}_k } + \frac{4K\eta}{\gamma\varepsilon} + \frac1\varepsilon.
\end{equation}
}
where the function $f$ is given in the statement of Lemma~\ref{lemma:lyapunov_function_bound}.
\end{theorem}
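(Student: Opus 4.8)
The plan is to chain together the per-arm sufficient condition already established in Corollary~\ref{corollary:age_is_all_you_need} with the uniform moment bound of Lemma~\ref{lemma:lyapunov_function_bound}. Corollary~\ref{corollary:age_is_all_you_need} tells us that, for a fixed arm $k$, zero throughput constraint violation holds at all $t \geq W$ as soon as $W \geq \frac{1}{\varepsilon}\left((\chi_k + \varepsilon)\E[Z_{k,t+1}^\to] + 1\right)$. Hence the entire task reduces to producing a single threshold for $W$ that dominates this requirement simultaneously for every arm $k$ and every timeslot $t$, which in turn amounts to upper bounding $(\chi_k + \varepsilon)\E[Z_{k,t+1}^\to]$ by a quantity independent of both $k$ and $t$.

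The key linking step is to dominate each weighted head-of-line age by the Euclidean-norm Lyapunov function $\widetilde{L}_{t+1}$. Since $\widetilde{L}_{t+1} = \|(\sqrt{(\chi_j + \varepsilon)/\overline{x}_j}\,Z_{j,t+1}^\to)_{j=1}^K\|_2$ is at least any one of its coordinates, we have $\sqrt{(\chi_k + \varepsilon)/\overline{x}_k}\,Z_{k,t+1}^\to \leq \widetilde{L}_{t+1}$, i.e. $(\chi_k + \varepsilon)Z_{k,t+1}^\to \leq \sqrt{(\chi_k + \varepsilon)\overline{x}_k}\,\widetilde{L}_{t+1}$. Here I would invoke that the delivery-request arrival probability is a valid Bernoulli parameter, $\chi_k + \varepsilon \leq 1$ (guaranteed under Slater's condition \eqref{eq:gamma_tight_static_policy}, since $\overline{x}_k\E[I_k(A_t^\gamma)] \geq \chi_k + \gamma$ forces $\chi_k + \gamma \leq 1$), together with $\overline{x}_k \leq 1$, to conclude $\sqrt{(\chi_k + \varepsilon)\overline{x}_k} \leq 1$ and hence the clean pointwise bound $(\chi_k + \varepsilon)Z_{k,t+1}^\to \leq \widetilde{L}_{t+1}$. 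Taking expectations removes the remaining dependence on the sample path.

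Finally, I would substitute the instantaneous moment bound of Lemma~\ref{lemma:lyapunov_function_bound}, which holds uniformly in $t \geq 1$, to obtain $(\chi_k + \varepsilon)\E[Z_{k,t+1}^\to] \leq \E[\widetilde{L}_{t+1}] \leq \frac{8\log(4/\gamma)}{\gamma} f((\chi_k)_{k=1}^K) + \frac{4}{\gamma}\sum_{k=1}^K \frac{1}{\chi_k \overline{x}_k} + \frac{4K\eta}{\gamma}$, a bound independent of both $k$ and $t$. Plugging this into the Corollary's condition and carrying along the stray $\frac{1}{\varepsilon}$ term yields exactly the stated window size, completing the argument. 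The genuinely hard analytical work lives in Lemma~\ref{lemma:lyapunov_function_bound} (the drift computation feeding the generalized drift bound of Lemma~\ref{lemma:drift_lemma}), which I am assuming; within this theorem the only subtle point is the normalization $(\chi_k + \varepsilon)\overline{x}_k \leq 1$, which makes the norm-to-coordinate comparison lossless so that the $\frac{1}{\varepsilon}$-scaled moment bound transfers verbatim into the window-size threshold.
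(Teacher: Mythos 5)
Your proposal is correct and takes essentially the same route as the paper's proof: the coordinate-to-norm comparison $(\chi_k+\varepsilon)\E\mleft[Z_{k,t+1}^\to\mright] \leq \sqrt{(\chi_k+\varepsilon)/\overline{x}_k}\,\E\mleft[Z_{k,t+1}^\to\mright] \leq \E\mleft[\widetilde{L}_{t+1}\mright]$ (valid because $(\chi_k+\varepsilon)\overline{x}_k \leq 1$), followed by the uniform moment bound of Lemma~\ref{lemma:lyapunov_function_bound} and then Corollary~\ref{corollary:age_is_all_you_need}. Your explicit check that $\chi_k + \varepsilon \leq 1$ follows from Slater's condition is a detail the paper leaves implicit, but otherwise the arguments coincide.
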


\begin{proof}
\final{
Note that we have ${(\chi_k + \varepsilon)\E\mleft[Z_{k,t+1}^\to\mright]} \leq \sqrt{(\chi_k + \varepsilon)/\overline{x}_k}\E\mleft[Z_{k,t+1}^\to\mright] \leq \E\mleft[ \widetilde{L}_{t+1} \mright]$. } Then the result follows from Lemma~\ref{lemma:lyapunov_function_bound} and Corollary~\ref{corollary:age_is_all_you_need}.
\end{proof}

We would also like to point out in the following remark that the age-based bandit learning policy can achieve a data freshness guarantee of $\mathbb{T}_{k,t} = O(\eta)$ for all $k$ and $t$, although it isn't specifically designed to optimize the TSLR $\mathbb{T}_{k,t}$. It's also worth noting that queue-length-based policies (e.g. \cite{li2019}) can also achieve similar guarantees (see Lemma 1 in \cite{wu2024}).
% , but recall that the age-based policy achieves empirically better performance on this metric (see Section~\ref{section:empirical_performance}).

\begin{remark}[TSLR guarantees for the age-based policy]
\label{remark:tslr_guarantee}
Fix arm $k$ and define the \textit{``pseudo-time-since-last-reward"} $\widehat{\mathbb{T}}_{k,t}$ by the initial condition $\widehat{\mathbb{T}}_{k,1} = 0$ and update
\begin{equation}
\label{eq:time_since_last_departure}
\textstyle \widehat{\mathbb{T}}_{k,t+1} = \begin{cases}
\widehat{\mathbb{T}}_{k,t} + 1 & R_k(S_t,A_t) = 0 \land \uptau_k(\ell_{k,t}^\to) \leq t \\
0 & \text{otherwise}. \\
\end{cases}
\end{equation}
It can be shown by induction that $\widehat{\mathbb{T}}_{k,t} \leq Z_{k,t}^\to$ and $\mathbb{T}_{k,t}- \widehat{\mathbb{T}}_{k,t} \leq \Delta\uptau_k(\ell_{k,t}^\to-1)$ for all $t$. Therefore, we have the sample path bound $\mathbb{T}_{k,t} \leq {Z_{k,t}^\to} + \Delta\uptau_k(\ell_{k,t}^\to-1)$. Remark~\ref{remark:age_based_arrival_indepdendence} can be used to show that $\E\mleft[\Delta\uptau_k(\ell_{k,t}^\to-1)\mright] = \frac{1}{\chi_k + \varepsilon}$. Then taking expectations on our sample path bound gives that $\E\mleft[ \mathbb{T}_{k,t} \mright] \leq \E\mleft[ {Z_{k,t}^\to} \mright] + \frac{1}{\chi_k + \varepsilon}$, and finally, Lemma~\ref{lemma:lyapunov_function_bound} shows that $\E\mleft[ \mathbb{T}_{k,t} \mright] = O(\eta)$ for all $k$ and $t$.
\end{remark}

Finally, we derive the regret bound for the age-based bandit learning policy using the ``drift-plus-regret" technique introduced by \cite{li2019}.

\begin{theorem}
\label{theorem:regret}
The age-based learning policy  \eqref{eq:age_based_algorithm_def} suffers regret
\final{
\begin{equation}
\begin{aligned}
\mathrm{Reg}_T 
&\leq \frac{\varepsilon K T}{\gamma} + \frac{T}{\eta}\sum_{k=1}^K \frac{1}{\chi_k \overline{x}_k } + \underbrace{\sqrt{56 K I_{\max} T\log T} + \frac{K\pi^2}{3} }_{\text{from standard UCB regret analysis}} \\
\end{aligned}
\end{equation}
}
where $I_{\max} \triangleq \max_{a \in\mathcal{A}} \sum_{k=1}^K I_k(a)$.
\end{theorem}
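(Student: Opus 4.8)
The plan is to apply the ``drift-plus-regret'' technique of \cite{li2019}, pairing the weighted quadratic Lyapunov function $L_t$ with the optimistic UCB estimates. The central device is to compare, inside the per-slot MaxWeight inequality, against an auxiliary static policy that is simultaneously feasible-with-margin (so the Lyapunov drift it induces is nonpositive) and nearly reward-optimal. Concretely, I would introduce the mixture $\tilde\sigma \triangleq (1-\varepsilon/\gamma)\sigma^* + (\varepsilon/\gamma)\sigma^\gamma$, which is well defined since $\varepsilon \leq \gamma/2 < \gamma$, and let $\tilde A_t \sim \tilde\sigma$ be drawn independently of $\mathcal{H}_t^\to$. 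By Lemma~\ref{lemma:existence_of_static_policy}(ii) (which gives $\overline{x}_k\E[I_k(A_t^*)]\geq\chi_k$) and the Slater condition \eqref{eq:gamma_tight_static_policy}, this mixture satisfies $\overline{x}_k \E[I_k(\tilde A_t)] \geq (1-\tfrac{\varepsilon}{\gamma})\chi_k + \tfrac{\varepsilon}{\gamma}(\chi_k+\gamma) = \chi_k + \varepsilon$ for every $k$, which is precisely the margin needed to annihilate the linear drift term.

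First I would establish a per-slot drift-plus-penalty inequality of the form
\[
\E[\Delta L_t \mid \mathcal{H}_t^\to] \leq \sum_k \tfrac{1}{\chi_k\overline{x}_k} + \sum_k \tfrac{\chi_k+\varepsilon}{\overline{x}_k} Z_{k,t}^\to - \sum_k Z_{k,t}^\to I_k(A_t).
\]
This is the drift computation that already underlies Lemma~\ref{lemma:lyapunov_function_bound}: expand $\Delta L_t$ into a quadratic term plus the linear term $\sum_k \tfrac{\chi_k+\varepsilon}{\overline{x}_k}Z_{k,t}^\to\Delta Z_{k,t}^\to$, substitute $\Delta Z_{k,t}^\to$ from Lemma~\ref{lemma:head_of_line_age}, use the arrival-independence of Remark~\ref{remark:age_based_arrival_indepdendence} to push the conditional expectation through the Geometric interarrival time, bound the quadratic term and the $\min$-decrement correction by the constant $\sum_k \tfrac{1}{\chi_k\overline{x}_k}$ (using the light tail of the Geometric), and recognize the surviving service contribution as $-\sum_k Z_{k,t}^\to I_k(A_t)$. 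I would then invoke the algorithm's defining maximization \eqref{eq:age_based_algorithm_def} at $a=\tilde A_t$, i.e. $\sum_k(\eta U_{k,t}+Z_{k,t}^\to)I_k(A_t) \geq \sum_k(\eta U_{k,t}+Z_{k,t}^\to)I_k(\tilde A_t)$, to replace $-\sum_k Z_{k,t}^\to I_k(A_t)$ by $-\sum_k Z_{k,t}^\to I_k(\tilde A_t) + \eta\sum_k U_{k,t}(I_k(A_t)-I_k(\tilde A_t))$. Taking full expectations and using that $\tilde A_t$ is independent of $\mathcal{H}_t^\to$, the term $\sum_k \E[Z_{k,t}^\to]\mathopen(\tfrac{\chi_k+\varepsilon}{\overline{x}_k} - \E[I_k(\tilde A_t)]\mathclose)$ is $\leq 0$ by the margin property, leaving $\E[\Delta L_t] \leq \sum_k\tfrac{1}{\chi_k\overline{x}_k} + \eta\,\E[\sum_k U_{k,t}(I_k(A_t)-I_k(\tilde A_t))]$.

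Next I would sum over $t\in[T]$ and telescope. Since $L_1=0$ (the queues start empty, so $Z_{k,1}^\to=0$) and $L_{T+1}\geq 0$, rearranging gives $\sum_t \E[\sum_k U_{k,t}(I_k(\tilde A_t)-I_k(A_t))] \leq \tfrac{T}{\eta}\sum_k \tfrac{1}{\chi_k\overline{x}_k}$. I then split the regret as $\mathrm{Reg}_T = \sum_t\sum_k \overline{x}_k\E[I_k(A_t^*)-I_k(\tilde A_t)] + \sum_t\sum_k\overline{x}_k\E[I_k(\tilde A_t)-I_k(A_t)]$. The first (``substitution'') sum equals $\tfrac{\varepsilon}{\gamma}\sum_t\sum_k\overline{x}_k\E[I_k(A_t^*)-I_k(A_t^\gamma)] \leq \tfrac{\varepsilon}{\gamma}\sum_t\sum_k\E[I_k(A_t^*)] \leq \tfrac{\varepsilon K T}{\gamma}$, the first term of the bound. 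For the second sum, UCB optimism on the high-probability event $\{U_{k,t}\geq\overline{x}_k\}$ gives $\overline{x}_k I_k(\tilde A_t)\leq U_{k,t}I_k(\tilde A_t)$ and $\overline{x}_k I_k(A_t)\geq U_{k,t}I_k(A_t) - 2\sqrt{\tfrac{3\log t}{2N_{k,t}}}I_k(A_t)$, so it is at most $\sum_t\E[\sum_k U_{k,t}(I_k(\tilde A_t)-I_k(A_t))]$ (already bounded by $\tfrac{T}{\eta}\sum_k\tfrac{1}{\chi_k\overline{x}_k}$) plus the confidence-width sum. The latter is controlled by the standard pigeonhole estimate $\sum_t I_k(A_t)/\sqrt{N_{k,t}}\leq 2\sqrt{N_{k,T}}$ followed by Cauchy--Schwarz with $\sum_k N_{k,T}\leq I_{\max}T$, yielding $\sqrt{56 K I_{\max} T\log T}$, while the failure event $\{U_{k,t}<\overline{x}_k\}$ has probability $O(t^{-2})$ and contributes $\tfrac{K\pi^2}{3}$ via $\sum_t t^{-2}\leq \pi^2/6$.

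The main obstacle is the per-slot drift bound in the first step: unlike the queue-length case, $\Delta Z_{k,t}^\to$ carries the $\min\{Z_{k,t}^\to+1,\Delta\uptau_k(\ell_{k,t}^\to)\}$ decrement, so computing $\E[\Delta L_t\mid\mathcal{H}_t^\to]$ requires taking the expectation through the Geometric interarrival time (legitimized by Remark~\ref{remark:age_based_arrival_indepdendence}) and then showing that the resulting service term $-\sum_k Z_{k,t}^\to I_k(A_t)\mathopen(1-(1-(\chi_k+\varepsilon))^{Z_{k,t}^\to+1}\mathclose)$ differs from the clean term $-\sum_k Z_{k,t}^\to I_k(A_t)$ only by a bounded constant, which is the source of the $\sum_k\tfrac{1}{\chi_k\overline{x}_k}$. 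This is exactly the ``after much work'' drift estimate behind Lemma~\ref{lemma:lyapunov_function_bound}, now retained in a finer form that keeps the reward terms rather than collapsing them into $\varphi$. By contrast, the genuinely new bookkeeping --- choosing the mixture weight $\varepsilon/\gamma$ so that feasibility-with-margin and near-optimality hold at once, and interfacing the telescoped drift with the standard UCB machinery --- is routine once the drift inequality is in hand.
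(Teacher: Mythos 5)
Your proposal is correct and follows essentially the same route as the paper's proof: the same $\varepsilon$-tight mixture $\sigma^\varepsilon = (1-\varepsilon/\gamma)\sigma^* + (\varepsilon/\gamma)\sigma^\gamma$ with the margin property $\overline{x}_k\E\mleft[I_k(A_t^\varepsilon)\mright] \geq \chi_k+\varepsilon$, the same drift-plus-penalty inequality (the paper packages it as Lemma~\ref{lemma:elementary_drift_bound} plus the expectations of $\mathtt{t2}$ and $\mathtt{t3}$ via Remark~\ref{remark:interarrival_times}), the same insertion of the nonnegative MaxWeight gap to trade age terms for UCB terms, the same telescoping with $L_1=0$, $L_{T+1}\geq 0$, and the same Hoeffding/pigeonhole/Cauchy--Schwarz bookkeeping giving $K\pi^2/3$ and $\sqrt{56KI_{\max}T\log T}$. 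The only cosmetic difference is inside the drift estimate: rather than keeping the $\min$-decrement and bounding your residual $\sum_k I_k(A_t)Z_{k,t}^\to(1-(\chi_k+\varepsilon))^{Z_{k,t}^\to+1}$ by a constant (which would slightly inflate the $T/\eta$ coefficient), the paper over-decrements with the full geometric interarrival time via $Z_{k,t+1}^\to \leq \left[ Z_{k,t}^\to + 1 - D_{k,t}\Delta\uptau_k(\ell_{k,t}^\to)\right]^+$, so the service cross-term is exactly $-\sum_k I_k(A_t)Z_{k,t}^\to$ in conditional expectation and the additive constant is exactly $\sum_k 1/(\chi_k\overline{x}_k)$.
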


\begin{hproof}
Define the \textit{$\varepsilon$-tight static policy} $\sigma^\varepsilon$ as a convex combination of the optimal static policy $\sigma^*$ and the $\gamma$-tight static policy $\sigma^\gamma$ as $\sigma^\varepsilon(a) \triangleq \left( 1 - \frac{\varepsilon}{\gamma} \right) \sigma^*(a) + \frac{\varepsilon}{\gamma} \sigma^\gamma(a)$ for all actions $a \in \mathcal{A}$. By the linearity of expectation, it can be shown that $\mathrm{Reg}^{\sigma_\varepsilon}_T \leq \frac{\varepsilon  K T}{\gamma}$. Then it remains to bound the \textit{``$\varepsilon$-tight regret"} $\varepsilon\text{-}\mathrm{Reg}_T \triangleq \mathrm{Reg}_T - \mathrm{Reg}^{\sigma_\varepsilon}_T$. Using Lemma~\ref{lemma:existence_of_static_policy}, Lemma~\ref{lemma:head_of_line_age},  Remark~\ref{remark:age_based_arrival_indepdendence}, and the definition of the Algorithm~\ref{alg:age_based}, after much work, we obtain the drift bound  \final{$\E\mleft[\Delta L_t \mright] \leq \eta \sum_{k=1}^K \E\mleft[ (I_k(A_t) - I_k(A_t^\varepsilon))U_{k,t} \mright] + \sum_{k=1}^K \frac{1}{\chi_k \overline{x}_k}$}. Next we apply the \textit{``drift-plus-regret"} technique: Dividing by $\eta$, summing over $t \in [T]$, adding $\varepsilon\text{-}\mathrm{Reg}_T$ to both sides and noticing that $\sum_{t=1}^T\E\mleft[\Delta L_t \mright] = \E\mleft[ L_{T+1}  \mright] \geq 0$ gives 
\final{
\begin{equation}
\begin{aligned}
\varepsilon\text{-}\mathrm{Reg}_T 
&\leq \textstyle \sum_{t=1}^T\sum_{k=1}^K \E\mleft[ R_{k}(S_t, A_t^\varepsilon) - I_k(A_t^\varepsilon)\, U_{k,t} \mright] \\
&\hspace{-0.2cm}+ \sum_{t=1}^T\sum_{k=1}^K \E\mleft[ I_k( A_t)\, U_{k,t} - R_{k}(S_t, A_t)\mright] + \frac{T}{\eta}\sum_{k=1}^K \frac{1}{\chi_k \overline{x}_k }.
\end{aligned}
\end{equation}
}
The remaining terms can be dealt with using standard UCB regret analysis. We omit the full proof due to space limitations. 
\end{hproof}

% In the next section, we investigate what happens when the i.i.d. channel conditions assumed in this section are violated and change abruptly. 

\section{Robustness of the Age-Based Bandit Learning Policy under Abruptly Changing Conditions}
\label{section:robustness_of_age}

In a real wireless network, the channel conditions may not be perfectly i.i.d. and may suffer from abrupt drops in quality due to weather, congestion, etc. Therefore, in this section, we investigate what happens when the channel quality of a single link abruptly degrades, rendering its short-term throughput constraint infeasible. 

\subsection{Experimental Setup}
\label{section:experimental_setup}

To investigate this phenomenon, we consider a simple network with $K = 2$ nodes (bandit arms) where only one node can be scheduled per timeslot.
% , i.e. the set of actions is $\mathcal{A} = \{ \{1\} , \{ 2\} \}$. 
For timeslots $t \in \{1, 2, \ldots , \lceil T/6 \rceil - 1 \}$, both links have channel rates $\E\mleft[ X_1(S_t) \mright] = \E\mleft[ X_2(S_t) \mright] = 0.9$. 
% The initial channel rate estimates for both nodes are accurate, i.e. $\widehat{x}_1 = \widehat{x}_2 = 0.9$, and using the throughput requirement selection criterion \eqref{eq:requirement_selection_criteria}, 
The 
% controller selects the throughput requirements 
\final{throughput requirements are}
$\chi_1 = 0.8$ and $\chi_2 = 0.1$, which are initially feasible. However, at timeslot $t = \lceil T/6 \rceil$, the channel quality for link $k=1$ suddenly drops to $\E\mleft[ X_1(S_t) \mright] = 0.5$ and remains that way for all timeslots $t \in \{ \lceil T/6 \rceil, \ldots , \lceil 2T/3 \rceil -1 \}$. This renders the short-term throughput constraint for arm $k=1$ infeasible! At timeslot $t = \lceil 2T/3 \rceil$, the channel quality for link $k=1$ finally improves back to $\E\mleft[ X_1(S_t) \mright] = 0.9$ and remains that way for all timeslots $t \in \{ \lceil 2T/3 \rceil, \ldots , T \}$.

\begin{figure}%[htbp]
  \centering
  % each subfigure is exactly 1/3 of a single column
  \subfigure[Short-Term Throughput $\overline{R}_{k,t}\lbrack W \rbrack$]{%
    \includegraphics[width=0.5\columnwidth]{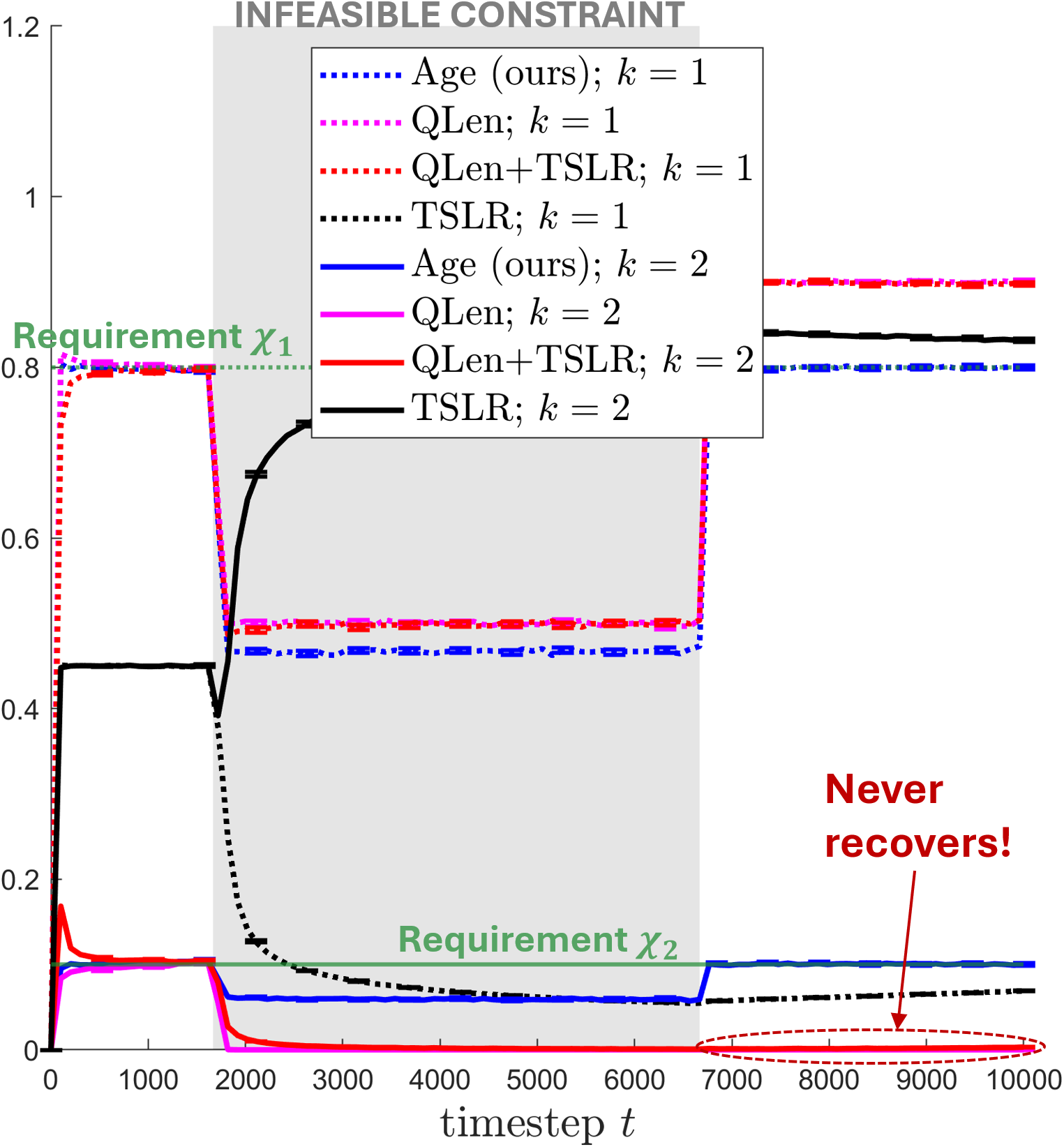}%
    \label{fig:infeasible_safety_rate}}%
  \subfigure[Time-Since-Last-Reward $\mathbb{T}_{k,t}$]{%
    \includegraphics[width=0.5\columnwidth]{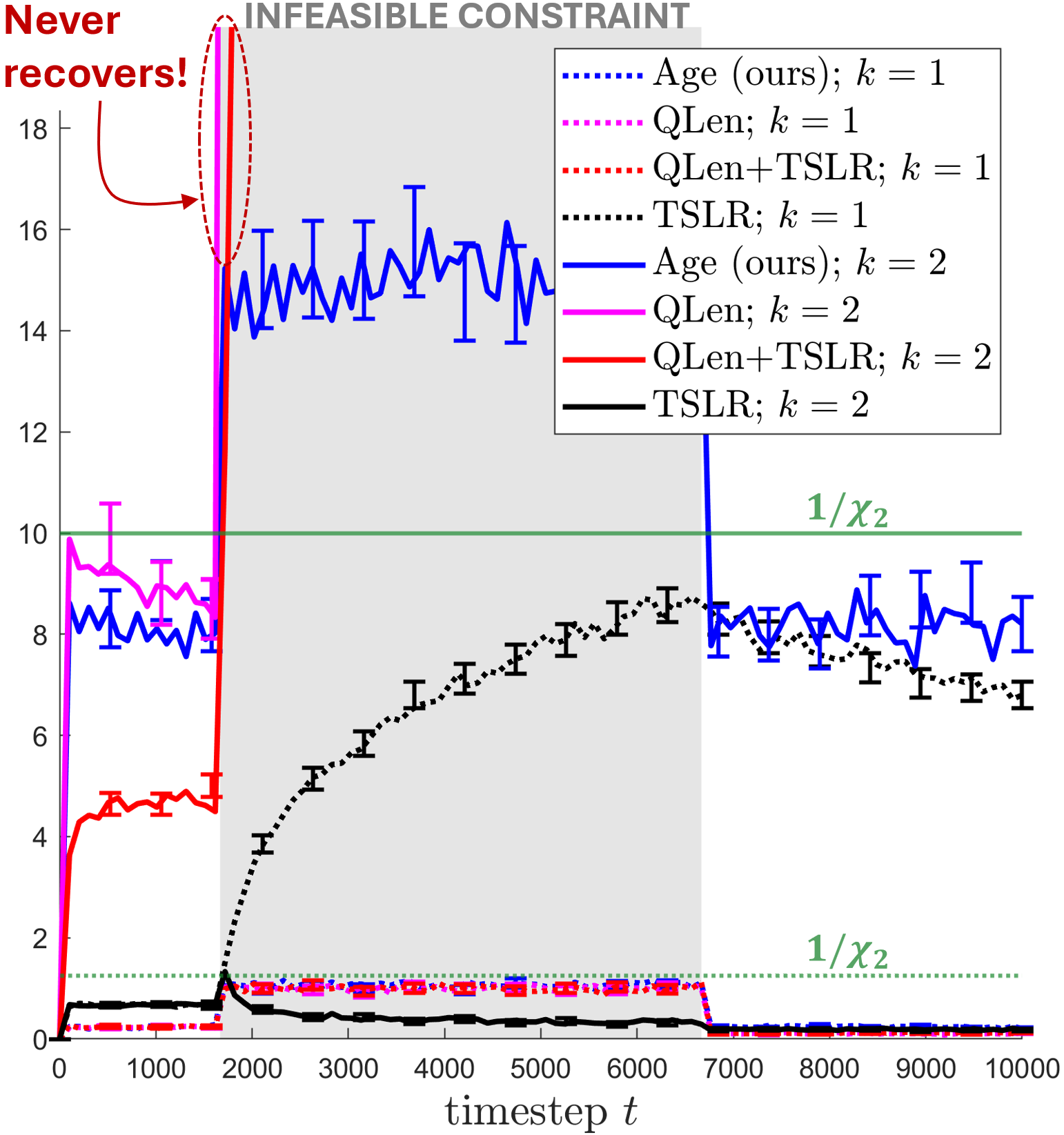}%
    \label{fig:infeasible_tss}}
  \caption{Comparison of policies for short-term throughput and freshness (TSLR) when the throughput constraint for arm $k=1$ becomes abruptly infeasible (shaded region). 1000 simulation runs are performed.}
  \label{fig:infeasible_constraints_overall}
\end{figure}

\begin{figure}%[htbp]
  \centering
  % each subfigure is exactly 1/3 of a single column
  \subfigure[\final{Under queue-length-based policy}]{%
    \includegraphics[width=0.5\columnwidth]{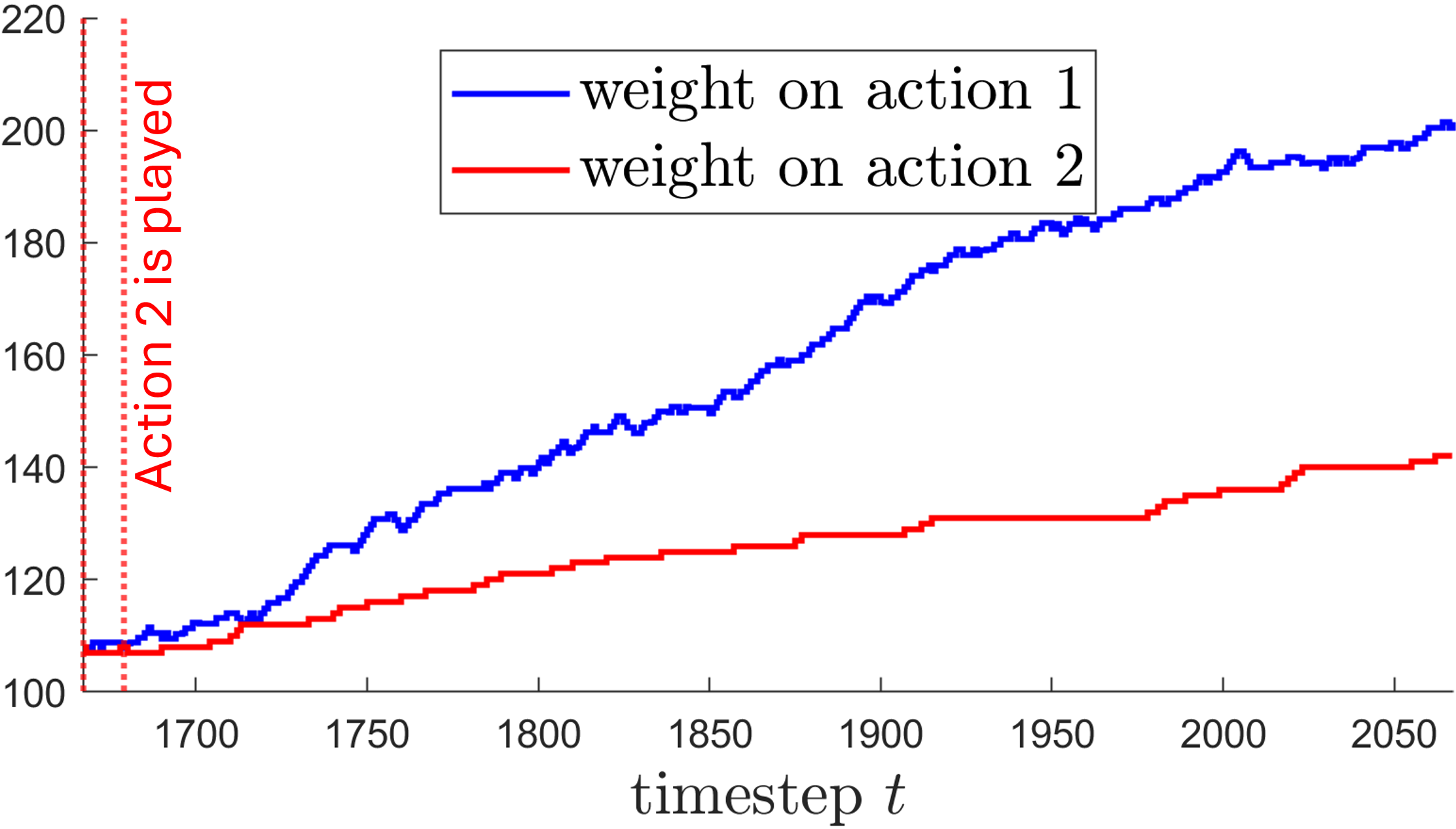}%
    \label{fig:length_based_sample}}%
  \subfigure[Under our age-based policy \eqref{eq:age_based_algorithm_def}]{%
    \includegraphics[width=0.5\columnwidth]{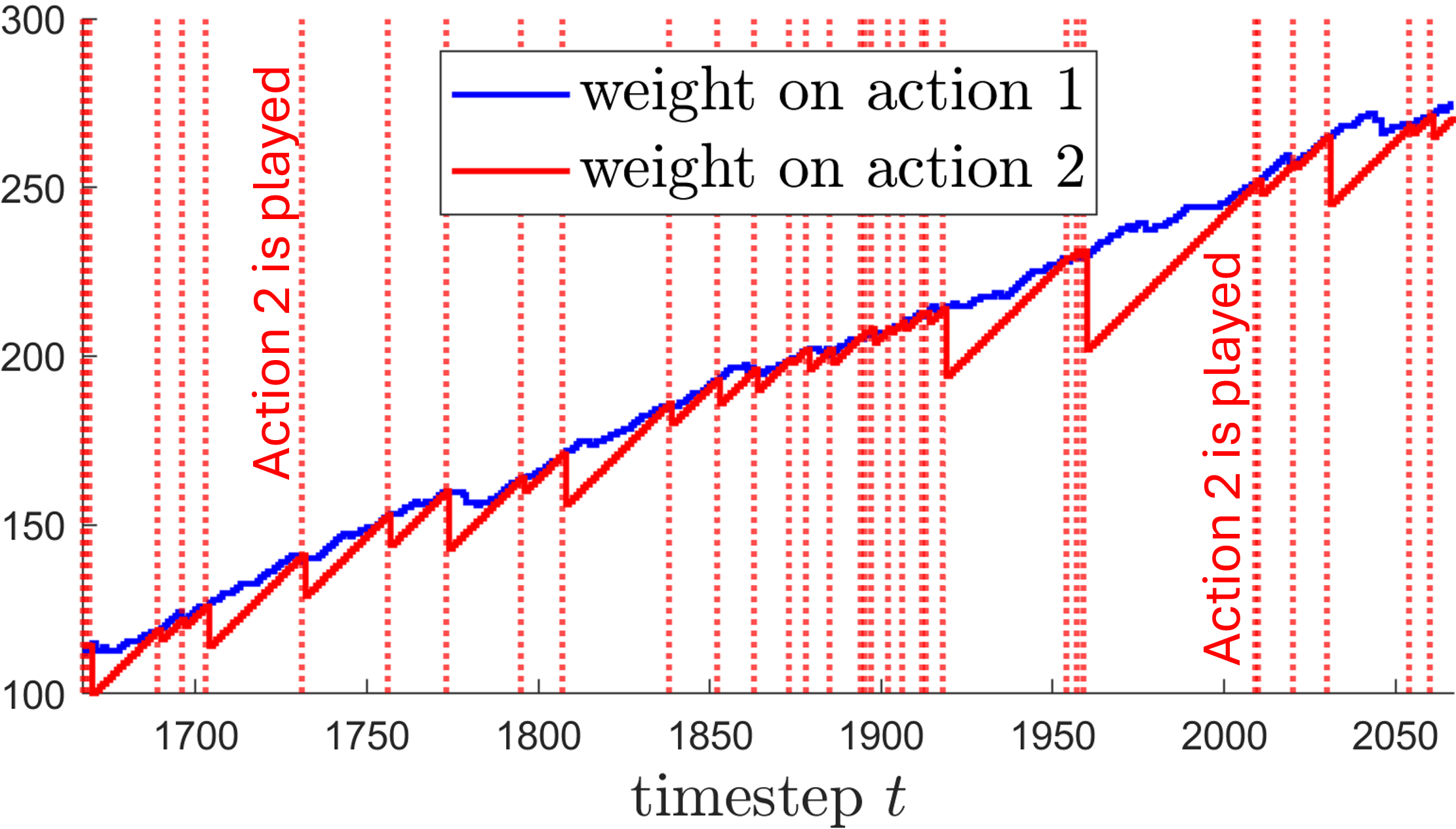}%
    \label{fig:age_based_sample}}
  \caption{Operation of the queue-length-based policy vs. our age-based policy \eqref{eq:age_based_algorithm_def} when the throughput constraint for arm $k=1$ is infeasible. Here, one simulation run from the results in Fig.~\ref{fig:infeasible_constraints_overall} is shown. \final{Vertical dotted red lines show when action 2 is played, and action 1 is played in all other timeslots.}}
  \label{fig:infeasible_constraints_sample_path}
\end{figure}

\subsection{Discussion of Results}

Fig.~\ref{fig:infeasible_constraints_overall} shows the result of these experiments and we use the same algorithm parameters from Section~\ref{section:empirical_performance} for consistency. In this figure, dotted lines are associated with arm $k=1$ and solid lines are associated with arm $k=2$. The key observation is that our age-based policy (shown in blue) is able to quickly recover from the period of temporary constraint infeasibility (the shaded region). Focusing first on the short-term throughput in Fig.~\ref{fig:infeasible_safety_rate}, the two queue-length-based policies (``QLen'' and ``QLen+TSLR'') neglect arm 2 and continue trying to schedule arm 1, whose queue length is blowing up. On the other hand, our age-based policy \eqref{eq:age_based_algorithm_def} never completely neglects arm 2. After the period of constraint infeasibility, the queue-length-based policies never recover. 
% Note that QLen+TSLR does slightly better than QLen since its attention on the queue length is subdued by the TSLR metric. 
Interestingly, our age-based policy, which is not explicitly designed to target the TSLR metric, does much better than the QLen+TSLR policy on this metric, 
\final{
which never recovers from the period of constraint infeasibility.
}
% in the presence of constraint infeasibility. 
Our intuition is that for QLen+TSLR, the queue length metric completely dominates any other metric considered in the algorithm. This also demonstrates the utility of the $O(\eta)$ TSLR control afforded by our age-based policy (see Remark~\ref{remark:tslr_guarantee}). As expected, the TSLR-based policy does well in the TSLR metric, but completely blunders the short-term throughput since it is not designed for that. For the TSLR-based policy, the opposite changes in short-term throughput during the period of constraint infeasibility are due to the changes in reward between the arms. 

To shed some light on why the head-of-line age does not suffer as much from periods of constraint infeasibility compared to the queue length, we plot the operation of the queue-length-based policy and our age-based policy for one simulation run in Fig.~\ref{fig:infeasible_constraints_sample_path}. Since we consider one action per arm, the ``weight" on an action $a = k$ is $\eta U_{k,t} + Z_{k,t}^\to$ under our age-based policy \eqref{eq:age_based_algorithm_def} and $\eta U_{k,t} + |\mathcal{Q}_{k,t}|$ under the queue-length-based policy. The key difference is in the dynamic of the head-of-line age $\Delta Z_{k,t}^\to$ given by Lemma~\ref{lemma:head_of_line_age} and the dynamic of the queue length $\Delta |\mathcal{Q}_{k,t}| = B_{k,t} - D_{k,t}$.  
\final{
In particular, the head-of-line age always increments by 1 when a reward is not obtained, and therefore the head-of-line age for the neglected arm always has a chance to ``catch up" to the other arm. On the other hand, the queue length grows much more slowly at a rate of $\chi_k + \varepsilon$. 
}
% In particular, the head-of-line age always increments by 1 when a reward is not obtained, and therefore the head-of-line age for the neglected constraint always has a chance to ``catch up" to the other constraint. On the other hand, the queue length grows much more slowly at a rate of $\chi_k + \varepsilon$. 
From these results \final{and the fact that the age-based and queue-length-based policies perform similarly in the i.i.d. setting (see Section~\ref{section:empirical_performance})}, we argue that this robustness under constraint infeasibility incentivizes the use of the head-of-line age in place of the queue length where appropriate in drift-plus-penalty style algorithms deployed to real-world systems. 

\subsection{Replication of the Experiment Using a Real Data Trace}

\begin{figure}%[htbp]
  \centering
  % each subfigure is exactly 1/3 of a single column
  \subfigure[Short-Term Throughput $\overline{R}_{k,t}\lbrack W \rbrack$]{%
    \includegraphics[width=0.5\columnwidth]{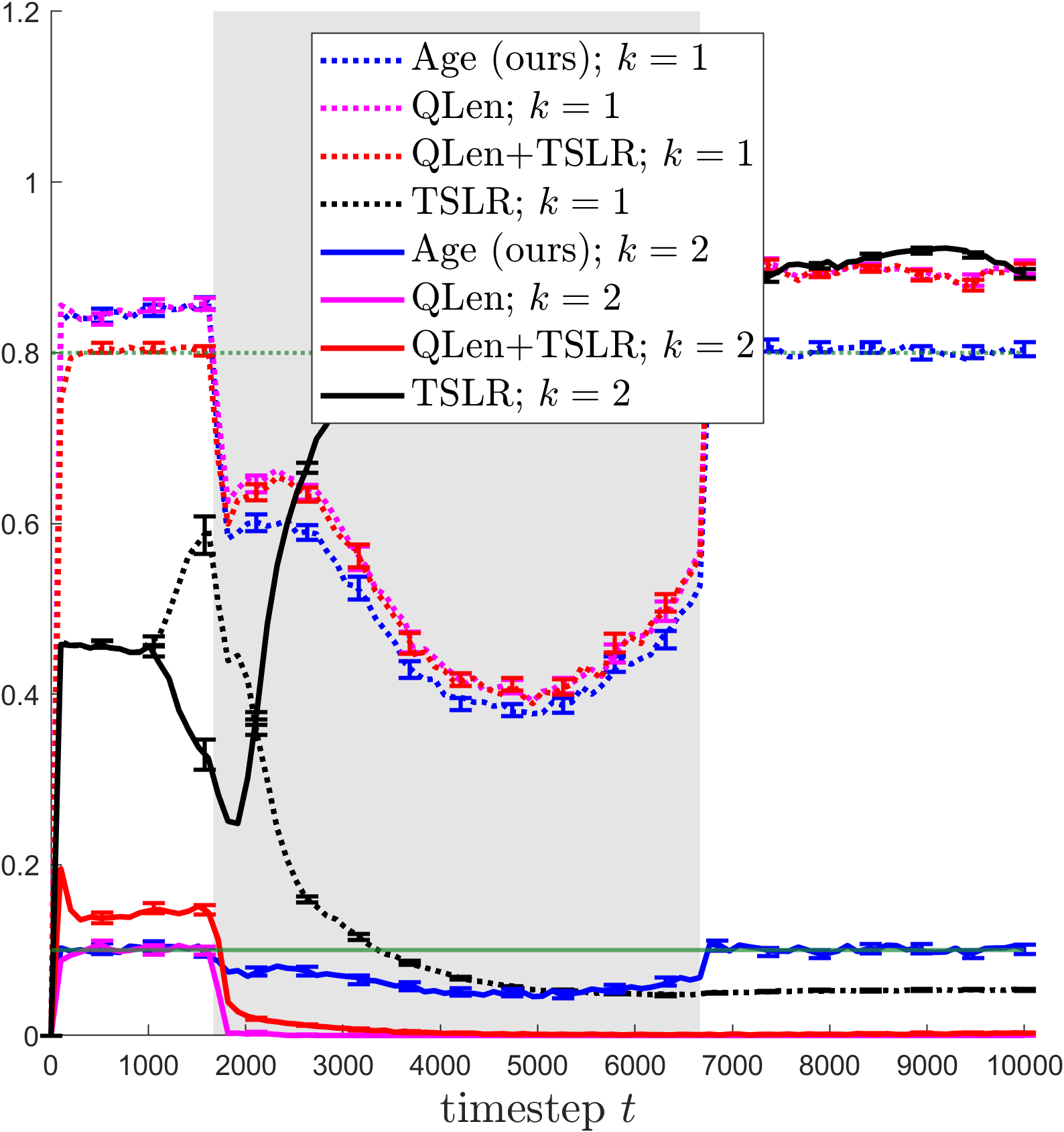}%
    \label{fig:infeasible_safety_rate_trace}}%
  \subfigure[Time-Since-Last-Reward $\mathbb{T}_{k,t}$]{%
    \includegraphics[width=0.5\columnwidth]{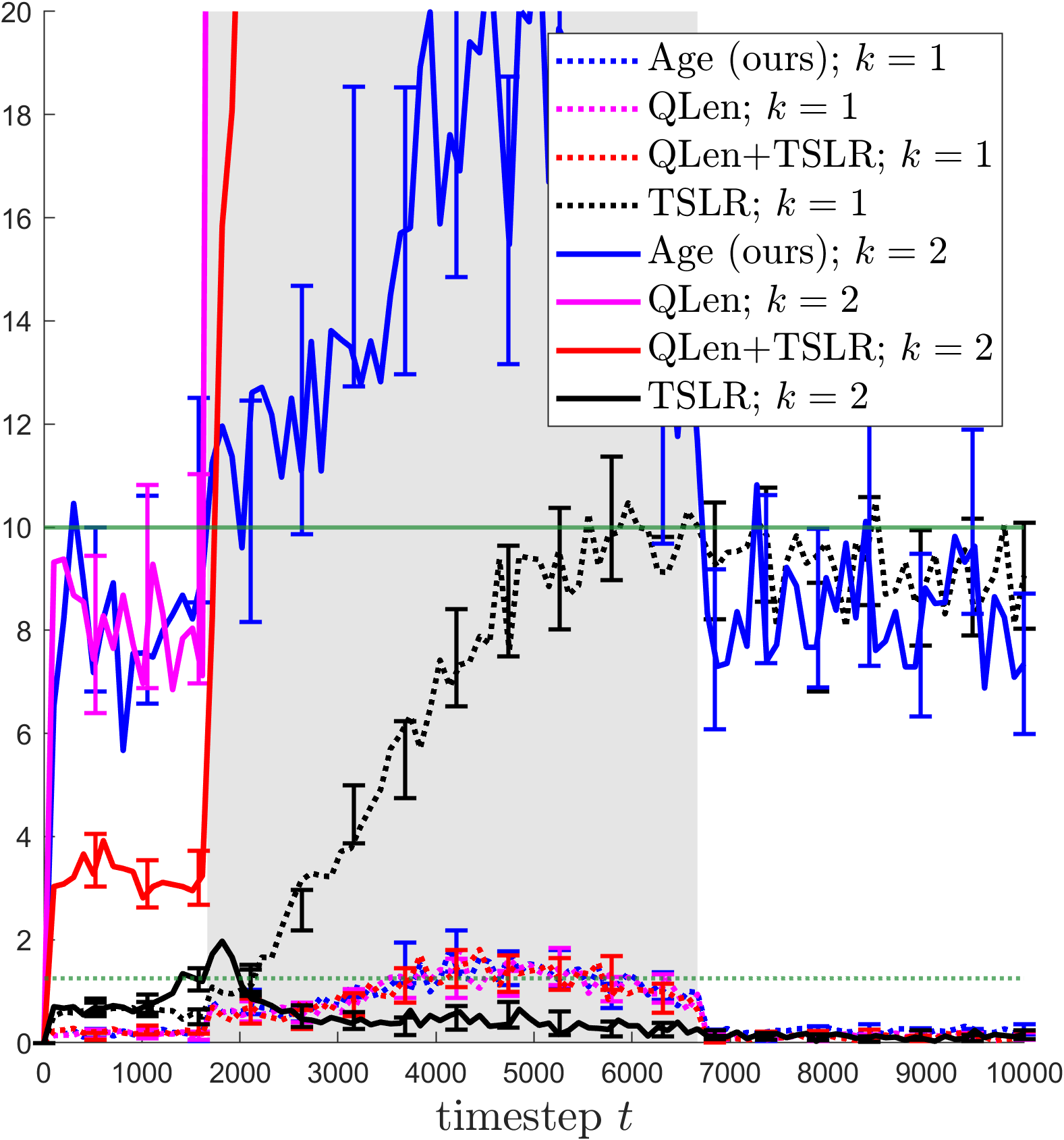}%
    \label{fig:infeasible_tss_trace}}
  \caption{Replication of the results in Fig.~\ref{fig:infeasible_constraints_overall} using a real data trace. 100 simulation runs are performed with Uniform(0,1000) random starting offsets.}
  \label{fig:infeasible_constraints_overall_trace}
\end{figure}

The authors in \cite{wu2023rateadaptation} provide an open source data trace of a wireless network with the SNR collected for $3 \times 10^4$ timeslots over 30 wireless channels. We set the bitrate at 1.46Gbps and select channels 12, 16, and 17 from their trace. The resulting channel rates are 0.5402, 0.9059, and 0.9012, which are similar to the ones we set in Section~\ref{section:experimental_setup}. Then we replicate the experiment from Section~\ref{section:experimental_setup} and show the results in Fig.~\ref{fig:infeasible_constraints_overall_trace} for completeness.

\section{Concluding Remarks and Future Work}
\label{section:conclusion}

In this paper, we investigated the robustness of using the head-of-line age instead of the virtual queue length to manage the throughput constraints in a learning-based wireless scheduling policy. We showed that the performance of such a policy can match the state-of-the-art both theoretically and empirically under i.i.d. network conditions. However, when the i.i.d. assumption is violated, leading to temporary constraint infeasibility, we demonstrated empirically that the age-based policy is significantly more stable and adaptive than the state-of-the-art policies. The next step, which is much more difficult, is to analytically quantify the robustness of the head-of-line age in the presence of temporarily infeasible constraints. We aim to investigate this in future work.

\bibliographystyle{IEEEtran}
\bibliography{refs}

\newpage

\appendices

\section{Proof of Lemma~\ref{lemma:head_of_line_age}}

\begin{proof}
\final{Fix arm $k$ and timeslot $t$. According to the definition of age \eqref{eq:age_def}, we have for two consecutive delivery requests that their age difference satisfies $Z_{k,t}(\ell) -  Z_{k,t}(\ell+1) = \Delta \uptau_{k}(\ell)$. Also from the definition \eqref{eq:age_def}, we have 
$Z_{k,t+1}(\ell+1) = Z_{k,t}(\ell+1) + 1$. Combining these two observations gives that 
\begin{equation}
Z_{k,t+1}(\ell+1) = Z_{k,t}(\ell) - \Delta \uptau_{k}(\ell) + 1.
\label{eq:age_alert_time_drift}
\end{equation}
}
% Fix $c$. According to the definition of age \eqref{eq:age_def}, we have for two consecutive delivery requests that their age difference satisfies
% \begin{equation}
% Z_{k,t}(\ell) -  Z_{k,t}(\ell+1) = \Delta \uptau_{k}(\ell).
% \end{equation}
% Also from the definition \eqref{eq:age_def}, we have $Z_{k,t+1}(\ell) = Z_{k,t}(\ell) + 1$. Combining this with the above gives
% \begin{equation}
% Z_{k,t+1}(\ell+1) = Z_{k,t}(\ell) - \Delta \uptau_{k}(\ell) + 1.
% \label{eq:age_alert_time_drift}
% \end{equation}
Note that according to the definition of age \eqref{eq:age_def}, we have $\left[Z_{k,t}(\ell_{k,t}^\to)\right]^- = \left[\uptau_k(\ell_{k,t}^\to) -  t \right]^+$. Then we can rewrite the definition of head-of-line age \eqref{eq:head_of_line_age}  as 
\begin{equation}
\label{eq:head_of_line_age_decomp}
Z_{k,t}^\to = Z_{k,t}(\ell_{k,t}^\to) + \left[\uptau_k(\ell_{k,t}^\to) -  t \right]^+.
\end{equation}
Now, consider two cases:

\textit{Case 1}: $D_{k,t} = 1$. Then $\ell_{k,t+1}^\to = \ell_{k,t}^\to + 1$, and therefore from \eqref{eq:age_alert_time_drift}, we have
\begin{equation}
Z_{k,t+1}(\ell_{k,t+1}^\to) = Z_{k,t+1}(\ell_{k,t}^\to + 1) = Z_{k,t}(\ell_{k,t}^\to) + 1 - \Delta \uptau_k(\ell_{k,t}^\to)
\end{equation}
From \eqref{eq:head_of_line_age_decomp}, we add $\left[\uptau_k(\ell_{k,t}^\to) -  t \right]^+ + \left[\uptau_k(\ell_{k,t+1}^\to) -  (t+1) \right]^+$ to both sides and rearrange to get
\begin{equation}
\label{eq:prelim_hol_age_drift}
\begin{aligned}
\Delta Z_{k,t}^\to &= 1 - \Delta \uptau_k(\ell_{k,t}^\to) \\
&\phantom{\,=\,}+  \left[\uptau_k(\ell_{k,t+1}^\to) - (t + 1) \right]^+ - \left[\uptau_k(\ell_{k,t}^\to) -  t \right]^+ \\
&\overset{(a)}{=} 1 - \Delta \uptau_k(\ell_{k,t}^\to) \\
&\phantom{\,=\,}+  \left[\uptau_k(\ell_{k,t}^\to)  - t + \Delta \uptau_k(\ell_{k,t}^\to) -1 \right]^+ - \left[\uptau_k(\ell_{k,t}^\to) -  t \right]^+ \\
&\overset{(b)}{=} \begin{cases}
\begin{aligned}
&1 - \Delta \uptau_k(\ell_{k,t}^\to) \\
&+ \left[\uptau_k(\ell_{k,t}^\to)  - t + \Delta \uptau_k(\ell_{k,t}^\to) -1 \right]^+ \\
\end{aligned} & \uptau_k(\ell_{k,t}^\to) \leq t \\
0 & \uptau_k(\ell_{k,t}^\to) > t
\end{cases} \\
\end{aligned}
\end{equation}
where $(a)$ is because $\ell_{k,t+1}^\to = \ell_{k,t}^\to + 1$ implies $\uptau_k(\ell_{k,t+1}^\to) = \uptau_k(\ell_{k,t}^\to) + \Delta \uptau_k(\ell_{k,t}^\to)$, and $(b)$ can be observed by noting that $\Delta \uptau_k(\ell_{k,t}^\to) \geq 1$. We can further decompose the case that $\uptau_k(\ell_{k,t}^\to) \leq t$ in the above as 
\begin{equation}
\begin{aligned}
& 1 - \Delta \uptau_k(\ell_{k,t}^\to) + \left[\uptau_k(\ell_{k,t}^\to)  - t + \Delta \uptau_k(\ell_{k,t}^\to) -1 \right]^+ \\
&= - \begin{cases}
t - \uptau_k(\ell_{k,t}^\to)  &  \Delta \uptau_k(\ell_{k,t}^\to) -1 > t - \uptau_k(\ell_{k,t}^\to) \\
\Delta \uptau_k(\ell_{k,t}^\to) - 1 & t - \uptau_k(\ell_{k,t}^\to) \geq \Delta \uptau_k(\ell_{k,t}^\to) -1 \\
\end{cases} \\
&= - \min\left\{ t - \uptau_k(\ell_{k,t}^\to), \Delta \uptau_k(\ell_{k,t}^\to) - 1 \right\}.
\end{aligned}
\end{equation}
Then we complete Case 1 by plugging the above back into \eqref{eq:prelim_hol_age_drift} to get
\begin{equation}
\begin{aligned}
\Delta Z_{k,t}^\to
&= \begin{cases}
-\min\left\{ t - \uptau_k(\ell_{k,t}^\to), \Delta \uptau_k(\ell_{k,t}^\to) - 1 \right\} &  \uptau_k(\ell_{k,t}^\to) \leq t \\
0 & \uptau_k(\ell_{k,t}^\to) > t.
\end{cases}
\end{aligned}
\end{equation}

\textit{Case 2}: $D_{k,t} = 0$. Then $\ell_{k,t+1}^\to = \ell_{k,t}^\to$ and therefore from the definition of age \eqref{eq:age_def}, we have
\begin{equation}
Z_{k,t+1}(\ell_{k,t+1}^\to) = Z_{k,t+1}(\ell_{k,t}^\to) = Z_{k,t}(\ell_{k,t}^\to) + 1.
\end{equation}
From \eqref{eq:head_of_line_age_decomp}, we add $\left[\uptau_k(\ell_{k,t}^\to) -  t \right]^+ + \left[\uptau_k(\ell_{k,t+1}^\to) -  (t+1) \right]^+$ to both sides and rearrange to get
\begin{equation}
\begin{aligned}
\Delta Z_{k,t}^\to 
&= 1 + \left[\uptau_k(\ell_{k,t+1}^\to) - (t + 1) \right]^+ - \left[\uptau_k(\ell_{k,t}^\to) -  t \right]^+ \\
&= 1 + \left[\uptau_k(\ell_{k,t}^\to) - t - 1 \right]^+ - \left[\uptau_k(\ell_{k,t}^\to) -  t \right]^+ \\
&= \begin{cases}
1  & \uptau_k(\ell_{k,t}^\to) \leq t \\
0 & \uptau_k(\ell_{k,t}^\to) > t .
\end{cases}
\end{aligned}
\end{equation}

Combining Case 1 and Case 2 gives that when $\uptau_k(\ell_{k,t}^\to) \leq t$, we have
\begin{equation}
\begin{aligned}
&\Delta Z_{k,t}^\to \\
&= - D_{k,t}\min\left\{ t - \uptau_k(\ell_{k,t}^\to), \Delta \uptau_k(\ell_{k,t}^\to) - 1 \right\} + (1-D_{k,t}) \\
&= 1 - D_{k,t}\left(1 + \min\left\{ t - \uptau_k(\ell_{k,t}^\to), \Delta \uptau_k(\ell_{k,t}^\to) - 1 \right\}\right) \\
&= 1 - D_{k,t} \min\left\{ t - \uptau_k(\ell_{k,t}^\to) + 1, \Delta \uptau_k(\ell_{k,t}^\to) \right\} \\
\end{aligned}
\end{equation}
and $\Delta Z_{k,t}^\to = 0$ when $\uptau_k(\ell_{k,t}^\to) > t$.
\end{proof}

\section{Proof of Lemma~\ref{lemma:drift_lemma}}

\begin{proof}
Fix $t$. We obtain an upper bound on $\E\mleft[ e^{\zeta Y_{t+1}} \mright]$ by considering two cases. 

\textit{Case 1}: $Y_t \geq \varphi$. Note that $e^x \leq 1 + x + \frac{x^2}{2}e^{|x|} \,\,\forall x$. Then 
\begin{equation}
\label{eq:case1_mgf_bound}
\begin{aligned}
\E\mleft[ e^{\zeta \Delta Y_t} \mid \mathcal{F}_t \mright]
&\leq 1 + \zeta \E\mleft[  \Delta Y_t \mid \mathcal{F}_t \mright] + \frac{\zeta^2}{2}\E\mleft[  |\Delta Y_t|^2 e^{\zeta |\Delta Y_t|} \mid \mathcal{F}_t \mright] \\
&\overset{(a)}{\leq} 1 -\zeta\rho + \frac{\zeta^2}{2}\E\mleft[ |\Delta Y_t|^2 e^{\zeta |\Delta Y_t|} \mid \mathcal{F}_t \mright] \\
&\overset{(b)}{\leq} 1 -\zeta\rho + \frac{\zeta^2}{2}\E\mleft[ |\Delta Y_t|^2 e^{\frac{\theta}{2} |\Delta Y_t|} \mid \mathcal{F}_t \mright] \\
\end{aligned}
\end{equation}
where $(a)$ uses drift condition (i) and $(b)$ is because  $\zeta \leq \frac{\rho \theta^2}{8 M} $ and $\frac{\rho \theta^2}{8 M} \leq \frac{\theta}{2}$ since $\rho,\theta \in (0, 1]$ and $M \geq 1$. Next, we focus on the conditional expectation on the right-hand side in the above. Note that $x^2 \leq \frac{2}{\alpha^2}e^{\alpha x}$ for all $x \geq 0$ and $\alpha > 0$. Then setting $\alpha = \theta/2$ gives
\begin{equation}
|\Delta Y_t|^2 \leq \frac{8}{\theta^2}e^{\frac{\theta}{2} |\Delta Y_t|} \implies |\Delta Y_t|^2 e^{\frac{\theta}{2} |\Delta Y_t|} \leq \frac{8}{\theta^2} e^{\theta |\Delta Y_t|}.
\end{equation}
Using the above along with drift condition (ii) to further bound \eqref{eq:case1_mgf_bound} gives
\begin{equation}
\begin{aligned}
\E\mleft[ e^{\zeta \Delta Y_t} \mid \mathcal{F}_t \mright]
&\leq  1 -\zeta\rho + \frac{4 \zeta^2 M}{\theta^2} \overset{(a)}{\leq}  1 -\frac{\zeta\rho}{2}
\end{aligned}
\end{equation}
where $(a)$ is because $\zeta \leq \frac{\rho \theta^2}{8 M}$. Then we complete Case 1 by recalling that $Y_t$ is $\mathcal{F}_t$-measurable and writing 
\begin{equation}
\E\mleft[ e^{\zeta  Y_{t+1}} \mid \mathcal{F}_t \mright] = e^{\zeta Y_t}\E\mleft[ e^{\zeta \Delta Y_t} \mid \mathcal{F}_t \mright] \leq e^{\zeta Y_t}\left(1 -\frac{\zeta\rho}{2}\right)
\end{equation}

\textit{Case 2}: $Y_t < \varphi$. Then $Y_{t+1} \leq \varphi + |\Delta Y_t|$ and therefore
\begin{equation}
\begin{aligned}
\E\mleft[ e^{\zeta  Y_{t+1}} \mid \mathcal{F}_t \mright] 
&\leq e^{\zeta \varphi}\E\mleft[ (e^{\theta |\Delta Y_t|})^{{\zeta}/{\theta}} \mid \mathcal{F}_t \mright] \\ &\overset{(a)}{\leq} e^{\zeta \varphi}\left(\E\mleft[ e^{\theta |\Delta Y_t|} \mid \mathcal{F}_t \mright]\right)^{\zeta/\theta}
&\overset{(b)}{\leq} e^{\zeta \varphi} M^{\zeta/\theta}
\end{aligned}
\end{equation}
where $(a)$ is by Jensen's inequality since $\zeta/\theta \in (0, 1]$ and $(b)$ applies drift condition (ii).

Combining Case 1 and Case 2 and taking expectations gives
\begin{equation}
\begin{aligned}
\underbrace{\E\mleft[ e^{\zeta  Y_{t+1}}\mright]}_{x_{t+1}}
&\leq \big(1 -\underbrace{{\zeta\rho}/{2}}_{a}\big)\underbrace{\E\mleft[ e^{\zeta  Y_{t}}\mright]}_{x_t} + \underbrace{e^{\zeta \varphi} M^{\zeta/\theta}}_{b}.
\end{aligned}
\end{equation}
So, we have the form $x_{t+1} \leq (1-a) x_t + b$ for all $t > 1$ where $a \in (0,1)$ and $b > 0$. Then it's easy to show by induction that $x_t \leq {(1-a)^{t-1}x_1 + b / a}$ for all $t \geq 1$.  

\end{proof}

\section{Proof of Lemma~\ref{lemma:lyapunov_function_bound}}
\final{
Before presenting the proof, the following remark deals with the interarrival times, which will be frequently used. 
\begin{remark}[history independence of interarrival times]
\label{remark:interarrival_times}
Note that for any measurable function $\psi : \mathbb{Z}_{+}^K \to \mathbb{R}$ and timeslot $t$,
\begin{equation}
\begin{aligned}
&\E\mleft[ \psi\mleft( (\Delta\uptau_k(\ell_{k,t}^\to))_{k=1}^K \mright) \mid \mathcal{H}_t^\to \mright] \\
&= \sum_{(\ell_k)_{k=1}^K} \E\mleft[ \1\mleft\{ (\ell_{k,t}^\to)_{k=1}^K = (\ell_k)_{k=1}^K \mright\} \psi\mleft( (\Delta\uptau_k(\ell_k))_{k=1}^K \mright) \mid \mathcal{H}_t^\to \mright] \\
&\overset{(a)}{=} \sum_{(\ell_k)_{k=1}^K} \1\mleft\{ (\ell_{k,t}^\to)_{k=1}^K = (\ell_k)_k \mright\} \E\mleft[  \psi\mleft( (\Delta\uptau_k(\ell_k))_{k=1}^K \mright) \mid \mathcal{H}_t^\to \mright] \\
&\overset{(b)}{=} \sum_{(\ell_k)_{k=1}^K} \1\mleft\{ (\ell_{k,t}^\to)_{k=1}^K = (\ell_k)_k \mright\} \E\mleft[  \psi\mleft( (\Delta\uptau_k(\ell_k))_{k=1}^K \mright) \mright] \\
&\overset{(c)}{=} \E\mleft[  \psi\mleft( (\Delta\uptau_k(1))_{k=1}^K \mright) \mright] \underbrace{\textstyle\sum_{(\ell_k)_{k=1}^K} \1\mleft\{ (\ell_{k,t}^\to)_{k=1}^K = (\ell_k)_k \mright\}}_{=\,1}
\end{aligned}
\end{equation}
where $(a)$ is because each $\ell_{k,t}^\to$ is the number of departures before $t$ plus one (see \eqref{eq:next_to_depart}), which is $\mathcal{H}_t^\to$-measurable, $(b)$ is because for all $k$, $\mathcal{H}_t^\to$ is independent from $\Delta \uptau_k(\ell)$ for all $\ell \geq \ell_{k,t}^\to$ (see Remark~\ref{remark:age_based_arrival_indepdendence}), and $(c)$ is because each $\Delta \uptau_k(\ell)$ is an i.i.d. process.
\end{remark}
We are now ready to present the proof of Lemma~\ref{lemma:lyapunov_function_bound}.
}
\begin{proof}
We proceed from Lemma~\ref{lemma:elementary_drift_bound}. We first manipulate the conditional expectation of $\mathtt{t2}$ as
\final{
\begin{equation}
\begin{aligned}
\label{eq:t2_conditional_expectation_lower_bound}
&\E\mleft[ \mathtt{t2} \mid \mathcal{H}_t^\to \mright] \\
&\overset{(a)}{=} \textstyle\sum_{k=1}^K \frac{(\chi_k + \varepsilon)}{\overline{x}_k} I_k(A_t) Z_{k,t}^\to \E\mleft[ X_k(S_t) \Delta \uptau_k(\ell_{k,t}^\to)  \mid \mathcal{H}_t^\to \mright] \\
&\overset{(b)}{=} \textstyle\sum_{k=1}^K (\chi_k + \varepsilon)  I_k(A_t) Z_{k,t}^\to \E\mleft[  \Delta \uptau_k(\ell_{k,t}^\to) \mid \mathcal{H}_t^\to \mright] \\
&\overset{(c)}{=} \sum_{k=1}^K I_k(A_t) Z_{k,t}^\to \\
\end{aligned}
\end{equation}
where $(a)$ is because $A_t$ and $Z_{k,t}^\to$ are $\mathcal{H}_t^\to$-measurable under an age-based policy, $(b)$ is because $S_t$ is independent from $\Delta \uptau_k(\ell_{k,t}^\to)$ and $\mathcal{H}_t^\to$, and $(c)$ if from Remark~\ref{remark:interarrival_times} and the fact that the interarrival times for each arm $k$ are $\text{Geometric}(\chi_k + \varepsilon)$-distributed. 

% . We now deal with the remaining conditional expectation on the right-hand side.
}
\final{
We use this to lower bound 
\begin{equation}
\label{eq:ucb_plus_t2_conditional_expectation_lower_bound}
\begin{aligned}
&\textstyle \E\mleft[ \eta \sum_{k=1}^K I_k(A_t) U_{k,t}  \mid \mathcal{H}_t^\to \mright] + \E\mleft[ \mathtt{t2} \mid \mathcal{H}_t^\to \mright] \\
&\overset{(a)}{=}\textstyle \E\mleft[  \sum_{k=1}^K I_k(A_t) \left(\eta U_{k,t} + Z_{k,t}^\to \right)  \mid \mathcal{H}_t^\to \mright] \\
&\overset{(b)}{\geq}\textstyle \E\mleft[  \sum_{k=1}^K I_k(A_t^\gamma) \left(\eta U_{k,t} + Z_{k,t}^\to \right)  \mid \mathcal{H}_t^\to \mright] \\
&\geq \textstyle \sum_{k=1}^K  Z_{k,t}^\to\E\mleft[ I_k(A_t^\gamma)  \mid \mathcal{H}_t^\to \mright]  \\
&\overset{(c)}{\geq} \textstyle  \sum_{k=1}^K  Z_{k,t}^\to\E\mleft[ I_k(A_t^\gamma)  \mright]
\overset{(d)}{\geq}  \textstyle \sum_{k=1}^K  Z_{k,t}^\to \frac{(\chi_k + \gamma)}{\overline{x}_k}
\end{aligned}
\end{equation}
where $(a)$ is from \eqref{eq:t2_conditional_expectation_lower_bound} and the fact that $A_t$ and $Z_{k,t}^\to$ are $\mathcal{H}_t^\to$-measurable, $(b)$ is by definition \eqref{eq:age_based_algorithm_def} of the age-based learning algorithm, $(c)$ is because $A_t^\gamma$ and $\mathcal{H}_t^\to$ are independent, and $(d)$ is from \eqref{eq:gamma_tight_static_policy}.
}
% We use the above to lower bound
% \begin{equation}
% \label{eq:ucb_plus_t2_conditional_expectation_lower_bound}
% \begin{aligned}
% &\E\mleft[\eta \sum_{k=1}^K I_k(S_t, A_t)\, U_{k,t} +  \mathtt{t2} \mid \mathcal{H}_t^\to \mright] \\
% &\overset{(a)}{\geq} \E\mleft[\eta \sum_{k=1}^K I_k(S_t, A_t)\, U_{k,t} +  \sum_{k=1}^K\widehat{x}_{c}(S_t, A_t) Z_{k,t}^\to \mid \mathcal{H}_t^\to \mright] \\
% &\overset{(b)}{\geq} \E\mleft[\eta \sum_{k=1}^K I_k(S_t, A_t^{\gamma})\, U_{k,t} +  \sum_{k=1}^K\widehat{x}_{c}(S_t, A_t^{\gamma}) Z_{k,t}^\to \mid \mathcal{H}_t^\to \mright] \\
% &\overset{(c)}{\geq} \sum_{k=1}^K Z_{k,t}^\to \sum_{s \in \mathcal{S}} \1\{ S_t = s \}\E\mleft[ \widehat{x}_{c}(s, A_t^{\gamma})  \mid \mathcal{H}_t^\to \mright]  \\
% &\overset{(d)}{\geq} \sum_{k=1}^K Z_{k,t}^\to \sum_{s \in \mathcal{S}} \1\{ S_t = s \}\E\mleft[ \widehat{x}_{c}(s, A_t^{\gamma})  \mright] \overset{(e)}{\geq} \sum_{k=1}^K Z_{k,t}^\to(\chi_k + \gamma)  \\
% \end{aligned}
% \end{equation}
% where $(a)$ is due to \eqref{eq:t2_conditional_expectation_lower_bound}, $(b)$ is by definition \eqref{eq:age_based_algorithm_def} of the age-based learning algorithm, in $(c)$ we lower bound $\eta \sum_{k=1}^K I_k(S_t, A_t^{\gamma})\, U_{k,t} \geq 0$, and pull out each $Z_{k,t}^\to$ and $\1\{ S_t = s\}$ since they are all $\mathcal{H}_t^\to$-measurable, $(d)$ is because $A_t^\gamma$ is independent from $\mathcal{H}_t^\to$, and $(e)$ is due to \eqref{eq:gamma_tight_static_policy}. Then
\final{
Then we have
\begin{equation}
\label{eq:t1_minus_t2_conditional_expectation_upper_bound}
\begin{aligned}
&\E\mleft[ \mathtt{t1} - \mathtt{t2} \mid \mathcal{H}_t^\to \mright] \\
&\leq \E\mleft[ \mathtt{t1} - \mathtt{t2} + \eta K - \eta \sum_{k=1}^K I_k(A_t) U_{k,t} \mid \mathcal{H}_t^\to \mright] \\
&\overset{(a)}{\leq} \E\mleft[ \sum_{k=1}^K \frac{(\chi_k + \varepsilon)}{\overline{x}_k} Z_{k,t}^\to - \sum_{k=1}^K Z_{k,t}^\to \frac{(\chi_k + \gamma)}{\overline{x}_k}  \mid \mathcal{H}_t^\to \mright]  + \eta K \\
&\overset{(b)}{=} (\varepsilon - \gamma) \sum_{k=1}^K \frac{Z_{k,t}^\to}{\overline{x}_k}  + \eta K 
% \overset{(c)}{\leq} (\varepsilon - \gamma) \sum_{k=1}^K  Z_{k,t}^\to + \eta K
\end{aligned}
\end{equation}
where $(a)$ is due to \eqref{eq:ucb_plus_t2_conditional_expectation_lower_bound} and the definition of $\mathtt{t1}$, and $(b)$ uses the fact that each $Z_{k,t}^\to$ is $\mathcal{H}_t^\to$-measurable.
% , and $(c)$ is because ${(\varepsilon - \gamma)}$ is negative and $1/\overline{x}_k \geq 1$
}
% \begin{equation}
% \label{eq:t1_minus_t2_conditional_expectation_upper_bound}
% \begin{aligned}
% &\E\mleft[ \mathtt{t1} - \mathtt{t2} \mid \mathcal{H}_t^\to \mright] \\
% &\leq \E\mleft[ \mathtt{t1} - \mathtt{t2} + \eta K - \eta \sum_{k=1}^K I_k(S_t, A_t)\, U_{k,t} \mid \mathcal{H}_t^\to \mright] \\
% &\overset{(a)}{\leq} \E\mleft[ \mathtt{t1} - \sum_{k=1}^K Z_{k,t}^\to(\chi_k + \gamma)  \mid \mathcal{H}_t^\to \mright]  + \eta K \\
% &\overset{(b)}{=} (\varepsilon - \gamma) \sum_{k=1}^K \chi_k Z_{k,t}^\to + \eta K
% \end{aligned}
% \end{equation}
% where $(a)$ is due to \eqref{eq:ucb_plus_t2_conditional_expectation_lower_bound} and $(b)$ plugs in the definition of $\mathtt{t1}$ and uses the fact that each $Z_{k,t}^\to$ is $\mathcal{H}_t^\to$-measurable. 
We upper bound the conditional expectation of $\mathtt{t3}$ by
\final{
\begin{equation}
\label{eq:t3_conditional_expectation_upper_bound}
\begin{aligned}
&\E\mleft[ \mathtt{t3} \mid \mathcal{H}_t^\to \mright] 
\overset{(a)}{=} \sum_{k=1}^K \frac{(\chi_k + \varepsilon)}{2\, \overline{x}_k}\left( 1 + \E\mleft[\left[\Delta \uptau_k(1)\right]^2 \mright] \right) \\
&\overset{(b)}{=} \sum_{k=1}^K \frac{(\chi_k + \varepsilon)}{2\, \overline{x}_k}\left( 1 + \frac{2-(\chi_k + \varepsilon)}{(\chi_k + \varepsilon)^2} \right)  \overset{(c)}{\leq} \sum_{k=1}^K \frac{1}{\overline{x}_k\chi_k}
\end{aligned}
\end{equation}
where $(a)$ is from Remark~\ref{remark:interarrival_times},  $(b)$ is because the second moment of a Geometric($\chi_k + \varepsilon$) random variable is $\frac{2-(\chi_k + \varepsilon)}{(\chi_k + \varepsilon)^2}$, and $(c)$ results from the manipulations $\frac{(\chi_k + \varepsilon)}{2\, \overline{x}_k}\left( 1 + \frac{2-(\chi_k + \varepsilon)}{(\chi_k + \varepsilon)^2} \right) = \frac{\chi_k + \varepsilon}{2 \overline{x}_k} + \frac{2-(\chi_k+\varepsilon)}{2\overline{x}_k(\chi_k+\varepsilon)} \leq \frac{\chi_k + \varepsilon}{2 \overline{x}_k(\chi_k+\varepsilon)} + \frac{2-(\chi_k+\varepsilon)}{2\overline{x}_k(\chi_k+\varepsilon)} = \frac{1}{\overline{x}_k(\chi_k+\varepsilon)} \leq \frac{1}{\overline{x}_k\chi_k}$. 
}
Define $\rho \triangleq \frac{\gamma - \varepsilon}{2}$ and 
\final{
$\varphi \triangleq \frac{1}{\rho}\left(\eta K + \sum_{k=1}^K \frac{1}{\overline{x}_k \chi_k }\right)$
}
and note that $\rho, \varphi > 0$. Then from Lemma~\ref{lemma:elementary_drift_bound}, \eqref{eq:t1_minus_t2_conditional_expectation_upper_bound}, and \eqref{eq:t3_conditional_expectation_upper_bound}, 
\final{
\begin{equation}
\E\mleft[ \Delta L_t \mid \mathcal{H}_t^\to \mright] \leq -2\rho \sum_{k=1}^K \frac{Z_{k,t}^\to}{\overline{x}_k} + \rho \varphi \overset{(a)}{\leq} -2\rho \widetilde{L}_t + \rho \varphi.
\end{equation}
where $(a)$ is because $\sqrt{\chi_k + \varepsilon}  \leq 1$ and $\frac{1}{\sqrt{\overline{x}_k}} \leq \frac{1}{\overline{x}_k}$ and so $\widetilde{L}_t = \|(\sqrt{(\chi_k + \varepsilon)/\overline{x}_k}Z_{k,t}^\to)_{k=1}^K\|_2 \leq \|(Z_{k,t}^\to / \overline{x}_k)_{k=1}^K \|_2 \leq \sum_{k=1}^K \frac{Z_{k,t}^\to}{\overline{x}_k}$.
}
% \begin{equation}
% \E\mleft[ \Delta L_t \mid \mathcal{H}_t^\to \mright] \leq -2\rho \sum_{k=1}^K \chi_k Z_{k,t}^\to + \rho \varphi \leq -2\rho \widetilde{L}_t + \rho \varphi.
% \end{equation}
Then from \eqref{eq:lyapunov_function_connection}, we have the first drift condition:
\begin{equation}
\widetilde{L}_t > \varphi \implies \E\mleft[ \Delta \widetilde{L}_t \mid \mathcal{H}_t^\to \mright] \leq -2\rho + \frac{\rho\varphi}{\widetilde{L}_t} \leq -\rho.
\end{equation}

Next we need to derive the second drift condition. We bound
\final{
\begin{equation}
\begin{aligned}
| \Delta \widetilde{L}_t | 
&\overset{(a)}{\leq} \big\| \Delta \big(\textstyle\sqrt{\frac{\chi_k + \varepsilon}{\overline{x}_k}}Z_{k,t}^\to \big)_{k=1}^K \big\|_2 = \big\|  \big(\textstyle\sqrt{\frac{\chi_k + \varepsilon}{\overline{x}_k}}\Delta Z_{k,t}^\to \big)_{k=1}^K \big\|_2  \\
&\leq \sum_{k=1}^K \frac{1}{\overline{x}_k} |\Delta Z_{k,t}^\to| 
\overset{(b)}{\leq} \sum_{k=1}^K \frac{2}{\overline{x}_k}\Delta \uptau_k(\ell_{k,t}^\to)
\end{aligned}
\end{equation}
}
where $(a)$ is by the reverse triangle inequality and $(b)$ is because ${|1- x|} \leq 1 + x$ for all $x \geq 0$, and so from Lemma~\ref{lemma:head_of_line_age}, we have that $|\Delta Z_{k,t}^\to| \leq 1 + \Delta \uptau_k(\ell_{k,t}^\to) \leq 2\Delta \uptau_k(\ell_{k,t}^\to)$. Define 
\final{$\theta_{\max} \triangleq -\frac{\overline{x}_{\min}}{2} \log(1-\chi_{\min})$} 
and consider \final{$\theta \in (0,\theta_{\max}]$}.
Then from the above, we have 
\final{
\begin{equation}
\begin{aligned}
&\E\mleft[ e^{\theta \left| \Delta \widetilde{L}_t \right|} \mid \mathcal{H}_t^\to \mright] 
\leq \E\mleft[ \prod_{k=1}^K \exp\mleft({\theta \frac{2}{\overline{x}_k}\Delta \uptau_k(\ell_{k,t}^\to)}\mright) \mid \mathcal{H}_t^\to \mright] \\
&\overset{(a)}{=}  \E\mleft[ \prod_{k=1}^K \exp\mleft({\theta \frac{2}{\overline{x}_k} \Delta \uptau_k(1)}\mright)  \mright] 
\overset{(b)}{=}  \prod_{k=1}^K \E\mleft[ \exp\mleft({\theta \frac{2}{\overline{x}_k}\Delta \uptau_k(1)}\mright)  \mright]  \\
&\overset{(c)}{=}  \prod_{k=1}^K \frac{(\chi_k+\varepsilon)e^{2\theta/\overline{x}_k}}{1 - (1-(\chi_k+\varepsilon))e^{2\theta/\overline{x}_k}}  \\
&\leq  \prod_{k=1}^K \frac{\chi_k\, e^{2\theta/\overline{x}_k}}{1 - (1-\chi_k)e^{2\theta/\overline{x}_k}} \triangleq M
\end{aligned}
\end{equation}
}
where $(a)$ \final{is from Remark~\ref{remark:interarrival_times}},  $(b)$ is because the interarrival times are i.i.d, \final{
and plugging in the MGF of a ${\text{Geometric}(\chi_k + \varepsilon)}$ random variable in $(c)$ is well defined since $\frac{2\theta}{\overline{x}_k} \leq \frac{2\theta_{\max}}{\overline{x}_k} = -\frac{\overline{x}_{\min}}{\overline{x}_{k}}\log(1-\chi_{\min}) \leq -\log(1-\chi_{\min}) < -\log(1-(\chi_{k}+\varepsilon))$.
}
% Note that the above is well defined for $\theta \in (0,\theta_{\max})$ since each $\frac{\chi_k\, e^{2\theta}}{1 - (1-\chi_k)e^{2\theta}} \leq \frac{\chi_{\min} e^{2\theta}}{1 - (1-\chi_{\min})e^{2\theta}}$. 
This establishes the second drift condition in Lemma~\ref{lemma:drift_lemma}. 

Set $\zeta \triangleq \frac{\rho\theta^2}{8 M}$. Then using the two drift conditions with Lemma~\ref{lemma:drift_lemma} and applying Jensen's inequality gives 
\final{
\begin{equation}
\begin{aligned}
&\E\mleft[ \widetilde{L}_t \mright] 
\overset{(a)}{\leq} \frac{1}{\zeta}\log\mleft(\E\mleft[ e^{\zeta\widetilde{L}_t} \mright]\mright) \overset{(b)}{\leq} \frac{1}{\zeta}\log\mleft(\frac{4}{\zeta\rho}e^{\zeta \varphi} M^{\zeta/\theta}\mright) \\
&= \frac{1}{\zeta}\log\mleft(\frac{4}{\zeta\rho}\mright) + \varphi + \frac{1}{\theta}\log M \\
&= \frac{8M}{\rho\theta^2}\left(2\log\mleft(\frac{1}{\rho}\mright) + \log\mleft(\frac{32M}{\theta^2}\mright) \right) + \frac{1}{\theta}\log M + \varphi \\
&\overset{(c)}{\leq} \frac{2\log(1/\rho)}{\rho} \left( \frac{9M}{\theta^2}\log\mleft(\frac{32M}{\theta^2}\mright) \right) + \varphi \\
&\overset{(d)}{\leq} \frac{8\log(4/\gamma)}{\gamma} \left( \frac{9M}{\theta^2}\log\mleft(\frac{32M}{\theta^2}\mright) \right) + \frac{4}{\gamma}\left(\eta K + \sum_{k=1}^K \frac{1}{\chi_k \overline{x}_k}\right)
\end{aligned}
\end{equation}
}
where $(a)$ is due to Jensen's inequality, $(b)$ applies Lemma~\ref{lemma:drift_lemma} using the two established drift conditions (note that the first term in the lemma is $\leq 1$ since $ \widetilde{L}_t = 0$), $(c)$ is because $2\log(2/x) > 1$ for $x\in (0,1)$ and $\rho = \frac{\gamma - \varepsilon}{2}$, and $(d)$ is because since $\varepsilon \leq \gamma / 2$, we have $\rho = (\gamma - \varepsilon)/2 \geq \gamma / 4$. Optimizing over $\theta \in (0,\theta_{\max})$ gives the result. 
\end{proof}

The following lemma derives a sample path bound on the drift of the weighted quadratic Lyapunov function.
\begin{lemma}
\label{lemma:elementary_drift_bound}
For all 
\final{arms $k$}
% constraints $c$ 
and timeslots $t$, we have $\Delta L_t \leq \mathtt{t1} -  \mathtt{t2} +  \mathtt{t3}$
where 
\final{
\begin{equation}
\begin{aligned}
\mathtt{t1} &\triangleq \textstyle\sum_{k=1}^K \frac{(\chi_k + \varepsilon)}{\overline{x}_k} Z_{k,t}^\to ,\\
\mathtt{t2} &\triangleq \textstyle\sum_{k=1}^K \frac{(\chi_k + \varepsilon)}{\overline{x}_k} R_{k}(S_t, A_t) \Delta \uptau_k(\ell_{k,t}^\to) Z_{k,t}^\to ,\\
\mathtt{t3} &\triangleq   \textstyle\sum_{k=1}^K \frac{(\chi_k + \varepsilon)}{2\overline{x}_k}\left( 1 + \left[\Delta \uptau_k(\ell_{k,t}^\to)\right]^2 \right). \\
\end{aligned}
\end{equation}
}
\end{lemma}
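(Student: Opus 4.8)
The plan is to expand the quadratic drift term by term and reduce the claim to a single scalar inequality for each arm. Setting $g_k \triangleq (\chi_k+\varepsilon)/\overline{x}_k$ and applying the identity $\Delta(x^2) = 2x\,\Delta x + (\Delta x)^2$ to the definition of $L_t$ gives
\begin{equation}
\Delta L_t = \frac12 \sum_{k=1}^K g_k\mleft[ 2 Z_{k,t}^\to\, \Delta Z_{k,t}^\to + (\Delta Z_{k,t}^\to)^2 \mright].
\end{equation}
Since the weights $g_k > 0$ also appear in $\mathtt{t1}, \mathtt{t2}, \mathtt{t3}$, it suffices to establish, for each arm $k$, the per-arm inequality
\begin{equation}
\label{eq:per_arm_drift}
Z_{k,t}^\to \Delta Z_{k,t}^\to + \tfrac12 (\Delta Z_{k,t}^\to)^2 \leq Z_{k,t}^\to\mleft(1 - R_k(S_t,A_t)\,\Delta\uptau_k(\ell_{k,t}^\to)\mright) + \tfrac12\mleft(1 + [\Delta\uptau_k(\ell_{k,t}^\to)]^2\mright),
\end{equation}
as summing \eqref{eq:per_arm_drift} against $g_k$ reproduces exactly $\mathtt{t1} - \mathtt{t2} + \mathtt{t3}$.

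First I would substitute the exact head-of-line age drift from Lemma~\ref{lemma:head_of_line_age} into \eqref{eq:per_arm_drift} and split on whether the next-to-depart request has already arrived. If $\uptau_k(\ell_{k,t}^\to) > t$, then $\Delta Z_{k,t}^\to = 0$ and also $Z_{k,t}^\to = [t - \uptau_k(\ell_{k,t}^\to)]^+ = 0$, so the left side of \eqref{eq:per_arm_drift} is zero while the right side is nonnegative, and the inequality holds trivially. If $\uptau_k(\ell_{k,t}^\to) \leq t$, the key observation is that $Z_{k,t}^\to = t - \uptau_k(\ell_{k,t}^\to)$, so $t + 1 - \uptau_k(\ell_{k,t}^\to) = Z_{k,t}^\to + 1$ and Lemma~\ref{lemma:head_of_line_age} reads $\Delta Z_{k,t}^\to = 1 - D_{k,t}\,m$ with $m \triangleq \min\{ Z_{k,t}^\to + 1, \Delta\uptau_k(\ell_{k,t}^\to) \}$ and $D_{k,t} = R_k(S_t,A_t)$. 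Expanding the square using $D_{k,t}^2 = D_{k,t}$ and cancelling the common terms $Z_{k,t}^\to + \tfrac12$ on both sides, \eqref{eq:per_arm_drift} becomes equivalent to
\begin{equation}
\label{eq:bracket_ineq}
D_{k,t}\mleft[ m(m - 2 - 2 Z_{k,t}^\to) + 2 Z_{k,t}^\to\, \Delta\uptau_k(\ell_{k,t}^\to) \mright] \leq [\Delta\uptau_k(\ell_{k,t}^\to)]^2.
\end{equation}

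The hard part will be verifying \eqref{eq:bracket_ineq}, which I would resolve by a case split on which argument attains the minimum $m$. When $D_{k,t} = 0$ the left side vanishes and the bound is immediate. When $D_{k,t} = 1$ and $m = \Delta\uptau_k(\ell_{k,t}^\to)$ (that is, $\Delta\uptau_k(\ell_{k,t}^\to) \leq Z_{k,t}^\to + 1$), the bracket telescopes to $[\Delta\uptau_k(\ell_{k,t}^\to)]^2 - 2\Delta\uptau_k(\ell_{k,t}^\to)$, which is at most $[\Delta\uptau_k(\ell_{k,t}^\to)]^2$. The delicate case is $D_{k,t} = 1$ with $m = Z_{k,t}^\to + 1 \leq \Delta\uptau_k(\ell_{k,t}^\to)$: here the bracket equals $2 Z_{k,t}^\to \Delta\uptau_k(\ell_{k,t}^\to) - (Z_{k,t}^\to + 1)^2$, so \eqref{eq:bracket_ineq} rearranges to $0 \leq [\Delta\uptau_k(\ell_{k,t}^\to)]^2 - 2 Z_{k,t}^\to \Delta\uptau_k(\ell_{k,t}^\to) + (Z_{k,t}^\to + 1)^2$, and completing the square gives $(\Delta\uptau_k(\ell_{k,t}^\to) - Z_{k,t}^\to)^2 + 2 Z_{k,t}^\to + 1 \geq 0$, which always holds. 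Assembling the three cases establishes \eqref{eq:bracket_ineq}, hence \eqref{eq:per_arm_drift}, and summing over $k$ completes the proof.
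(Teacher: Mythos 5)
Your proof is correct, and it reaches the paper's bound by a somewhat different decomposition. The paper's proof first establishes the intermediate sample-path inequality $Z_{k,t+1}^\to \leq \left[ Z_{k,t}^\to + 1 - D_{k,t}\Delta \uptau_k(\ell_{k,t}^\to) \right]^+$ (its \eqref{eq:hola_next_step_upper_bound}) through a three-case analysis of Lemma~\ref{lemma:head_of_line_age}; this step discards the minimum in the age drift up front, replacing it with the full interarrival time and clipping at zero. Squaring that bound, expanding, substituting $D_{k,t} = R_k(S_t,A_t)$ in the cross term (legitimate because that term vanishes when $Z_{k,t}^\to = 0$), and bounding $\left(1 - D_{k,t}\Delta\uptau_k(\ell_{k,t}^\to)\right)^2 \leq 1 + \left[\Delta\uptau_k(\ell_{k,t}^\to)\right]^2$ then yields the lemma. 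You instead expand $\Delta L_t$ exactly, reduce to a per-arm quadratic inequality, and only then do the case analysis---splitting on whether the head-of-line request has arrived, on the value of $D_{k,t}$, and on which argument attains the minimum---closing the one delicate case by completing the square; your algebra in all three subcases checks out. Both proofs handle the only subtle point, namely that $D_{k,t} = R_k(S_t,A_t)$ holds only when $\uptau_k(\ell_{k,t}^\to) \leq t$, in essentially the same way: the offending term is multiplied by $Z_{k,t}^\to = 0$ in the other case. The trade-off is that the paper's clipped bound makes the closing algebra mechanical and gives a clean reusable inequality, while your route stays exact until the final step and makes visible precisely where slack enters (only the discarded $-2\Delta\uptau_k(\ell_{k,t}^\to)$ term and the nonnegativity of $(\Delta\uptau_k(\ell_{k,t}^\to) - Z_{k,t}^\to)^2 + 2Z_{k,t}^\to + 1$), at the cost of a longer case analysis.
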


\begin{proof}
We first need to show that for all 
\final{arms $k$}
% constraints $c$ 
and timeslots $t$, we have
\begin{equation}
\label{eq:hola_next_step_upper_bound}
Z_{k,t+1}^\to \leq \left[ Z_{k,t}^\to + 1 - D_{k,t}\Delta \uptau_k(\ell_{k,t}^\to) \right]^+.
\end{equation}
\final{Fix arm $k$ and timeslot $t$}.
% Fix $c$ and $t$. 
We consider three cases:

\textit{Case 1}: $\uptau_k(\ell_{k,t}^\to) \geq t$. Then $Z_{k,t}^\to = 0$ \final{by definition (see \eqref{eq:age_def} and \eqref{eq:head_of_line_age})} and so Lemma~\ref{lemma:head_of_line_age} gives
\begin{equation}
\begin{aligned}
Z_{k,t+1}^\to 
&= \1\{ \uptau_k(\ell_{k,t}^\to) = t \}(1-D_{k,t})(Z_{k,t}^\to + 1) \\ 
&\leq (1-D_{k,t})(Z_{k,t}^\to + 1).
\end{aligned}
\end{equation}

\textit{Case 2}: $\uptau_k(\ell_{k,t}^\to) < t < \uptau_k(\ell_{k,t+1}^\to)$. Then $\ell_{k,t}^\to$ is the only delivery request in the queue at the beginning of $t$ and it has age $Z_{k,t}^\to = t - \uptau_k(\ell_{k,t}^\to)$. Then from Lemma~\ref{lemma:head_of_line_age}, we have
\begin{equation}
\begin{aligned}
Z_{k,t+1}^\to 
&= t - \uptau_k(\ell_{k,t}^\to) + 1 - D_{k,t}(t+1 -  \uptau_k(\ell_{k,t}^\to)) \\
&= (1-D_{k,t})(t - \uptau_k(\ell_{k,t}^\to) + 1) = (1-D_{k,t})(Z_{k,t}^\to + 1).
\end{aligned}
\end{equation}

\textit{Case 3}: $\uptau_k(\ell_{k,t}^\to) < \uptau_k(\ell_{k,t+1}^\to) \leq t$. Then Lemma~\ref{lemma:head_of_line_age} gives
\begin{equation}
Z_{k,t+1}^\to = Z_{k,t}^\to + 1 - D_{k,t}\Delta \uptau_k(\ell_{k,t}^\to).
\end{equation}
Since $(1-D_{k,t})(Z_{k,t}^\to + 1) \leq \left[ Z_{k,t}^\to + 1 - D_{k,t}\Delta \uptau_k(\ell_{k,t}^\to) \right]^+$, the result \eqref{eq:hola_next_step_upper_bound} holds in all three cases.     

The drift bound proceeds from \eqref{eq:hola_next_step_upper_bound}:
\final{
\begin{equation}
\label{eq:initial_drift_bound}
\begin{aligned}
\Delta L_t &\leq \sum_{k=1}^K \frac{(\chi_k + \varepsilon)}{2\overline{x}_k} \left[ \left( Z_{k,t}^\to + 1 - D_{k,t} \Delta \uptau_k(\ell_{k,t}^\to) \right)^2 - \left(Z_{k,t}^\to \right)^2 \right] \\
&= \sum_{k=1}^K \frac{(\chi_k + \varepsilon)}{\overline{x}_k}Z_{k,t}^\to - \sum_{k=1}^K \frac{(\chi_k + \varepsilon)}{\overline{x}_k}D_{k,t} \Delta \uptau_k(\ell_{k,t}^\to) Z_{k,t}^\to \\
&\phantom{\,=\,}+ \sum_{k=1}^K  \frac{(\chi_k + \varepsilon)}{2\overline{x}_k} \left(1-D_{k,t} \Delta \uptau_k(\ell_{k,t}^\to)\right)^2. \\
\end{aligned}
\end{equation}
Now note that whenever $Z_{k,t}^\to > 0$, we have $\uptau_k(\ell_{k,t}^\to) < t$ (see \eqref{eq:age_def} and \eqref{eq:head_of_line_age}), and therefore $D_{k,t} = R_{k}(S_t, A_t)$ according to definition \eqref{eq:queue_departure}. Then the final bound is obtained by substituting $D_{k,t} = R_{k}(S_t, A_t)$ for all $k$ in the second sum, and bounding $\left(1 - D_{k,t} \Delta \uptau_k(\ell_{k,t}^\to) \right)^2 \leq  1 + \left[\Delta \uptau_k(\ell_{k,t}^\to)\right]^2$ for all $k$ in the third sum.
}
% \begin{equation}
% \label{eq:initial_drift_bound}
% \begin{aligned}
% &\Delta L_t \leq \sum_{k=1}^K \frac{(\chi_k + \varepsilon)}{2} \left[ \left( Z_{k,t}^\to + 1 - D_{k,t} \Delta \uptau_k(\ell_{k,t}^\to) \right)^2 - \left(Z_{k,t}^\to \right)^2 \right] \\
% &\overset{(a)}{=} \sum_{k=1}^K (\chi_k + \varepsilon) Z_{k,t}^\to - \sum_{k=1}^K (\chi_k + \varepsilon) R_{k}(S_t, A_t) \Delta \uptau_k(\ell_{k,t}^\to) Z_{k,t}^\to \\
% &\phantom{\,=\,}+ \sum_{k=1}^K \frac{(\chi_k + \varepsilon)}{2}\left( 1 + \left[\Delta \uptau_k(\ell_{k,t}^\to)\right]^2 \right) \\
% \end{aligned}
% \end{equation}
% where in $(a)$, we expand the quadratic term, also exchange $D_{k,t} = R_{k}(S_t, A_t)$ which is true when $Z_{k,t}^\to > 0$ since this implies $\uptau_k(\ell_{k,t}^\to) \leq t$ (see \eqref{eq:queue_departure} and Lemma~\ref{lemma:head_of_line_age}), and finally upper bound $\left(1 - D_{k,t} \Delta \uptau_k(\ell_{k,t}^\to) \right)^2 \leq  1 + \left[\Delta \uptau_k(\ell_{k,t}^\to)\right]^2$ for all $c$.
\end{proof}

\section{Proof of Theorem~\ref{theorem:regret}}

\begin{proof}
We derive the \textit{$\varepsilon$-tight static policy} $\sigma^\varepsilon$ defined as the following convex combination of the optimal static policy $\sigma^*$ and the $\gamma$-tight static policy $\sigma^\gamma$:
\final{
\begin{equation}
\label{eq:epsilon_tight_static_policy}
\sigma^\varepsilon(a) \triangleq \left( 1 - \frac{\varepsilon}{\gamma} \right) \sigma^*(a) + \frac{\varepsilon}{\gamma} \sigma^\gamma(a) \quad \forall\,a \in  \mathcal{A}.
\end{equation}
}
% \begin{equation}
% \label{eq:epsilon_tight_static_policy}
% \sigma^\varepsilon_s(a) \triangleq \left( 1 - \frac{\varepsilon}{\gamma} \right) \sigma_s^*(a) + \frac{\varepsilon}{\gamma} \sigma_s^\gamma(a) \quad \forall\,(s,a) \in \mathcal{S}\times \mathcal{A}.
% \end{equation}
Let $A_t^\varepsilon$ denote the action played under policy $\sigma^\varepsilon$ in timeslot $t$. By the linearity of expectation and since 
\final{
$\E\mleft[ R_{k}(S_t,A_t^\sigma) \mright] = \E\mleft[ R_{k}(S_1,A_1^\sigma) \mright]$ 
}
% $\E\mleft[ R_{k,t}(S_t,A_t^\sigma) \mright] = \E\mleft[ R_{k,1}(S_1,A_1^\sigma) \mright]$ 
for all $t$ under any static policy $\sigma$, we have 
\final{
\begin{equation}
\label{eq:regret_of_epsilon_tight}
\mathrm{Reg}^{\sigma_\varepsilon}_T = \frac{\varepsilon T}{\gamma} \sum_{k=1}^K \E\mleft[ R_{k}(S_1,A_1^*) - R_{k}(S_1,A_1^\gamma) \mright]  \leq \frac{\varepsilon  K T}{\gamma}.
\end{equation}
}
% \begin{equation}
% \label{eq:regret_of_epsilon_tight}
% \mathrm{Reg}^{\sigma_\varepsilon}_T = \frac{\varepsilon T}{\gamma} \sum_{k=1}^K \E\mleft[ R_{k,1}(S_1,A_1^*) - R_{k,1}(S_1,A_1^\gamma) \mright]  \leq \frac{\varepsilon  K T}{\gamma}.
% \end{equation}

Define the $\varepsilon$-tight regret under a policy $\pi$ as 
\final{
\begin{equation}
\varepsilon\text{-}\mathrm{Reg}^\pi_T \triangleq \sum_{k=1}^K \sum_{t=1}^T \E\mleft[ R_{k}(S_t, A_t^{\varepsilon}) - R_{k}(S_t, A_t^\pi) \mright].
\end{equation}
}
% \begin{equation}
% \varepsilon\text{-}\mathrm{Reg}^\pi_T \triangleq \sum_{k=1}^K \sum_{t=1}^T \E\mleft[ R_{k,t}(S_t, A_t^{\varepsilon}) - R_{k,t}(S_t, A_t^\pi) \mright].
% \end{equation}
Then from \eqref{eq:regret_of_epsilon_tight}, we have
\begin{equation}
\label{eq:overall_regret_from_epsilon_tight}
\mathrm{Reg}_T^\pi =  \varepsilon\text{-}\mathrm{Reg}^\pi_T + \mathrm{Reg}^{\sigma_\varepsilon}_T \leq \varepsilon\text{-}\mathrm{Reg}^\pi_T +  \frac{\varepsilon  K T}{\gamma}.
\end{equation}
It remains to bound $\varepsilon\text{-}\mathrm{Reg}^\pi_T$ where $\pi$ is the age-based learning policy (we omit the $\pi$ superscript going forward). 
\final{
Observe that for any arm $k$ and timeslot $t$, we have
\begin{equation}
\begin{aligned}
&\overline{x}_k\E\mleft[I_k(A_t^\varepsilon) \mright] = \overline{x}_k \sum_{a \in \mathcal{A}} \sigma^\varepsilon(a)\, I_k(a) \\
&\overset{(a)}{=} \overline{x}_k \sum_{a \in \mathcal{A}} \left[ \left( 1 - \frac{\varepsilon}{\gamma} \right) \sigma^*(a) + \frac{\varepsilon}{\gamma} \sigma^\gamma(a) \right] I_k(a) \\
&=  \left( 1 - \frac{\varepsilon}{\gamma} \right) \overline{x}_k \E\mleft[I_k(A_t^*) \mright] + \frac{\varepsilon}{\gamma} \overline{x}_k \E\mleft[I_k(A_t^\gamma) \mright] \\
&\overset{(b)}{\geq}   \left( 1 - \frac{\varepsilon}{\gamma} \right) \chi_k + \frac{\varepsilon}{\gamma} (\chi_k + \gamma) = \chi_k + \varepsilon \\
\end{aligned}
\end{equation}
where $(a)$ is by the definition \eqref{eq:epsilon_tight_static_policy} of the $\varepsilon$-tight static policy,  $(b)$ is because $\overline{x}_k \E\mleft[I_k(A_t^*) \mright] = \E\mleft[ R_k(S_t, A_t^*) \mright] \geq \chi_k$ according to Lemma~\ref{lemma:existence_of_static_policy} and because $\overline{x}_k\E\mleft[I_k(A_t^\gamma) \mright] \geq \chi_k + \gamma$ according to \eqref{eq:gamma_tight_static_policy}. 
}
\final{
We use the above to bound the expected value of $\mathtt{t1}$ from Lemma~\ref{lemma:elementary_drift_bound} as 
\begin{equation}
\begin{aligned}
\E\mleft[ \mathtt{t1} \mright] &= \sum_{k=1}^K \frac{(\chi_k + \varepsilon)}{\overline{x}_k}\E\mleft[  Z_{k,t}^\to \mright] \\
&\leq \sum_{k=1}^K \E\mleft[I_k(A_t^\varepsilon) \mright] \E\mleft[  Z_{k,t}^\to \mright] = \sum_{k=1}^K \E\mleft[I_k(A_t^\varepsilon)   Z_{k,t}^\to \mright]
\end{aligned}
\end{equation}
where the last step is because $A_t^\varepsilon$ and $Z_{k,t}^\to$ are independent.
}

Recall from Lemma~\ref{lemma:elementary_drift_bound} that $\Delta L_t \leq \mathtt{t1} -  \mathtt{t2} +  \mathtt{t3}$. Combining the above bound on $\E\mleft[ \mathtt{t1} \mright]$ with the previously established bounds on $\E\mleft[ \mathtt{t2} \mright]$ (take expectations on \eqref{eq:t2_conditional_expectation_lower_bound}) and $\E\mleft[ \mathtt{t3} \mright]$ (take expectations on \eqref{eq:t3_conditional_expectation_upper_bound}) gives
\final{
\begin{equation}
\begin{aligned}
\E\mleft[\Delta L_t \mright]
&\leq  \E\mleft[\sum_{k=1}^K I_k(A_t^\varepsilon)  Z_{k,t}^\to - \sum_{k=1}^K I_k(A_t) Z_{k,t}^\to  \mright] + \sum_{k=1}^K \frac{1}{\overline{x}_k\chi_k}.  \\
\end{aligned}
\end{equation}
}
% \begin{equation}
% \begin{aligned}
% \E\mleft[\Delta L_t \mright]
% &\leq \sum_{k=1}^K \E\mleft[  \widehat{x}_{c}(S_t, A_t^\varepsilon)   Z_{k,t}^\to \mright] -   \sum_{k=1}^K \E\mleft[ \widehat{x}_{c}(S_t, A_t)   Z_{k,t}^\to \mright]  \\
% &\phantom{\,=\,}+ \widehat{x}_\mathrm{err}\sum_{k=1}^K \E\mleft[ Z_{k,t}^\to \mright] + \sum_{k=1}^K \frac{1}{\chi_k}.
% \end{aligned}
% \end{equation}
According to the definition of the algorithm \eqref{eq:age_based_algorithm_def}, 
\final{
the quantity $\E\mleft[ \sum_{k=1}^K \left( \eta U_{k,t} + Z_{k,t}^\to\right)I_k(A_t) \mright] -  \E\mleft[\sum_{k=1}^K \left( \eta U_{k,t} + Z_{k,t}^\to\right)I_k(A_t^\varepsilon) \mright]$ is nonnegative. Then adding this quantity to the right-hand side of the previous inequality gives
\begin{equation}
\begin{aligned}
\E\mleft[\Delta L_t \mright]
&\leq \eta\sum_{k=1}^K \E\mleft[ I_k(A_t)  U_{k,t} -  \sum_{k=1}^K   I_k(A_t^\varepsilon) U_{k,t} \mright] +  \sum_{k=1}^K \frac{1}{\overline{x}_k\chi_k}. \\
\end{aligned}
\end{equation}
}
% adding $\displaystyle{\E\mleft[\eta \sum_{k=1}^K I_k(S_t, A_t)\, U_{k,t} + \sum_{k=1}^K  \widehat{x}_c(S_t, A_t)\, Z_{k,t}^\to \mright]}$ and subtracting ${\displaystyle\E\mleft[\eta \sum_{k=1}^K I_k(S_t, A_t^\varepsilon)\, U_{k,t} + \sum_{k=1}^K  \widehat{x}_c(S_t, A_t^\varepsilon)\, Z_{k,t}^\to \mright]}$ gives 
% \begin{equation}
% \begin{aligned}
% \E\mleft[\Delta L_t \mright]
% &\leq \eta \sum_{k=1}^K \E\mleft[ I_k(S_t, A_t)\, U_{k,t} - I_k(S_t, A_t^\varepsilon)\, U_{k,t} \mright]  \\
% &\phantom{\,=\,}+ \widehat{x}_\mathrm{err}\sum_{k=1}^K \E\mleft[ Z_{k,t}^\to \mright] + \sum_{k=1}^K \frac{1}{\chi_k}.
% \end{aligned}
% \end{equation}
Diving by $\eta$, summing over $t \in [T]$, adding $\varepsilon\text{-}\mathrm{Reg}_T$ to both sides and noticing that $\sum_{t=1}^T\E\mleft[\Delta L_t \mright] = \E\mleft[ L_{T+1}  \mright] \geq 0$ gives 
\final{
\begin{equation}
\begin{aligned}
\varepsilon\text{-}\mathrm{Reg}_T 
&\leq \sum_{t=1}^T\sum_{k=1}^K \E\mleft[ R_{k}(S_t, A_t^\varepsilon) - I_k(A_t^\varepsilon) U_{k,t} \mright] \\
&\phantom{\,=\,}+ \sum_{t=1}^T\sum_{k=1}^K \E\mleft[ I_k(A_t) U_{k,t} - R_{k}(S_t, A_t)\mright] \\    
&\phantom{\,=\,}+  \frac{T}{\eta}\sum_{k=1}^K \frac{1}{\overline{x}_k\chi_k}.
\end{aligned}
\end{equation}
}
% \begin{equation}
% \begin{aligned}
% \varepsilon\text{-}\mathrm{Reg}_T 
% &\leq \sum_{t=1}^T\sum_{k=1}^K \E\mleft[ R_{k,t}(S_t, A_t^\varepsilon) - I_k(S_t, A_t^\varepsilon)\, U_{k,t} \mright] \\
% &\phantom{\,=\,}+ \sum_{t=1}^T\sum_{k=1}^K \E\mleft[ I_k(S_t, A_t)\, U_{k,t} - R_{k,t}(S_t, A_t)\mright] \\    
% &\phantom{\,=\,}+ \frac{\widehat{x}_\mathrm{err}}{\eta}\sum_{t=1}^T\sum_{k=1}^K \E\mleft[ Z_{k,t}^\to \mright] + \frac{T}{\eta}\sum_{k=1}^K \frac{1}{\chi_k }.
% \end{aligned}
% \end{equation}
% Finally, $\sum_{k=1}^K \E\mleft[ Z_{k,t}^\to \mright] \leq \frac{1}{\sqrt{\chi_{\min}}}\sum_{k=1}^K \sqrt{\chi_k + \varepsilon}\E\mleft[ Z_{k,t}^\to \mright] \leq \frac{\sqrt{C}}{\sqrt{\chi_{\min}}} \E\mleft[ \widetilde{L}_t \mright]$ by the Cauchy-Schwartz inequality. The remaining terms involving the UCB estimates are bounded using standard UCB analysis which is omitted due to space limitations, but result in $O(\sqrt{K I_{\max} T\log T})$ contribution.
\final{
It remains to bound the first two terms involving the UCB estimates. Fix the timeslot $t$ and arm $k$. We have
\begin{equation}
\label{eq:first_ucb_regret_term}
\begin{aligned}
&\E\mleft[ R_{k}(S_t, A_t^\varepsilon) - I_k(A_t^\varepsilon) U_{k,t} \mright]
\overset{(a)}{=} \E\mleft[I_k(A_t^\varepsilon) ( \overline{x}_k - U_{k,t}) \mright] \\
&\leq \E\mleft[I_k(A_t^\varepsilon)\1\{ \overline{x}_k > U_{k,t} \} \mright] 
\leq \prob(\overline{x}_k > U_{k,t}) \\
&\overset{(b)}{=} \prob\mleft(\overline{x}_k > \widetilde{x}_{k,t}  + \sqrt{\frac{3\log t}{2 N_{k,t}}}, N_{k,t} > 0 \mright)
\end{aligned}
\end{equation}
where $(a)$ is because $S_t$ is independent from $A_t^\varepsilon$ and $(b)$ is because $U_{k,t} = 1$ when $N_{k,t} = 0$ and we can never have $\overline{x}_k > 1$. We also have
\begin{equation}
\begin{aligned}
&\E\mleft[ I_k(A_t) U_{k,t} - R_{k}(S_t, A_t)\mright] \overset{(a)}{=} \E\mleft[ I_k(A_t) (U_{k,t} - \overline{x}_k)\mright] \\
&\leq \E\mleft[ I_k(A_t) (U_{k,t} - \overline{x}_k) \1\{ N_{k,t} > 0 \}\mright] + \E\mleft[ I_k(A_t) \1\{ N_{k,t} = 0 \}\mright] \\
&\leq \E\mleft[ I_k(A_t) (\widetilde{x}_{k,t} - \overline{x}_k) \1\{ N_{k,t} > 0 \}\mright] + \E\mleft[ I_k(A_t) \1\{ N_{k,t} = 0 \}\mright] \\
&\phantom{\,=\,}+  \E\mleft[ I_k(A_t) \sqrt{\frac{3 \log t}{2 N_{k,t}}} \1\{ N_{k,t} > 0 \}\mright].
\end{aligned}
\end{equation}
where $(a)$ is because $A_t$ and $S_t$ are independent. Define the ``clean'' event as 
\begin{equation}
\mathcal{E}_{k,t} \triangleq \left\{ N_{k,t} > 0, \widetilde{x}_{k,t} - \overline{x}_k \leq \sqrt{\frac{3\log t}{2 N_{k,t}}} \right\}.
\end{equation}
We use this event to bound the $(\widetilde{x}_{k,t} - \overline{x}_k) \1\{ N_{k,t} > 0 \}$ part inside the first expectation in the previous bound as 
\begin{equation}
\begin{aligned}
&(\widetilde{x}_{k,t} - \overline{x}_k) \1\{ N_{k,t} > 0 \} \\
&= \1 \mathcal{E}_{k,t} \sqrt{\frac{3\log t}{2 N_{k,t}}} + (\widetilde{x}_{k,t} - \overline{x}_k) \1\mleft( \mathcal{E}_{k,t}^\mathtt{C} \cap \{ N_{k,t} > 0 \} \mright) \\
&\leq \1\{ N_{k,t} > 0 \} \sqrt{\frac{3\log t}{2 N_{k,t}}} + \1\mleft( \mathcal{E}_{k,t}^\mathtt{C} \cap \{ N_{k,t} > 0 \} \mright).
\end{aligned}
\end{equation}
Using this inequality to further bound the previous inequality gives
\begin{equation}
\label{eq:second_ucb_regret_term}
\begin{aligned}
&\E\mleft[ I_k(A_t) U_{k,t} - R_{k}(S_t, A_t)\mright] \\
&\leq \prob\mleft( N_{k,t} > 0, \widetilde{x}_{k,t} - \overline{x}_k > \sqrt{\frac{3\log t}{2 N_{k,t}}} \mright) \\
&\phantom{\,=\,}+  \E\mleft[ I_k(A_t) \1\{ N_{k,t} = 0 \}\mright] + \E\mleft[ I_k(A_t) \sqrt{\frac{6 \log t}{ N_{k,t}}} \1\{ N_{k,t} > 0 \}\mright] \\
&\leq \prob\mleft( N_{k,t} > 0, \widetilde{x}_{k,t} - \overline{x}_k > \sqrt{\frac{3\log t}{2 N_{k,t}}} \mright)  + \E\mleft[ I_k(A_t) \sqrt{\frac{6 \log t}{ N_{k,t} \lor 1 }} \mright]
\end{aligned}
\end{equation}
Adding \eqref{eq:first_ucb_regret_term} and \eqref{eq:second_ucb_regret_term} and summing over rounds and arms gives
\begin{equation}
\begin{aligned}
&\sum_{k=1}^K \sum_{t=1}^T \left(\begin{aligned}&\E\mleft[ R_{k}(S_t, A_t^\varepsilon) - I_k(A_t^\varepsilon) U_{k,t} \mright] \\
&+ \E\mleft[ I_k(A_t) U_{k,t} - R_{k}(S_t, A_t)\mright] \end{aligned}\right)  \\
&\leq  \sum_{k=1}^K \sum_{t=1}^T \prob\mleft( N_{k,t} > 0, |\widetilde{x}_{k,t} - \overline{x}_k| > \sqrt{\frac{3\log t}{2 N_{k,t}}} \mright) \\
&\phantom{\,=\,}+  \sum_{k=1}^K \sum_{t=1}^T \E\mleft[ I_k(A_t) \sqrt{\frac{6 \log t}{ N_{k,t} \lor 1 }} \mright].
\end{aligned}
\end{equation}
The first term is bounded by
\begin{equation}
\begin{aligned}
&\sum_{k=1}^K \sum_{t=1}^T \prob\mleft( N_{k,t} > 0, |\widetilde{x}_{k,t} - \overline{x}_k| > \sqrt{\frac{3\log t}{2 N_{k,t}}} \mright) \\
&\overset{(a)}{\leq} \sum_{k=1}^K \sum_{t=1}^T \sum_{n=1}^t \frac{2}{t^3} = \sum_{k=1}^K \sum_{t=1}^T \frac{2}{t^2} \leq \sum_{k=1}^K \frac{\pi^2}{3} = \frac{K\pi^2}{3}  \\
\end{aligned}
\end{equation}
where $(a)$ takes the union bound over the events ${\{N_{n,k} = n\}}$
and bounds each subsequent probability using Hoeffding's inequality. The second term is bounded by 
\begin{equation}
\begin{aligned}
&\sum_{k=1}^K \sum_{t=1}^T \E\mleft[ I_k(A_t) \sqrt{\frac{6 \log t}{ N_{k,t} \lor 1 }} \mright] \\
&\leq \sqrt{6\log T} \sum_{k=1}^K  \E\mleft[ \sum_{t=1}^T I_k(A_t) \sqrt{\frac{1}{ N_{k,t} \lor 1 }} \mright] \\
&\overset{(a)}{\leq} \sqrt{6\log T} \sum_{k=1}^K  \E\mleft[ \1\{ N_{k,T+1} > 0 \} + \sum_{n=1}^{N_{k,T+1}} \sqrt{\frac{1}{ n }} \mright] \\
&\leq \sqrt{6\log T} \sum_{k=1}^K  \E\mleft[ \1\{ N_{k,T+1} > 0 \} + 2\sqrt{N_{k,T+1}} \mright] \\
&\leq \sqrt{56\log T} \E\mleft[ \sum_{k=1}^K \sqrt{N_{k,T+1}} \mright] \\
&\overset{(b)}{\leq}  \sqrt{56\log T} \E\mleft[ \sqrt{K \sum_{k=1}^K N_{k,T+1}} \mright] \\
&\leq \sqrt{56\log T} \sqrt{K I_{\max}T} \\
\end{aligned}
\end{equation}
where $(a)$ is because $N_{k,t+1} = N_{k,t} + 1$ when $I_k(A_t) = 1$, i.e., it increments by 1 every time arm $k$ is played and $N_{k,t} = 0$ the first time it is played, and $(b)$ is by the Cauchy-Schwartz inequality. 
}
\end{proof}

\end{document}